\newcommand{\cI}{\mathcal{I}}
\newcommand{\bx}{\mathbf{x}}
\newcommand{\bz}{\mathbf{z}}
\newcommand{\bs}{\mathbf{s}}
\newcommand{\br}{\mathbf{r}}
\newcommand{\bu}{\mathbf{u}}
\newcommand{\bq}{\mathbf{q}}
\newcommand{\bv}{\mathbf{v}}
\newcommand{\ba}{\mathbf{a}}
\newcommand{\bE}{\mathbf{E}}
\newcommand{\bg}{\mathbf{g}}
\newcommand{\bh}{\mathbf{h}}
\newcommand{\bM}{\mathbf{M}}
\newcommand{\bP}{\mathbf{P}}
\newcommand{\bB}{\mathbf{B}}
\newcommand{\bC}{\mathbf{C}}
\newcommand{\cH}{\mathcal{H}}
\newcommand{\cP}{\mathcal{P}}
\newcommand{\cG}{\mathcal{G}}
\newcommand{\cL}{\mathcal{L}}
\newcommand{\bI}{\mathbf{I}}
\newcommand{\btheta}{\bm{\theta}}
\newcommand{\blambda}{\boldsymbol{\lambda}}
\newcommand{\bone}{\mathbf{1}}
\newcommand{\bZero}{\mathbf{0}}
\newcommand{\bA}{\mathbf{A}}
\newcommand{\bG}{\mathbf{G}}
\newcommand{\bb}{\mathbf{b}}
\newcommand{\bc}{\mathbf{c}}
\newcommand{\bbR}{\mathbb{R}}
\newcommand{\bbI}{\mathbb{I}}
\newcommand{\ep}{EP}
\newcommand{\admm}{AM}
\newcommand{\qed}{\nobreak \ifvmode \relax \else
	\ifdim\lastskip<1.5em \hskip-\lastskip
	\hskip1.5em plus0em minus0.5em \fi \nobreak
	\vrule height0.75em width0.5em depth0.25em\fi}
\newtheorem{theorem}{Theorem}
\newenvironment{proof}[1][Proof]{\begin{trivlist}
		\item[\hskip \labelsep {\it #1}]}{\end{trivlist}}
\DeclareMathOperator*{\argmin}{arg\,min}
\newcommand{\stackedVector}[2]
{ 	\begin{bmatrix}
		#1 \\ #2	
	\end{bmatrix}
}
\newcommand{\uiprime}{u_i^\prime}
\newcommand{\siprime}{s_i^\prime}
\newcommand{\itt}[1]{{#1}^{(t)}}
\begin{document}
%
\title{Deterministic Approximate Methods for\\Maximum Consensus Robust Fitting}

\author{Huu Le,
  Tat-Jun Chin, 
  Anders Eriksson, 
  Thanh-Toan Do,
  and~David Suter
  \IEEEcompsocitemizethanks{\IEEEcompsocthanksitem H. Le, T.-J. Chin, and D.Suter are with the school of Computer Science, The University of Adelaide	\protect\\
   E-mail: \{huu.le,tat-jun.chin, david.suter\}@adelaide.edu.au \protect\\   
    A. Eriksson is with the school of Computer Science and Electrical Engineering, Queensland University of Technology	\protect\\
    E-mail: anders.eriksson@qut.edu.au 
  }
  \thanks{Manuscript received }
}

\IEEEtitleabstractindextext{%
\begin{abstract}
Maximum consensus estimation plays a critically important role in robust fitting problems in computer vision. Currently, the most prevalent algorithms for consensus maximization draw from the class of randomized hypothesize-and-verify algorithms, which are cheap but can usually deliver only rough approximate solutions. On the other extreme, there are exact algorithms which are exhaustive search in nature and can be costly for practical-sized inputs. This paper fills the gap between the two extremes by proposing \emph{deterministic} algorithms to \emph{approximately} optimize the maximum consensus criterion. Our work begins by reformulating consensus maximization with linear complementarity constraints. Then, we develop two novel algorithms: one based on non-smooth penalty method with a Frank-Wolfe style optimization scheme, the other based on the Alternating Direction Method of Multipliers (ADMM). Both algorithms solve convex subproblems to efficiently perform the optimization. We demonstrate the capability of our algorithms to greatly improve a rough initial estimate, such as those obtained using least squares or a randomized algorithm. Compared to the exact algorithms, our approach is much more practical on realistic input sizes. Further, our approach is naturally applicable to estimation problems with geometric residuals. \emph{Matlab code and demo program for our methods can be downloaded from https://goo.gl/FQcxpi.}
\end{abstract}

\begin{IEEEkeywords}
Maximum consensus, robust fitting, deterministic algorithm, approximate algorithm.
\end{IEEEkeywords}}

\maketitle

\IEEEdisplaynontitleabstractindextext

%
\IEEEpeerreviewmaketitle

\IEEEraisesectionheading{\section{Introduction}\label{sec:introduction}}

%
%
%
%
\IEEEPARstart{R}{obust} model fitting lies at the core of computer vision, due to the need of many fundamental tasks to deal with real-life data that are noisy and contaminated with outliers. To conduct robust model fitting, a robust fitting criterion is optimized w.r.t.~a set of input measurements. Arguably the most popular robust criterion is \emph{maximum consensus}, which aims to find the model that is consistent with the largest number of inliers, i.e., has the highest consensus.

Due to the critical importance of maximum consensus estimation, considerable effort has been put into devising algorithms for optimizing the criterion. A large amount of work occurred within the framework of hypothesize-and-verify methods, i.e., RANSAC~\cite{fischler1981random} and its variants. Broadly speaking, these methods operate by fitting the model onto randomly sampled minimal subsets of the data, and returning the candidate with the largest inlier set. Improvements to the basic algorithm include guided sampling and speeding up the model verification step~\cite{choi09}.

An important innovation is locally optimized RANSAC (LO-RANSAC)~\cite{chum2003locally,lebeda2012fixing}. As the name suggests, the objective of the method is to locally optimize RANSAC estimates. This is achieved by embedding in RANSAC an inner hypothesize-and-verify routine, which is triggered whenever the solution is updated in the outer loop. Different from the main RANSAC algorithm, the inner subroutine generates hypotheses from larger-than-minimal subsets sampled from the inlier set of the incumbent solution, in the hope of driving it towards an improved result.

Though efficient, there are fundamental shortcomings in the hypothesize-and-verify heuristic. Primarily, its randomized nature does not provide an absolute certainty whether the obtained result is a satisfactory approximation. Moreover, when data is contaminated with a high proportion of outliers, such randomized methods tend to be computationally expensive, because the probability of randomly picking a clean minimal subset decreases exponentially with the number of outliers. In LO-RANSAC, this weakness also manifests in the inner RANSAC routine, in that it is essentially a randomized trial-and-error procedure instead of a directed search to improve the estimate.



More recently, there is a growing number of globally optimal algorithms for consensus maximization~\cite{olsson07,zheng09,enqvist12,li14,chin15}. The fundamental intractability of maximum consensus estimation, however, means that the global optimum can only be found by searching. The previous techniques respectively conduct branch-and-bound search~\cite{zheng09,li14}, tree search~\cite{chin15}, or enumeration~\cite{olsson07,enqvist12}. Thus, global algorithms are practical only on problems with a small number of measurements and/or models of low dimensionality.

So far, what is sorely missing in the literature is an algorithm that lies in the middle ground between the above two extremes. Specifically, a maximum consensus algorithm that is \emph{approximate} and \emph{deterministic}, would add significantly to the robust fitting toolbox of computer vision.

In this paper, we contribute two such algorithms. Our starting point is to reformulate consensus maximization with linear complementarity constraints. We then develop an algorithm based on non-smooth penalty method supported by a Frank-Wolfe-style optimization scheme, and another algorithm based on the ADMM. In both algorithms, the calculation of the update step involves executing convex subproblems, which are efficient and enable the algorithms to handle realistic input sizes (hundreds to thousands of measurements). Further, our algorithms are naturally capable of handling the non-linear geometric residuals commonly used in computer vision~\cite{kahl05,ke05}.

As will be demonstrated experimentally, our algorithms can significantly improve rough estimates obtained using an initial method, such as least squares or a fast randomized scheme such as RANSAC. Qualitative improvements achieved by our algorithms are also greater than that of LO-RANSAC, while incurring only marginally higher runtimes.

\subsection{Deterministic robust fitting }\label{sec:irls}
M-estimators~\cite{huber81} are an established class of robust statistical methods. The M-estimate is obtained by minimizing the sum of a set of $\rho$ functions defined over the residuals, where $\rho$ (e.g., the Huber norm) is responsible for discounting the effects of outliers. The primary technique for the minimization is iteratively reweighted least squares (IRLS). At each iteration, a weighted least squares problem is solved, where the weights are computed based on the previous estimate. Provided that $\rho$ satisfies certain properties~\cite{zhang97,aftab15}, IRLS will deterministically reduce the cost until a local minimum is reached. This however precludes consensus maximization, since the corresponding $\rho$ (a symmetric step function) is not positive definite and differentiable. Sec.~\ref{sec:char} will further explore the characteristics of the maximum consensus objective.

Arguably, one can simply choose a robust $\rho$ that works with IRLS and dispense with maximum consensus. However, another vital  requirement for IRLS to be feasible is that the weighted least squares problem is efficiently solvable. This unfortunately is not the case for many of the geometric distances used in computer vision~\cite{kahl05,ke05,hartley-accv07}.

The above limitations with IRLS suggest that deterministic approximate methods for robust fitting remain an open problem, and our proposed algorithms should represent significant contributions towards this direction.

\subsection{Road map}

The paper is structured as follows:
\begin{itemize}
\item Sec.~\ref{sec:problem_definition} defines the maximum consensus problem and characterizes the solution. It then describes the crucial reformulation with complementarity constraints.
\item Sec.~\ref{sec:penalty_method} describes the non-smooth penalty method.
\item Sec.~\ref{sec:admm} describes the ADMM-based algorithm.
\item Sec.~\ref{sec:quasiconvex} shows the applicability of our methods to estimation problems with quasiconvex geometric residuals.
\item Sec. \ref{sec:results} demonstrates the effectiveness of our methods through a set of experiments with synthetic and real data on common computer vision applications.
\end{itemize}
This paper is an extension of the conference version~\cite{le2017exact}, which proposed only the method based on non-smooth penalization. Sec.~\ref{sec:results} of the present paper experimentally compares the new ADMM technique with the penalty method.

\section{Problem definition}\label{sec:problem_definition}

We develop our algorithms in the context of fitting linear models, before extending to models with geometric residuals in Sec.~\ref{sec:quasiconvex}. Given a set of $N$ measurements $\{\bx_j, y_j\}_{j=1}^N$ for the linear model parametrized by vector $\btheta \in \bbR^d$, the goal of maximum consensus is to find the $\btheta$ that is consistent with as many of the input data as possible, i.e.,
\begin{align}\label{maxcon}
\begin{aligned}
\max_{\btheta \in \mathbb{R}^d}  && \Psi(\btheta)
\end{aligned}
\end{align}
where the objective function
\begin{align}\label{consensus}
\Psi(\btheta) = \sum_{j=1}^N \mathbb{I}\left(|\bx_j^T\btheta - y_j| \le \epsilon\right)
\end{align}
is the \emph{consensus} of $\btheta$. Here, $\mathbb{I}$ is the indicator function, which returns $1$ if its input predicate is true, and $0$ otherwise. The inlier-outlier dichotomy is achieved by comparing a residual $|\bx_j^T\btheta - y_j|$ with the pre-defined threshold $\epsilon$.

Expressing each inequality of the form $|\bx_j^T\btheta - y_j| \le \epsilon$ equivalently using the two linear constraints
\begin{align}\label{symmetric}
\bx_j^T\btheta - y_j \le \epsilon, \;\;\;\; -\bx_j^T\btheta + y_j \le \epsilon,
\end{align}
and collecting the data into the matrices
\begin{align}\label{data}
\begin{aligned}
\bA &= \left[ \begin{matrix} \bx_1, -\bx_1, \dots, \bx_N, -\bx_N \end{matrix}  \right], \\
\bb &= \left[ \begin{matrix} \epsilon+y_1, \epsilon-y_1, \dots, \epsilon+y_N, \epsilon-y_N \end{matrix}  \right]^T,
\end{aligned}
\end{align}
where $\bA \in \mathbb{R}^{d \times M}$, $\bb \in \mathbb{R}^{M}$ and $M = 2N$, we can redefine consensus as
\begin{align}\label{consensus_maxfs}
\Psi(\btheta) = \sum_{i=1}^M \mathbb{I}\left(\ba_i^T\btheta - b_i \le 0\right),
\end{align}
where $\ba_i$ is the $i$-th column of $\bA$ and $b_i$ is the $i$-th element of $\bb$. Plugging~\eqref{consensus_maxfs} instead of~\eqref{consensus} into~\eqref{maxcon} yields an equivalent optimization problem, in the sense that both objective functions have the same maximizers.

Henceforth, we will be developing our maximum consensus algorithm based on~\eqref{consensus_maxfs} as the definition of consensus.

\subsection{Characterizing the solution}\label{sec:char}

What does $\Psi$ look like? Consider the problem of robustly fitting a line onto a set of points $\{p_j,q_j\}^{N}_{j=1}$ on the plane. To apply formulation~\eqref{maxcon}, set $\bx_j = \left[\begin{matrix} p_j & 1 \end{matrix}\right]^T$ and $y_j = q_j$. The vector $\btheta \in \mathbb{R}^2$ then corresponds to the slope and intercept of the line. Fig.~\ref{objective} plots $\Psi(\btheta)$ for a sample point set $\{p_j,q_j\}^{N}_{j=1}$. As can be readily appreciated, $\Psi$ is a piece-wise constant step function, owing to the thresholding and discrete counting operations in the calculation of consensus.

\begin{figure}[ht]\centering
\subfigure[]{\includegraphics[width=0.8\columnwidth]{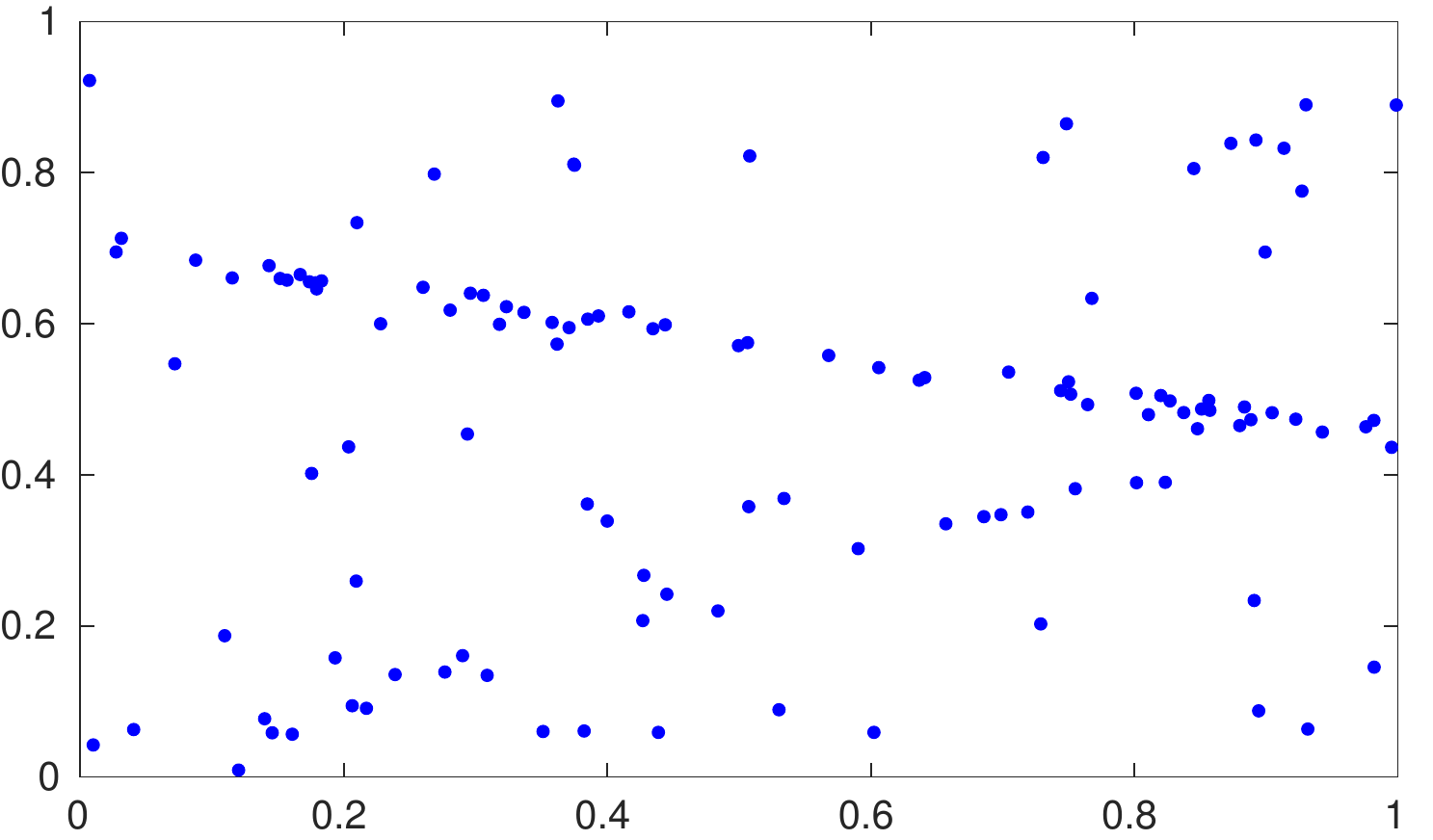}}
\subfigure[]{\includegraphics[width=0.8\columnwidth]{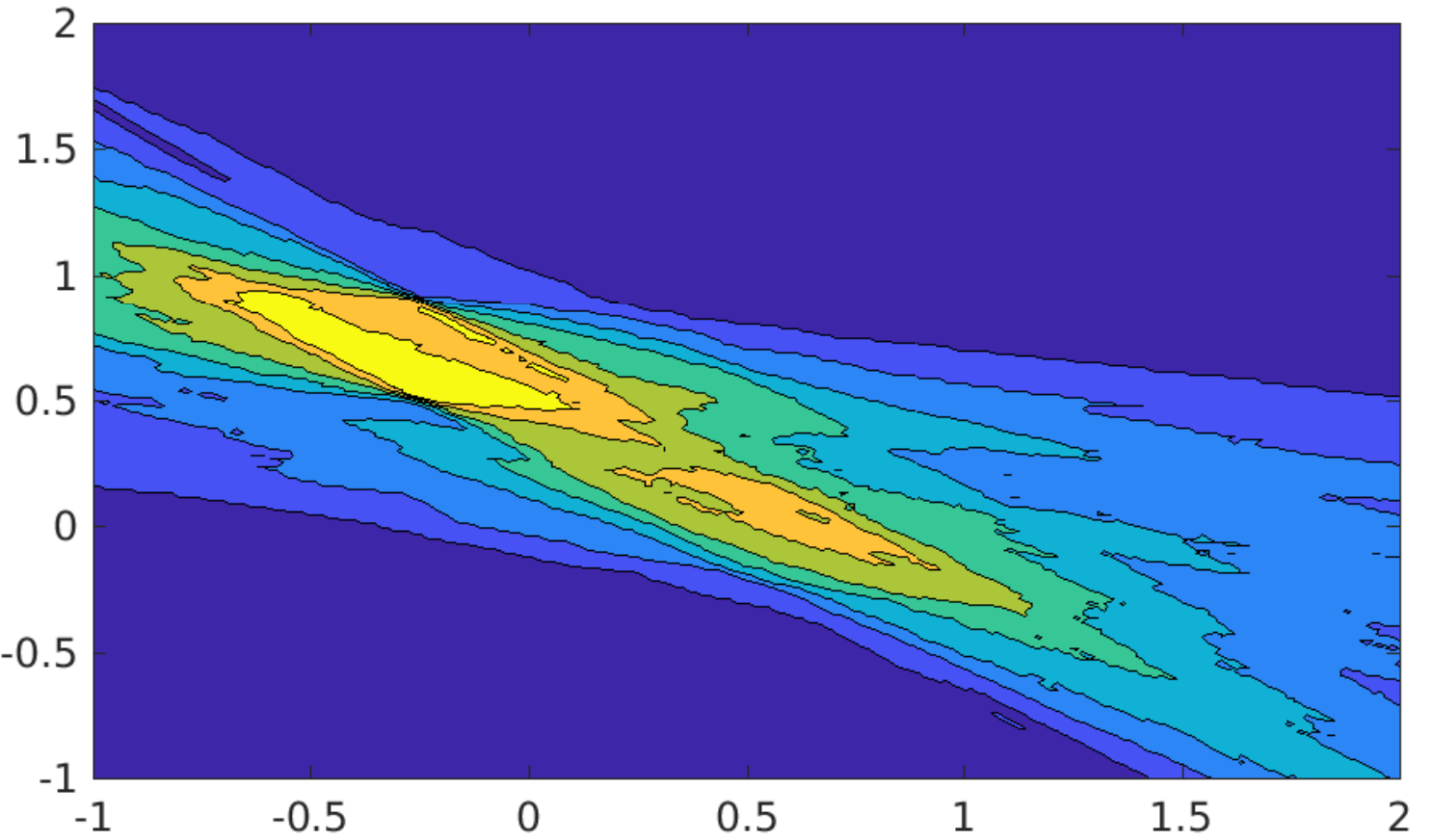}}
\caption{(a) Sample point set $\{p_j,q_j\}^{N}_{j=1}$. (b) A plot of $\Psi(\btheta)$ in $\mathbb{R}^2$ based on the sample point set. Each unique color represents a specific consensus value. Regions corresponding to the maximum consensus value are indicated in yellow.}
\label{objective}
\end{figure}

We define the \emph{global} or \emph{exact solution} to~\eqref{maxcon} as the vector $\btheta^\ast$ such that $\Psi(\btheta^\ast) \ge \Psi(\btheta)$ for all $\btheta \in \mathbb{R}^d$. In general, $\btheta^\ast$ is not unique, and can only be identified by searching. Recall that a \emph{local solution} of an unconstrained optimization problem
\begin{align}\label{uncons}
\begin{aligned}
\max_{\btheta \in \mathbb{R}^d}  && f(\btheta)
\end{aligned}
\end{align}
is a vector $\hat{\btheta}$ such that there exists a neighborhood $\mathcal{N} \subset \mathbb{R}^d$ of $\hat{\btheta}$ where $f(\hat{\btheta}) \ge f(\btheta)$ for all $\btheta \in \mathcal{N}$~\cite[Chap.~2]{nocedal}. By this definition, since $\Psi$ is piece-wise constant, all $\btheta \in \mathbb{R}^d$ are local solutions to~\eqref{maxcon}. The concept of \emph{local optimality} is thus not meaningful in the context of consensus maximization. Indeed, the lack of gradient information in $\Psi$ complicates the usage of standard nonlinear optimization schemes, which strive for local optimality, on problem~\eqref{maxcon} (cf.~IRLS).

Unlike nonlinear optmization methods or IRLS, the proposed algorithms do not depend on the existence of gradients; instead, our algorithms solve derived convex subproblems to deterministically and efficiently update an approximate solution to the maximum consensus problem. As mentioned in the introduction, such techniques have not been considered previously in the literature.

\subsection{Reformulation with complementarity constraints}\label{def_complem}

Introducing indicator variables $\bu \in \{0,1\}^M$ and slack variables $\bs \in \mathbb{R}^M$, we first reformulate~\eqref{maxcon} equivalently as an outlier count minimization problem
\begin{subequations}\label{minout}
	\begin{align}
	& \min_{\bu \in \{0,1\}^M, \; \bs \in \bbR^M, \; \btheta \in \bbR^d} &&\sum_{i}u_i \\
	& \text{subject to} && s_i - \ba_i^T\btheta + b_i \ge 0,\\
	& && u_i(s_i - \ba_i^T\btheta + b_i ) = 0, \label{complem1} \\
	& && s_i(1-u_i) = 0, \label{complem2} \\ 
	& && s_i \ge 0.
	\end{align}
\end{subequations}
Intuitively, $s_i$ must be non-zero if the $i$-th datum is an outlier w.r.t.~$\btheta$; in this case, $u_i$ must be set to $1$ to satisfy~\eqref{complem2}. In turn,~\eqref{complem1} forces the quantity $(s_i - \ba_i^T\btheta + b_i)$ to be zero. Conversely, if the $i$-th datum is an inlier w.r.t.~$\btheta$, then $s_i$ is zero, $u_i$ is zero and $(s_i - \ba_i^T\btheta + b_i)$ is non-zero. Observe, therefore, that~\eqref{complem1} and~\eqref{complem2} implement \emph{complementarity} between $u_i$ and $(s_i - \ba_i^T\btheta + b_i)$.

Note also that, due to the objective function and condition~\eqref{complem2}, the indicator variables can be relaxed without impacting the optimum, leading to the equivalent problem
\begin{subequations}\label{minout2}
	\begin{align}
	& \min_{\bu, \bs \in \bbR^M, \; \btheta \in \bbR^d} &&\sum_{i}u_i \\
	& \text{subject to} && s_i - \ba_i^T\btheta + b_i  \ge 0,\\
	& && u_i(s_i - \ba_i^T\btheta + b_i) = 0, \label{complem3} \\
	& && s_i(1-u_i) = 0, \label{complem4} \\ 
	& && 1 - u_i \ge 0, \\
	& && s_i, u_i \ge 0.
	\end{align}
\end{subequations}
This, however, does not make~\eqref{minout2} tractable, since~\eqref{complem3} and~\eqref{complem4} are bilinear in the unknowns.

To re-express~\eqref{minout2} using only positive variables, define
\begin{align}
\bv = \stackedVector{\btheta + \gamma\bone}{\gamma}, \;\;\;\; \bc_i = \left[ \begin{matrix} \ba_i^T & -\ba^T_i\bone \end{matrix} \right]^T,
\end{align}
where both are real vectors of length $(d+1)$. Problem~\eqref{minout2} can then be reformulated equivalently as
\begin{align}\label{minout3}
\begin{aligned}
& \min_{\bu, \bs \in \bbR^M, \; \bv \in \bbR^{d+1}} &&\sum_{i}u_i \\
& \text{subject to} && s_i - \bc_i^T\bv + b_i  \ge 0,\\
& && u_i(s_i - \bc_i^T\bv + b_i) = 0,\\
& && s_i(1-u_i) = 0,\\ 
& && 1 - u_i \ge 0, \\
& && s_i, u_i, v_i \ge 0.
\end{aligned}
\end{align}
Given a solution $\hat{\bu}$, $\hat{\bs}$ and $\hat{\bv}$ to~\eqref{minout3}, the corresponding solution $\hat{\btheta}$ to~\eqref{minout2} can be obtained by simply subtracting the last element of $\hat{\bv}$ from its first-$d$ elements.

While the relaxation does not change the fundamental intractability of~\eqref{maxcon}, that all the variables are now continuous allows to bring a broader class of optimization techniques to bear on the problem---as we will show next.


\section{Non-smooth penalty method}\label{sec:penalty_method}

Our first deterministic refinement algorithm is based on the technique of non-smooth penalization~\cite[Sec.~17.2]{nocedal}. Incorporating the equality constraints in~\eqref{minout3} into the cost function as a penalty term, we obtain the penalty problem
\begin{equation}\label{penalized}
\begin{aligned}
& \min_{\bu, \bs \in \bbR^M, \bv \in \bbR^{d+1} } && \sum_{i}u_i + \alpha \left[ u_i(s_i - \bc^T_i\bv + b_i ) + s_i(1-u_i) \right]\\
& \text{s.t.} && s_i - \bc_i^T\bv + b_i  \ge 0,\\
& && 1 - u_i \ge 0,\\
& && s_i,u_i,v_i \ge 0.
\end{aligned}
\end{equation}
The constant $\alpha \ge 0$ is called the \emph{penalty parameter}. Intuitively, the penalty term discourages solutions that violate the complementarity constraints, and the strength of the penalization is controlled by $\alpha$. Observe also that the remaining constraints in~\eqref{penalized} define a convex domain.

Henceforth, to reduce clutter, we sometimes use
\begin{align}
\bz = \left[ \bu^T~\bs^T~\bv^T \right]^T.
\end{align}
The cost function in~\eqref{penalized} can be rewritten as
\begin{align}\label{penalized2}
P(\bz \mid \alpha) = F(\bz) + \alpha Q(\bz),
\end{align}
where $F(\bz) = \| \bu \|_1$ and
\begin{align}
Q(\bz) &= \sum_{i} u_i(s_i - \bc^T_i\bv + b_i ) + s_i(1-u_i)\\
&= \sum_i s_i - u_i(\bc_i^T \bv - b_i).
\end{align}
Note that each summand in $Q(\bz)$ is non-negative, and the penalty term can be viewed as the $\ell_1$-norm (a non-smooth function) of the \emph{complementarity residual} vector
\begin{align}
\mathbf{r}(\bz) = \left[\begin{matrix} r_1(\bz) & \dots & r_M(\bz) \end{matrix} \right]^T,
\end{align}
where
\begin{align}
r_i(\bz) = s_i - u_i(\bc_i^T \bv - b_i).
\end{align}
In Sec.~\ref{sec:main}, we will devise a consensus maximization algorithm based on solving a sequence of the penalty problem~\eqref{penalized} with increasing values of $\alpha$. Before that, in Sec.~\ref{sec:solving_penalty}, we will discuss a method to solve the penalty problem for a given (constant) $\alpha$.

\subsection{Solving the penalty problem}\label{sec:solving_penalty}

\subsubsection{Necessary optimality conditions}\label{sec:stationarity}

Although $P(\bz \mid \alpha)$ is quadratic, problem~\eqref{penalized} is non-convex. However, it can be shown that~\eqref{penalized} has a vertex solution, i.e., a solution that is an extreme point of the convex set
\begin{align}
\begin{aligned}
\cP = \{ \bz \in \mathbb{R}^{2M+d+1} \mid &s_i - \bc_i^T\bv + b_i  \ge 0,\\
&1 - u_i \ge 0,\\
&s_i,u_i,v_i \ge 0,\\
&i=1,\dots,M\}
\end{aligned}
\end{align}
To minimize clutter, rewrite
\begin{align}
\cP = \{ \bz \in \mathbb{R}^{2M+d+1} \mid \bM\bz + \bq \ge \bZero, \;\; \bz \ge \bZero\},
\end{align}
where
\begin{align}
\begin{aligned}
\bM &= \left[ \begin{matrix}
\bZero & \bI & -\bC  \\	
-\bI & \bZero & \bZero  \\	
\end{matrix} \right],\\
\bC &= \left[ \begin{matrix} \bc_1 & \bc_2 & \dots & \bc_M \end{matrix} \right]^T,\\
\bq &= \left[ \begin{matrix} \bb^T & \bone^T \end{matrix} \right]^T;
\end{aligned}
\end{align}
(we do not define the sizes of $\bI$, $\bZero$ and $\bone$, but the sizes can be worked out based on the context). To begin, observe that the minima of~\eqref{penalized} obey the KKT conditions~\cite[Chap.~12]{nocedal}
\begin{align}\label{finalKKT}
\begin{aligned}
&\bu^{T}(-\alpha\bC\bv + \alpha\bb + \bone + \bm{\lambda}^\cG) = 0, \\
&\bs^T(\alpha\bone - \bm{\lambda}^\cH) = 0, \\
&\bv^T(-\alpha\bC^T\bu + \bC^T \bm{\lambda}^\cH)= 0, \\
&(\bm{\lambda}^\cH)^T (\bs - \bC \bv + \bb) = 0, \\
&(\bm{\lambda}^\cG)^T (\bone-\bu) = 0, \\
& \bs - \bC \bv + \bb \ge \bZero, \\
& \bone - \bu \ge \bZero,\\
&\bm{\lambda}^\cH, \bm{\lambda}^\cG, \bu, \bv, \bs \ge \bZero, \\
\end{aligned}
\end{align}
where $\bm{\lambda}^\cH = [ \lambda_1^\cH \; \dots \; \lambda_M^\cH ]^T$ and
$\bm{\lambda}^\cG = [ \lambda_1^\cG \; \dots \; \lambda_M^\cG ]^T$ are the Lagrange multipliers for the first two types of constraints in~\eqref{penalized}; {see the supplementary material (Section 2) for details}.

By rearranging, the KKT conditions~\eqref{finalKKT} can be summarized by the following relations
\begin{align}\label{lcp}
\bM^\prime\bz^\prime + \bq^\prime \ge \mathbf{0}, \;\; \bz^\prime \ge \mathbf{0}, \;\; (\bz^\prime)^T(\bM^\prime\bz^\prime + \bq^\prime) = 0,
\end{align}
where
\begin{align}\label{bigM}
\begin{aligned}
\bz^\prime &= \left[ \begin{matrix} \bz^T & (\bm{\lambda}^\cH)^T & (\bm{\lambda}^\cG)^T \end{matrix} \right]^T,\\
\bM^\prime &= \left[ \begin{matrix}
\bZero & \bZero & -\alpha\bC & \bZero & \bI \\
\bZero & \bZero & \bZero & -\bI & \bZero\\
-\alpha\bC^T & \bZero & \bZero & \bC^T & \bZero\\	
\bZero & \bI & -\bC & \bZero & \bZero  \\	
-\bI & \bZero & \bZero & \bZero & \bZero  \\	
\end{matrix} \right],\\
\bq^\prime &= \left[ \begin{matrix} (\alpha\bb+\bone)^T & \alpha\bone^T & \bZero^T & \bb^T & \bone^T \end{matrix} \right]^T .
\end{aligned}
\end{align}
Finding a feasible $\bz^\prime$ for~\eqref{lcp} is an instance of a \emph{linear complementarity problem (LCP)}~\cite{mangasarian1978characterization}. Define the convex set
\begin{align}
\cP^\prime = \{ \bz^\prime  \in \mathbb{R}^{4M+d+1} \mid \bM^\prime\bz^\prime + \bq^\prime \ge \mathbf{0}, \;\; \bz^\prime \ge \mathbf{0} \}.
\end{align}
We invoke the following result from~\cite[Lemma 2]{mangasarian1978characterization}.

\begin{theorem}\label{thm:lcp}
	If the LCP defined by the constraints~\eqref{lcp} has a solution, then it has a solution at a vertex of $\cP^\prime$.
\end{theorem}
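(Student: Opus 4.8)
The plan is to exploit the standard polyhedral structure of LCP solution sets, following the line of argument in~\cite{mangasarian1978characterization}. First I would take an arbitrary solution $\bz^\prime$ of the LCP~\eqref{lcp} (one exists by hypothesis) and write $\mathbf{w} = \bM^\prime\bz^\prime + \bq^\prime \ge \bZero$. The complementarity condition $(\bz^\prime)^T\mathbf{w} = 0$, together with $\bz^\prime \ge \bZero$ and $\mathbf{w} \ge \bZero$, forces a term-by-term vanishing: for every index $i$ either $z^\prime_i = 0$ or $w_i = 0$. I would record this by partitioning the indices according to whether $z^\prime_i > 0$ or $z^\prime_i = 0$.

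Next I would define the subset $F \subseteq \cP^\prime$ obtained by promoting precisely the active inequalities at $\bz^\prime$ to equalities: fix $w_i = 0$ on the indices where $z^\prime_i > 0$, and fix $z^\prime_i = 0$ on the remaining indices. Since $F$ is cut out of $\cP^\prime$ by turning a subset of its defining inequalities into equalities, and it is nonempty (it contains $\bz^\prime$), $F$ is a nonempty face of $\cP^\prime$. The key observation is that every point of $F$ again satisfies complementarity: on each index either $w_i = 0$ or $z^\prime_i = 0$ by construction, so $(\bz^\prime)^T\mathbf{w} = 0$ holds throughout $F$. Hence every point of $F$ is itself a solution of the LCP.

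It then remains to produce a vertex inside $F$. Because $\cP^\prime$ carries the constraint $\bz^\prime \ge \bZero$, it lies in the nonnegative orthant and therefore contains no line; the subset $F$ inherits this pointedness, so $F$ is a nonempty line-free polyhedron and consequently possesses at least one extreme point $\bar{\bz}^\prime$. Finally I would invoke the elementary fact that an extreme point of a face of a polyhedron is an extreme point of the whole polyhedron; thus $\bar{\bz}^\prime$ is a vertex of $\cP^\prime$, and by the previous paragraph it solves the LCP, which is exactly the claim.

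The main obstacle is conceptual rather than computational: one must verify carefully that $F$ really is a face of $\cP^\prime$ (not merely an arbitrary intersection with hyperplanes) and that every extreme point of this face is an extreme point of $\cP^\prime$. Establishing the existence of an extreme point hinges on the line-freeness of $\cP^\prime$, which I would argue directly from the nonnegativity constraints $\bz^\prime \ge \bZero$. Notably, none of the steps require the special block structure of $\bM^\prime$ and $\bq^\prime$ in~\eqref{bigM}; the argument is purely about the geometry of the LCP feasible set, so the statement holds for any LCP whose feasible region sits in the nonnegative orthant.
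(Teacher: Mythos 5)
Your proof is correct, but it does not parallel an in-paper argument, because the paper never proves this statement itself: Theorem~\ref{thm:lcp} is imported wholesale as Lemma~2 of \cite{mangasarian1978characterization}, with no proof given. What you have written is therefore a self-contained replacement for that citation, and it is essentially the standard polyhedral proof of Mangasarian's lemma: term-by-term vanishing of $z^\prime_i w_i$ follows from $(\bz^\prime)^T\mathbf{w} = 0$ with both vectors nonnegative; the set $F$ obtained by tightening, at each index, whichever member of the complementary pair is active at the given solution is a nonempty face of $\cP^\prime$ on which complementarity holds identically; $\cP^\prime$ is pointed because it lies in the nonnegative orthant, so the nonempty face $F$ has an extreme point; and an extreme point of a face is an extreme point of the whole polyhedron. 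Each ingredient you flag as needing care is indeed a standard fact (a nonempty subset of a polyhedron cut out by turning some defining inequalities into equalities is a face; a nonempty line-free polyhedron has a vertex; faces of faces are faces), and your closing observation is exactly the right one: nothing in the argument uses the block structure of $\bM^\prime$ and $\bq^\prime$ in~\eqref{bigM}, which is precisely why the paper can quote the result as a general LCP fact. What the citation buys the paper is brevity; what your proof buys is self-containedness, and it isolates the one hypothesis doing real work---pointedness of $\cP^\prime$, guaranteed by the constraint $\bz^\prime \ge \bZero$---without which the vertex-existence step would fail.
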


Theorem~\ref{thm:lcp} implies that the KKT points of~\eqref{penalized} (including the solutions of the problem) occur at the vertices of $\cP^\prime$. This also implies that~\eqref{penalized} has a vertex solution, viz.:

\begin{theorem}\label{thm:lcp2}
	For any vertex
	\begin{align}
	\bz^\prime_v = [\bz_v^T~~(\bm{\lambda}^\cH_v)^T~~(\bm{\lambda}^\cG_v)^T)]^T
	\end{align}
	of $\cP^\prime$, $\bz_v$ is a vertex of $\cP$.
\end{theorem}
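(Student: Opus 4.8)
The plan is to work with the standard characterization of a vertex as a \emph{basic feasible solution}: a feasible point of a polyhedron is a vertex if and only if the normals of the inequalities active there are linearly independent and span the ambient space~\cite[Chap.~2]{nocedal}, equivalently if and only if there is no nonzero direction $\mathbf{d}$ with the point $\pm\,\epsilon\mathbf{d}$ feasible for all small $\epsilon>0$. The crucial observation is that the inequalities defining $\cP$ --- namely $\bs-\bC\bv+\bb\ge\bZero$, $\bone-\bu\ge\bZero$ and $\bz\ge\bZero$ --- appear verbatim as the bottom blocks of $\bM^\prime\bz^\prime+\bq^\prime\ge\bZero$ together with the nonnegativity of the $\bz$-part of $\bz^\prime$; the remaining (top) rows of $\bM^\prime$ and the nonnegativity of $\bm{\lambda}^\cH,\bm{\lambda}^\cG$ are the only constraints that involve the multiplier variables. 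Thus every normal coming from a $\cP$-type row has zero entries in the multiplier coordinates, and, conversely, the multiplier rows carry no $\bs$-entries.

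I would argue by contraposition. Suppose $\bz_v$ were \emph{not} a vertex of $\cP$; then there is a nonzero $\mathbf{d}=[\mathbf{d}_\bu^T~\mathbf{d}_\bs^T~\mathbf{d}_\bv^T]^T$ that is a feasible direction of $\cP$ in both senses, i.e.\ $\mathbf{d}$ is orthogonal to every $\cP$-normal active at $\bz_v$. I would then lift it to $\mathbf{d}^\prime=[\mathbf{d}^T~(\bm{\delta}^\cH)^T~(\bm{\delta}^\cG)^T]^T$ and try to show $\bz^\prime_v\pm\epsilon\mathbf{d}^\prime\in\cP^\prime$, which would contradict $\bz^\prime_v$ being a vertex. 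Because the $\cP$-inherited rows do not see the multiplier coordinates, they remain satisfied for \emph{any} choice of $\bm{\delta}^\cH,\bm{\delta}^\cG$. For the two coupling blocks --- the block $[-\alpha\bC~~\bZero~~\bI]$ acting on $(\bv,\bm{\lambda}^\cH,\bm{\lambda}^\cG)$ and the block $[-\alpha\bC^T~~\bC^T]$ acting on $(\bu,\bm{\lambda}^\cH)$ --- the natural choice $\bm{\delta}^\cH=\alpha\mathbf{d}_\bu$ and $\bm{\delta}^\cG=\alpha\bC\mathbf{d}_\bv$ makes their directional derivatives vanish identically. This choice is exactly the one suggested by the linear relation $\bm{\lambda}^\cH=\alpha\bu$, which (as one checks directly) lifts any point of $\cP$ into $\cP^\prime$ and realises $\cP$ as the projection of $\cP^\prime$ onto the $\bz$-coordinates.

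After these choices, the only conditions left to verify are the multiplier box rows, i.e.\ $\bZero\le\bm{\lambda}^\cH\le\alpha\bone$ and $\bm{\lambda}^\cG\ge\bZero$: the perturbation must not push an \emph{active} multiplier bound in the forbidden direction. Concretely this reduces to showing that $d_{u,i}=0$ whenever $\lambda^\cH_{v,i}\in\{0,\alpha\}$ and $\bc_i^T\mathbf{d}_\bv=0$ whenever $\lambda^\cG_{v,i}=0$. Equivalently, in the rank formulation, one must show that any $\bz$-supported vector lying in the span of the active multiplier-row normals already lies in the span of the active $\cP$-row normals.

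Granting this containment, a short dimension count finishes the argument: since $\bz^\prime_v$ is a vertex its active normals have full rank $4M+d+1$, and because the multiplier rows carry no $\bs$-entries while the $\cP$-rows carry no multiplier entries, this forces the active $\cP$-normals at $\bz_v$ to have full rank $2M+d+1$, so $\bz_v$ is a vertex of $\cP$. I expect the containment step --- reconciling the simultaneously active box and coupling constraints, which is precisely where the special structure of the coupling matrices $-\alpha\bC$, $-\alpha\bC^T$ and of the columns $\bc_i$ must be exploited --- to be the main obstacle, since it is the place where a careless lift can fail and where the full-rank (vertex) hypothesis on $\bz^\prime_v$ genuinely has to be brought to bear.
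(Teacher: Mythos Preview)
Your proposal is a coherent outline, but you yourself flag the decisive step as unresolved: you call the ``containment step'' the ``main obstacle'' and do not prove it. This is a genuine gap, not a formality. Your chosen lift $\bm{\delta}^\cH=\alpha\mathbf{d}_\bu$, $\bm{\delta}^\cG=\alpha\bC\mathbf{d}_\bv$ is the right one for the coupling rows (blocks~1 and~3 of $\bM'$) but can fail on the multiplier box rows, because nothing in the hypotheses ties the active set of $\bm{\lambda}^\cH_v,\bm{\lambda}^\cG_v$ to the active set of $\bz_v$ in $\cP$. For instance, one may have $\lambda^\cH_{v,i}=0$ while $0<u_{v,i}<1$; then a $\cP$-feasible direction with $d_{u,i}\ne 0$ exists, yet your lift has $\delta^\cH_i=\alpha d_{u,i}\ne 0$, which is blocked by the active bound $\lambda^\cH_i\ge 0$. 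Your subsequent dimension count presupposes exactly the containment you have not established --- it is what would let you discard the $\bz$-support of the multiplier-row normals --- so the argument does not close.

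The paper sidesteps this difficulty by arguing in the opposite direction. Rather than lifting a $\cP$-direction up to $\cP'$, it passes to standard form and exploits the block structure of $\bM'$ directly: the last $2M$ rows of $\bM'$ are $[\bM~~\bZero]$, i.e.\ exactly the $\cP$-constraints with a zero block in the multiplier columns. Hence the basic-solution equations for the vertex $\bz'_v$, restricted to those rows, involve only the $\bz$-variables and their slacks, and already certify that $\bz_v$ is a basic feasible solution --- and therefore a vertex --- of $\cP$. No reconciliation of two active sets is required; the zero block does the work in one line.
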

\begin{proof}
	If $\bz_v^\prime$ is a vertex of $\cP^\prime$, then, there is a diagonal matrix $\bE$ such that
	\begin{align}
	\bM^\prime \bE \bz^\prime_v + \bq^\prime - \bm{\gamma}^\prime = \bZero,
	\end{align}
	where $\bE_{i,i} = 1$ if the $i$-th column of $\bM^\prime$ appears in the basic solution corresponding to vertex $\bz_v^\prime$, and $\bE_{i,i} = 0$ otherwise (the non-negative vector $\bm{\gamma}^\prime$ contains the values of additional slack variables to convert the constraints in $\cP^\prime$ into standard form). Let $\bM^\prime_J$ be the last-$2M$ rows of $\bM^\prime$. Then, 
	\begin{align}
	\bM^\prime_J \bE \bz^\prime_v + \left[ \begin{matrix} \bb^T & \bone^T \end{matrix} \right]^T - \bm{\gamma}^\prime_J = \bZero,
	\end{align}
	where $\bm{\gamma}^\prime_J$ is the last-$2M$ elements of $\bm{\gamma}^\prime$. Note that, since the right-most $2M \times 2M$ submatrix of $\bM^\prime_J$ is a zero matrix (see~\eqref{bigM}), then
	\begin{align}\label{basic}
	\bM^\prime_J \bE_K \bz_v + \left[ \begin{matrix} \bb^T & \bone^T \end{matrix} \right]^T - \bm{\gamma}^\prime_J = \bZero,
	\end{align}
	where $\bE_K$ is the first-$(2M+d+1)$ columns of $\bE$. Since $\bM^\prime_J \bE_K = \bM$, then~\eqref{basic} implies that $\bz_v$ is a vertex of $\cP$.\qed
\end{proof}

\subsubsection{Frank-Wolfe algorithm}

Theorem~\ref{thm:lcp2} suggests an approach to solve~\eqref{penalized} by searching for a vertex solution. Further, note that for a fixed $\bu$,~\eqref{penalized} reduces to an LP. Conversely, for fixed $\bs$ and $\bv$,~\eqref{penalized} is also an LP. This advocates alternating between optimizing subsets of the variables using LPs. Algorithm~\ref{alg:FW_Penalty} summarizes the method, which is in fact a special case of the Frank-Wolfe method~\cite{fw56} for non-convex quadratic minimization.

\begin{algorithm}[ht]\centering
	\caption{Frank-Wolfe method for~\eqref{penalized}.}
	\label{alg:FW_Penalty}                         
	\begin{algorithmic}[1]                   
		\REQUIRE Data $\{ \bc_i, b_i \}^{M}_{i=1}$, penalty value $\alpha$, initial solution $\bu^{(0)}$, $\bv^{(0)}$, $\bs^{(0)}$, threshold $\delta$.
		\STATE $P^{(0)} \leftarrow P(\bu^{(0)}, \bs^{(0)}, \bv^{(0)} \mid \alpha)$.
		\STATE $t \leftarrow 0$.
		\WHILE{true}
		\STATE $t \leftarrow t+1$.
		\STATE $\bs^{(t)}, \bv^{(t)} \leftarrow \argmin_{\bs,\bv} P(\bu^{(t-1)}, \bs, \bv \mid \alpha)$ s.t.~$\cP$.\label{line:lp1}
		\STATE $\bu^{(t)} \leftarrow \argmin_{\bu} P(\bu, \bs^{(t)}, \bv^{(t)} \mid \alpha)$ s.t.~$\cP$.\label{line:lp2}
		\STATE $P^{(t)} \leftarrow P(\bu^{(t)}, \bs^{(t)}, \bv^{(t)} \mid \alpha)$.
		\IF {$|P^{(t-1)} - P^{(t)}| \le \delta$}
		\STATE Break.
		\ENDIF			
		\ENDWHILE
		\RETURN $\bu^{(t)},\bv^{(t)}, \bs^{(t)}$.
	\end{algorithmic}
\end{algorithm}

\begin{theorem}\label{thm:vertex}
	In a finite number of steps, Algorithm~\ref{alg:FW_Penalty} converges to a KKT point of~\eqref{penalized}.
\end{theorem}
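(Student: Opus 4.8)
The plan is to regard Algorithm~\ref{alg:FW_Penalty} as block-coordinate (alternating) minimization of the bilinear objective $P(\bz \mid \alpha) = F(\bz) + \alpha Q(\bz)$ over $\cP$, and to establish three things in turn: (i) the objective values $P^{(t)}$ decrease monotonically; (ii) the iterates range over a finite set, forcing termination after finitely many steps; and (iii) the point at termination satisfies the KKT system~\eqref{finalKKT}. The geometric fact that makes everything go through is that the constraints defining $\cP$ \emph{decouple} across the two blocks: $s_i - \bc_i^T\bv + b_i \ge 0$ together with $s_i, v_i \ge 0$ restricts only $(\bs,\bv)$, while $1 - u_i \ge 0$ together with $u_i \ge 0$ restricts only $\bu$. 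Hence $\cP = \cP_{sv} \times \cP_u$ for two pointed polyhedra, $P$ is linear in $(\bs,\bv)$ for fixed $\bu$ and linear in $\bu$ for fixed $(\bs,\bv)$, so lines~\ref{line:lp1}--\ref{line:lp2} are genuine LPs. Because each summand of $Q$ is non-negative on $\cP$ and $F = \sum_i u_i \ge 0$, we have $P \ge 0$ throughout $\cP$, so each sub-LP is bounded below and attains its optimum at a vertex of the relevant polyhedron.

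For (i) and (ii): since $(\bs^{(t)}, \bv^{(t)})$ minimizes $P(\bu^{(t-1)}, \cdot, \cdot \mid \alpha)$ over a feasible set containing $(\bs^{(t-1)}, \bv^{(t-1)})$, and then $\bu^{(t)}$ minimizes $P(\cdot, \bs^{(t)}, \bv^{(t)} \mid \alpha)$ over a set containing $\bu^{(t-1)}$, we obtain the chain $P^{(t-1)} \ge P(\bu^{(t-1)}, \bs^{(t)}, \bv^{(t)} \mid \alpha) \ge P^{(t)}$, so $P^{(t)} \le P^{(t-1)}$. By the vertex attainment above, every iterate lies in the finite set $\mathrm{vert}(\cP_u) \times \mathrm{vert}(\cP_{sv})$, so $\{P^{(t)}\}$ takes values in a finite subset of $\bbR$. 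A strictly decreasing sequence drawn from a finite set is finite, hence after finitely many steps we must have $P^{(t)} = P^{(t-1)}$ and the test $|P^{(t-1)} - P^{(t)}| \le \delta$ fires (with $\delta = 0$).

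For (iii): when $P^{(t)} = P^{(t-1)}$, every inequality in the chain above is an equality. The first equality, $P(\bu^{(t-1)}, \bs^{(t)}, \bv^{(t)} \mid \alpha) = P^{(t)} = P(\bu^{(t)}, \bs^{(t)}, \bv^{(t)} \mid \alpha)$, shows that $\bu^{(t-1)}$ attains the same value as the minimizer $\bu^{(t)}$ and is therefore itself optimal for $\min_\bu P(\cdot, \bs^{(t)}, \bv^{(t)} \mid \alpha)$; by construction $(\bs^{(t)}, \bv^{(t)})$ is optimal for $\min_{\bs,\bv} P(\bu^{(t-1)}, \cdot, \cdot \mid \alpha)$. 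Thus the point $\hat\bz = (\bu^{(t-1)}, \bs^{(t)}, \bv^{(t)})$ is a coordinate-wise minimum over the two blocks. I would finish by writing the KKT conditions of the two sub-LPs separately: the $\bu$-LP (with multiplier $\bm{\lambda}^\cG$ for $\bone - \bu \ge \bZero$) yields the first and fifth lines of~\eqref{finalKKT}, while the $(\bs,\bv)$-LP (with multiplier $\bm{\lambda}^\cH$ for $\bs - \bC\bv + \bb \ge \bZero$) yields the second, third, and fourth lines; feasibility holds since $\hat\bz \in \cP$. Because $\bm{\lambda}^\cH$ occurs only in the $(\bs,\bv)$-LP and $\bm{\lambda}^\cG$ only in the $\bu$-LP, the two optimality systems share no multipliers and their union is exactly~\eqref{finalKKT}, certifying $\hat\bz$ as a KKT point of~\eqref{penalized}.

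The step I expect to be the main obstacle is reconciling the two sub-LP optimality systems into the single system~\eqref{finalKKT} at a point that is genuinely stationary for the full problem. This hinges on the constraint decoupling $\cP = \cP_{sv} \times \cP_u$, so that the multipliers can be chosen block-independently, and on identifying the correct stationary point as $\hat\bz = (\bu^{(t-1)}, \bs^{(t)}, \bv^{(t)})$ rather than the returned iterate, since objective stalling need not fix the iterate itself when sub-LP solutions are non-unique. A minor point to verify is that the possible unboundedness of $\cP_{sv}$ is harmless: attainment at a vertex still holds because $P \ge 0$ makes each LP bounded below over a pointed polyhedron.
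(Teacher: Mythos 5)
Your proof is correct, and its skeleton is the same as the paper's: decouple the constraints as a product $\cP = \cP_1 \times \cP_2$ (your $\cP_{sv} \times \cP_u$), note that each sub-LP attains its optimum at a vertex of its block polyhedron, and combine monotone descent of $P(\bz \mid \alpha)$ with vertex attainment to force termination. Where you genuinely diverge---to your advantage---is in the two places the paper is loosest. First, finiteness: the paper argues only that each LP ``must reduce or maintain'' an objective that is bounded below, which by itself does not yield termination in finitely many steps; your observation that the iterates range over the finite set of vertex pairs, so a non-increasing objective sequence must produce two consecutive equal values, is what actually closes that gap. Second, and more substantively, the KKT certification: the paper simply declares each intermediate vertex of $\cP$ to be ``a vertex of $\cP$ or a KKT point of~\eqref{penalized}'', implicitly leaning on Theorems~\ref{thm:lcp} and~\ref{thm:lcp2}; but that correspondence runs the wrong way (KKT points occur at vertices, not conversely), and an intermediate vertex at which the objective still strictly decreases is certainly not stationary. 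Your part (iii) supplies the argument the paper omits: at stalling, $\bu^{(t-1)}$ is also optimal for the $\bu$-LP at $(\bs^{(t)}, \bv^{(t)})$, so $(\bu^{(t-1)}, \bs^{(t)}, \bv^{(t)})$ is a block-wise minimum, and because the feasible set is a product and $\bm{\lambda}^\cG$, $\bm{\lambda}^\cH$ live in disjoint sub-systems, the union of the two sub-LP optimality systems is exactly~\eqref{finalKKT} (up to rescaling $\bm{\lambda}^\cH$ by $\alpha$, since Line~\ref{line:lp1} minimizes $Q$ rather than $\alpha Q$). The two caveats you flag are also handled correctly: pointedness of $\cP_{sv}$ together with $P \ge 0$ on $\cP$ legitimizes vertex attainment despite unboundedness of the polyhedron, and identifying the stationary point as $(\bu^{(t-1)}, \bs^{(t)}, \bv^{(t)})$ rather than the returned iterate is exactly the right fix when sub-LP minimizers are non-unique. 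What the paper's terser route buys is brevity and reuse of its LCP machinery; what yours buys is an actually complete proof.
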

\begin{proof}
	The set of constraints $\cP$ can be decoupled into the two disjoint subsets
	\begin{align}
	\cP = \cP_1 \times \cP_2,
	\end{align}
	where $\cP_1$ involves only $\bs$ and $\bv$, and $\cP_2$ is the complement of $\cP_1$. With $\bu$ fixed in Line~\ref{line:lp1}, the LP converges to a vertex of $\cP_1$. Similarly, with $\bs$ and $\bv$ fixed in Line~\ref{line:lp2}, the LP converges to a vertex in $\cP_2$. Each intermediate solution $\bu^{(t)},\bv^{(t)}, \bs^{(t)}$ is thus a vertex of $\cP$ or a KKT point of~\eqref{penalized}. Since each LP must reduce or maintain $P(\bz \mid \alpha)$ which is bounded below, the process terminates in finite steps.
	\qed
\end{proof}

\noindent\textbf{Analysis of update steps} A closer look reveals the LP in Line~\ref{line:lp1} (Algorithm~\ref{alg:FW_Penalty}) to be
\begin{equation}\label{lp1}\tag{LP1}
\begin{aligned}
& \min_{\bs, \bv} && \sum_{i} s_i - u_i(\bc_i^T \bv - b_i)\\
& \text{s.t.} && s_i - \bc_i^T\bv + b_i  \ge 0,\\
& && s_i, v_i \ge 0,\\
\end{aligned}
\end{equation}
and the LP in Line~\ref{line:lp2} (Algorithm~\ref{alg:FW_Penalty}) to be
\begin{equation}\label{lp2}\tag{LP2}
\begin{aligned}
& \min_{\bu} && \sum_{i} u_i\left[1 - \alpha(\bc_i^T \bv - b_i)\right]\\
& \text{s.t.} && 0 \le u_i \le 1.
\end{aligned}
\end{equation}
Observe that LP2 can be solved in closed form and it also drives $\bu$ to integrality: if $[1 - \alpha(\bc_i^T \bv - b_i)] \le 0$, set $u_i = 1$, else, set $u_i = 0$. Further, LP1 can be seen as ``weighted" $\ell_1$-norm minimization, with $\bu$ being the weights. Intuitively, therefore, Algorithm~\ref{alg:FW_Penalty} alternates between residual minimization (LP1) and inlier-outlier dichotomization (LP2).

\subsection{Main algorithm}\label{sec:main}






Intuitively, if the penalty parameter $\alpha$ is small, Algorithm~\ref{alg:FW_Penalty} will pay more attention to minimizing $\sum_i u_i$ and less attention to ensuring that the optimized variables are feasible w.r.t.~the original problem~\eqref{minout3}. Conversely, if $\alpha$ is large, the complementarity residual $Q(\bz)$ will be reduced more aggressively, thus the optimized $\bz$ tends to be ``more feasible". If $\alpha$ is sufficiently large, $Q(\bz)$ will be reduced to zero, and any movement to attempt to reduce $\sum_i u_i$ will not payoff, thus preserving the feasibility of $\bz$--- Section~\ref{sec:pen_converge} will formally establish this condition.





The above observations argue for a deterministic consensus maximization algorithm based on solving~\eqref{penalized} for progressively larger $\alpha$'s; see Algorithm~\ref{alg:Heuristics}. For each $\alpha$, our method solves~\eqref{penalized} using Algorithm~\ref{alg:FW_Penalty}. The solution $\hat{\bz}$ for a particular $\alpha$ is then used to initialize Algorithm~\ref{alg:FW_Penalty} for the next larger $\alpha$. The sequence terminates when the complementarity residual $Q(\bz)$ vanishes or becomes insignificant.


\begin{algorithm}[ht]\centering
	\caption{Non-smooth penalty method for solving~\eqref{minout3}.}
	\label{alg:Heuristics}                         
	\begin{algorithmic}[1]                   
		\REQUIRE Data $\{\bc_i, b_i\}^{M}_{i=1}$, initial model parameter $\btheta$, initial penalty value $\alpha$, increment rate $\kappa$, threshold $\delta$.
		\STATE $\bv \leftarrow \left[ (\btheta+|\min_j(\theta_j)|\bone)^T~~|\min_j(\theta_j)| \right]^T$.
		\STATE $\bu \leftarrow \mathbb{I}(\bC \bv - \bb > 0)$.
		\STATE $\bs \leftarrow \bu \odot (\bC \bv - \bb)$.
		\WHILE{true}
		\STATE $\bu, \bs, \bv \leftarrow FW(\{ \bc_i, b_i \}^{M}_{i=1}, \alpha, \bu, \bs, \bv)$.~/*Algo.~\ref{alg:FW_Penalty}.*/\label{line:fw}
		\IF {$Q(\bz) \le \delta$}
		\STATE Break.
		\ENDIF
		\STATE $\alpha \leftarrow \kappa \cdot \alpha$.
		\ENDWHILE
		\RETURN $\bu, \bs, \bv$.
	\end{algorithmic}
\end{algorithm}

It is worthwhile to note that typical non-smooth penalty functions cannot be easily minimized (e.g., no gradient information). In our case, however, we exploited the special property of~\eqref{penalized} (Sec.~\ref{sec:stationarity}) to enable efficient minimization.

\subsubsection{Convergence}\label{sec:pen_converge}

\begin{theorem}\label{thm:penalty_convergence}
If $\alpha$ is sufficiently large, Algorithm~\ref{alg:Heuristics} converges to a point $\hat{\bz}$ where $Q(\hat{\bz}) = 0$, i.e., $\hat{\bz}$ is a feasible solution of problem~\eqref{minout3}.
\end{theorem}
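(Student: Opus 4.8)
The plan is to read \eqref{penalized} as an $\ell_1$ exact-penalty reformulation of \eqref{minout3} and to upgrade the usual ``$Q$ becomes small'' estimate into ``$Q$ becomes exactly zero'' by exploiting the vertex structure from Theorems~\ref{thm:lcp}--\ref{thm:lcp2}. First I would record three facts that hold throughout $\cP$: every summand of $Q$ is nonnegative, so $Q(\bz)\ge 0$ and $Q(\bz)=0$ on $\cP$ is equivalent to feasibility of \eqref{minout3}; the term $F(\bz)=\sum_i u_i$ satisfies $0\le F(\bz)\le M$ because $0\le u_i\le 1$; and \eqref{minout3} is feasible (for any $\bv\ge\bZero$, set $u_i=1,\,s_i=\bc_i^T\bv-b_i$ on the violated rows and $u_i=s_i=0$ elsewhere, giving $Q=0$). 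These make the penalty problem well posed and bounded below.

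Next I would derive the central per-run estimate: if $\hat\bz=(\hat\bu,\hat\bs,\hat\bv)$ is the fixed point returned by Algorithm~\ref{alg:FW_Penalty} for a given $\alpha$, then $Q(\hat\bz)\le M/\alpha$. At such a fixed point both subproblems are simultaneously optimal at the same $\hat\bv$. Writing $\rho_i=\bc_i^T\hat\bv-b_i$, optimality of \eqref{lp1} in $\bs$ forces $\hat s_i=\max(0,\rho_i)$, while the closed form of \eqref{lp2} sets $\hat u_i=1$ once $\rho_i\ge 1/\alpha$ and $\hat u_i=0$ once $\rho_i<1/\alpha$. Substituting into $r_i(\hat\bz)=\hat s_i-\hat u_i\rho_i$ yields $r_i=0$ whenever $\rho_i\le 0$ or $\rho_i\ge 1/\alpha$, and $0\le r_i\le 1/\alpha$ on the borderline rows $0<\rho_i<1/\alpha$. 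Hence $0\le r_i\le 1/\alpha$ for every $i$ and $Q(\hat\bz)=\sum_i r_i\le M/\alpha$, so $Q(\hat\bz)\to 0$ as $\alpha\to\infty$.

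To turn smallness into exactness I would use that $\hat\bz$ is a \emph{vertex} of $\cP$: by Theorems~\ref{thm:lcp}--\ref{thm:lcp2} the KKT points to which Algorithm~\ref{alg:FW_Penalty} converges (Theorem~\ref{thm:vertex}) sit at vertices of $\cP$, and $\cP$ has only finitely many vertices. Therefore the set $\{\,Q(\bz_v): \bz_v \text{ a vertex of }\cP,\ Q(\bz_v)>0\,\}$ is finite, and I would let $q_{\min}$ be its minimum, a strictly positive constant (if this set is empty the claim is immediate). Taking $\bar\alpha=M/q_{\min}$, any $\alpha>\bar\alpha$ gives $Q(\hat\bz)\le M/\alpha<q_{\min}$; since $Q(\hat\bz)$ is a value attained at a vertex it lies in $\{0\}\cup[q_{\min},\infty)$, forcing $Q(\hat\bz)=0$. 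Finally, because Algorithm~\ref{alg:Heuristics} multiplies $\alpha$ by $\kappa>1$ each outer iteration, $\alpha$ eventually exceeds $\bar\alpha$, at which point $Q(\hat\bz)=0\le\delta$ and the algorithm halts at a feasible point of \eqref{minout3}.

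The main obstacle is precisely this last step. A plain exact-penalty argument only delivers $Q(\hat\bz)\le M/\alpha$, i.e.\ asymptotic feasibility, which does not by itself give $Q=0$ for any finite $\alpha$. The decisive ingredient is the discreteness supplied by the LCP/vertex theory: because the iterates are confined to the finitely many vertices of $\cP$, the attainable positive values of $Q$ are bounded away from zero by $q_{\min}$, and a single finite threshold $\bar\alpha$ then collapses the residual exactly to zero. I would therefore make the vertex claim for the algorithm's output airtight, and verify that the per-row estimate genuinely relies on the \emph{joint} optimality of \eqref{lp1} and \eqref{lp2} at the same $\hat\bv$.
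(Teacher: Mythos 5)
Your proposal is correct, and its engine is the same as the paper's: the closed-form solution of \ref{lp2} (set $u_i=1$ exactly when $\bc_i^T\hat{\bv}-b_i\ge 1/\alpha$) combined with the observation that the objective of \ref{lp1} equals $Q(\bz)$ and forces $\hat{s}_i=\max(0,\bc_i^T\hat{\bv}-b_i)$. Where you genuinely differ is the finish. The paper argues qualitatively, within one outer iteration: once $\alpha$ exceeds the reciprocal of the smallest \emph{positive} residual $\bc_i^T\hat{\bv}-b_i$ at the current iterate, \ref{lp2} flags every violating constraint, and the subsequent \ref{lp1} then attains its lower bound of zero by retaining $\hat{\bv}$ and setting $s_i=0$ or $s_i=\bc_i^T\hat{\bv}-b_i$ according to $\hat{u}_i$; since that objective is $Q$, the stopping test of Algorithm~\ref{alg:Heuristics} fires. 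This leaves implicit why a single \emph{finite} $\alpha$ suffices --- the paper's threshold depends on the current $\hat{\bv}$, which changes across outer iterations and could in principle keep escaping. Your finish closes exactly that gap: the per-run bound $Q(\hat{\bz})\le M/\alpha$ (with the borderline rows $0<\bc_i^T\hat{\bv}-b_i<1/\alpha$ each contributing at most $1/\alpha$), together with the fact that the iterates are vertices of the fixed polyhedron $\cP$ --- finitely many, and independent of $\alpha$ because $\alpha$ enters only the objective --- yields a uniform positive gap $q_{\min}$ and a single threshold $\bar{\alpha}=M/q_{\min}$. That uniformity is what the statement ``if $\alpha$ is sufficiently large'' really requires, so your argument is the more rigorous of the two; its price is reliance on the LP solves returning vertex solutions, but that is precisely what Theorem~\ref{thm:vertex} asserts (each \ref{lp1} solution is a vertex of $\cP_1$, each \ref{lp2} solution is integral, hence a vertex of $\cP_2=[0,1]^M$, and $\cP=\cP_1\times\cP_2$). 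Your two closing worries are both benign: the vertex claim for the output is Theorem~\ref{thm:vertex}'s content, and the ``joint optimality'' you need does not require a true fixed point --- \ref{lp1} optimizes $(\bs,\bv)$ jointly, so $\hat{s}_i=\max(0,\bc_i^T\hat{\bv}-b_i)$ holds whatever $\bu$ was used, and \ref{lp2} is solved last in Algorithm~\ref{alg:FW_Penalty}, so the returned iterate satisfies both facts your per-row estimate uses.
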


\begin{proof}
Let $\hat{\bs}$ and $\hat{\bv}$ be the solution of~\ref{lp1} (for a fixed $\hat{\bu}$ from the previous iteration).  When updating $\bu$ in~\ref{lp2}, for each constraint $i$, the possible outcomes for $u_i$ are:
\begin{itemize}
\item If $\bc_i^T\hat{\bv} - b_i \le 0$: We say that the $i$-th constraint is consistent with $\hat{\bv}$. \ref{lp2} will set $u_i$ to $0$ regardless of $\alpha$.
\item If $\bc_i^T\hat{\bv} - b_i > 0$: We say that the $i$-th constraint violates $\hat{\bv}$. \ref{lp2} will set $u_i$ according to
\begin{align*}
u_i = \begin{cases}
0 & \text{if} \;\; 1 - \alpha (\bc_i^T\hat{\bv} -b_i) \ge 0, \\
1 & \text{if} \;\; 1 - \alpha (\bc_i^T\hat{\bv} -b_i) < 0.
\end{cases}
\end{align*}
\end{itemize}
If $\alpha$ is large enough, then \ref{lp2} will set $u_i = 1$ for all the violating constraints. Given a $\hat{\bu}$ that was obtained under such a sufficiently large $\alpha$ in \ref{lp2}, in the subsequent invocation of \ref{lp1}, the minimal cost of $0$ can be obtained by maintaining the previous $\hat{\bv}$ and setting
\begin{align*}
\hat{s}_i = & \begin{cases}
0 & \text{if} \;\; \hat{u}_i = 0,\\
\bc_i^T\hat{\bv} - b_i & \text{if} \;\; \hat{u}_i = 1.\\
\end{cases}
\end{align*}
Recognizing that the objective function of \ref{lp1} is equal to $Q(\bz)$ completes the proof.\qed
\end{proof}

\subsubsection{Initialization}
Algorithm~\ref{alg:Heuristics} requires the initialization of $\bu$, $\bs$ and $\bv$. For consensus maximization, it is more natural to initialize the model parameters $\btheta$, which in turn gives values to $\bv$, $\bs$ and $\bu$. In our work, we initialize $\btheta$ as the least squares solution, or by executing RANSAC (Sec.~\ref{sec:results} will compare the results of these two different initialization methods).

Other required inputs are the initial penalty parameter $\alpha$ and the increment rate $\kappa$. These values affect the convergence speed of Algorithm~\ref{alg:Heuristics}. To avoid bad minima, we set $\alpha$ and $\kappa$ conservatively, e.g., $\alpha \in [1,10]$, $\kappa \in [1,~5]$. As we will demonstrate in Sec.~\ref{sec:results}, these settings enable Algorithm~\ref{alg:Heuristics} to find very good solutions at competitive runtimes.

\section{ADMM-based algorithm}\label{sec:admm}

Our second technique derives from the class of proximal splitting algorithms~\cite{boyd2011distributed}. Specifically, we apply the ADMM to construct a deterministic approximate algorithm for our target problem~\eqref{minout3}. The ADMM was originally developed for convex optimization problems. However, its use for nonconvex nonsmooth optimization has been investigated recently, with strong convergence results~\cite{hong2016convergence, wang2015global}. While ADMM has recently found usage in several geometric vision problems, e.g., bundle adjustment~\cite{eriksson2015high, eriksson2016consensus}, triangulation~\cite{eriksson2014pseudoconvex}, its application to robust fitting is relatively unexplored.

\subsection{ADMM formulation}

The specific version of ADMM used in our work is \emph{consensus ADMM}~\cite{boyd2011distributed}, where the term ``consensus" takes a different meaning\footnote{Consensus ADMM is a version commonly used for distributed optimization~\cite{boyd2011distributed}. For brevity, we do not explore distributed optimization in our work, though our algorithm is amenable to such a scheme.} than ours---to avoid confusion, we will simply call the technique ``ADMM". To the original problem~\eqref{minout3}, where the objective function has $M$ summands and the original variables are $\bz = [\bu^T \; \bs^T \; \bv^T]^T \in \mathbb{R}^{2M + d + 1}$, introduce $M$ auxilary parameter vectors $\bz^\prime_1, \dots, \bz^\prime_M$, where
\begin{align}
\bz^\prime_i = [u^\prime_i \; s^\prime_i \; (\bv^\prime_i)^T]^T \in \bbR^{d+3},
\end{align}
as well as the ``coupling" parameter vector
\begin{align}
\bz_C = [\bs_C^T \; \bv_C^T ]^T \in \bbR^{M+d}.
\end{align}
Then, rewrite~\eqref{minout3} as
\begin{subequations}
  \label{consensus_admm}
  \begin{align}
  &\min_{\bz,\{\bz^\prime_i\},\bz_C} &&\sum_i \left[ u^\prime_i + \bbI_B(\bz^\prime_i) \right] + \bbI_C(\bz_C) \label{consensus_admm_cost} \\
  &\text{s.t.}     && \bu = \bu^\prime,  \label{consensus_admm_constraint1}\\
  & && \bs = \bs^\prime = \bs_C, \label{consensus_admm_constraint2}\\
  & && \bv = \bv^\prime_i = \bv_C, \label{consensus_admm_constraint3}
  \end{align}
\end{subequations}
where $\mathbb{I}_\bB$ is an indicator function that enforces the bilinear constraints
\begin{equation}
  \label{indicator_ib}
  \mathbb{I}_B(\bz^\prime_i) =
  \begin{cases}
    \begin{aligned}
      &0 &&\text{ if }
      \begin{cases}
        u^\prime_i(s^\prime_i -\bc_i^T\bv_i^\prime + b_i) = 0,
        \\ s^\prime_i(1-u^\prime_i) = 0, \\
        u^\prime_i \in \{0,1\},
      \end{cases}\\
      &\infty &&\text {otherwise,}
    \end{aligned}
  \end{cases}
\end{equation}
and $\bbI_C$ is an indicator function that enforces $\bz_C$ to statisfy the convex constraints
\begin{equation}
  \label{indicator_ic}
  \mathbb{I}_C(\bz_C) =
  \begin{cases}
    \begin{aligned}
    &0 &&\text{ if }
    \begin{cases}
      \bs_C - \bC\bv_C + \bb \ge \bZero,\\
      \bs_C, \bv_C \ge \bZero,
    \end{cases}\\
    &\infty &&\text { otherwise.}
    \end{aligned}
  \end{cases}
\end{equation}

Note that the objective function~\eqref{consensus_admm_cost} is a composition of $M+1$ totally separate subfunctions, where each subfunction of the form $u^\prime_i + \bbI_B(\bz^\prime_i)$ involves only $\bz^\prime_i$, and the final subfunction $\mathbb{I}_C(\bz_C)$ involves only $\bz_C$. Intuitively, the constraints~\eqref{consensus_admm_constraint1},~\eqref{consensus_admm_constraint2}, and~\eqref{consensus_admm_constraint3} ensure that the auxiliary and the original variables must converge to the same point, and hence are referred to as ``coupling constraints''. It can thus be appreciated that problem~\eqref{consensus_admm} is identical to problem~\eqref{minout3}, in that solving \eqref{consensus_admm} results in the same optimum as \eqref{minout3}. The benefit of the decomposition is that the problem can be solved by iteratively solving smaller subproblems which are convex, as we elaborate in the next subsection.

It can further be realized that the solution of the problem~\eqref{consensus_admm} does not change if the term $\|\bu\|^2$ is added to the cost function~\eqref{consensus_admm_cost}. Thus, to aid the convergence of our proposed algorithm (refer to the supplementary material (Section 1) for more details), the solution of \eqref{consensus_admm} can be obtained by solving the following problem:
\begin{subequations}
	\label{consensus_admm1}
	\begin{align}
	&\min_{\bz,\{\bz^\prime_i\},\bz_C} &&\sum_i \left[ u^\prime_i + \bbI_B(\bz^\prime_i)  \right] + \bbI_C(\bz_C) + \|\bu\|^2 \label{consensus_admm1__reg_cost} \\
	&\text{s.t.}     && \bu = \bu^\prime,  \label{consensus_admm1__reg_constraint1}\\
	& && \bs = \bs^\prime = \bs_C, \label{consensus_admm1__reg_constraint2}\\
	& && \bv = \bv^\prime_i = \bv_C, \label{consensus_admm1_reg_constraint3}
	\end{align}
\end{subequations}

\subsubsection{Augmented Lagrangian}

Now consider the augmented Lagrangian of \eqref{consensus_admm1}
\begin{equation}
  \label{lagrangian}
  \begin{aligned}
  & \cL_\rho (\bz, \{\bz^\prime_i\}, \bz_C, \blambda) = && \sum_i \left[ u^\prime_i  + \bbI_B(\bz^\prime_i) \right] + \bbI_C(\bz_C) + \|\bu\|^2\\
  & &&+ \rho(\|\bu^\prime - \bu + \blambda^{\bu}\|^2_2 - \|\blambda^\bu\|^2_2)\\ 
  & &&+ \rho(\|\bs^\prime - \bs + \blambda^{\bs}\|^2_2 - \|\blambda^{\bs}\|^2_2) \\
  & &&+ \rho(\|\bs_C - \bs + \blambda_C^{\bs}\|^2_2 - \|\blambda_C^{\bs}\|^2_2)\\
  & &&+ \rho(\|\bv_C - \bv + \blambda_C^{\bv}\|^2_2 - \|\blambda_C^{\bv}\|^2_2) \\
  & &&+ \rho\sum_i (\|\bv_i^\prime - \bv + \blambda_i^{\bv}\|^2_2 - \|\blambda_i^{\bv}\|^2_2),
\end{aligned}
\end{equation}
where
\begin{align}
\bu^\prime = \left[ \begin{matrix} u^\prime_1 & \dots u^\prime_M \end{matrix} \right]^T, \;\; \bs^\prime = \left[ \begin{matrix} s^\prime_1 & \dots s^\prime_M \end{matrix} \right]^T,
\end{align}
and $\rho$ is the penalty parameter. The vector
\begin{align}
\blambda = [(\blambda^{\bu})^T \;(\blambda^{\bs})^T \; (\blambda_C^{\bs})^T \; (\blambda_C^{\bv})^T  \; \{(\blambda_i^{\bv})^T\}_{i=1}^M]^T
\end{align}
contains all the scaled dual variables associated with the constraints in~\eqref{consensus_admm1}. Intuitively, the penalty parameter $\rho$ controls the strength of the penalization of the deviation of the auxilary variables from the original ones.

ADMM alternates between updating the auxilary variables $\{\bz^\prime_i\}$ and $\bz_C$, followed by the original variables $\bz$, w.r.t.~the augmented Lagrangian. The Lagrange multipliers $\blambda$ are also updated, following the dual variable update principle~\cite{boyd2011distributed}. Sec.~\ref{sec:admm_main} will elaborate on the overall algorithm and the associated convergence guarantee. Next in Sec.~\ref{sec:admm_update} we will first examine in detail the individual update steps.

\subsection{Update steps}\label{sec:admm_update}


The vectors $\{\bz^\prime_i\}$, $\bz_C$, and $\bz$ are respectively updated by minimizing the augmented Lagrangian with respect to the target vector, while keeping the other vectors fixed. Specifically, these updates are
\begin{subequations}
  \label{admm_update}
  \begin{align}
    &\bz^\prime_i \leftarrow \argmin_{\bz^\prime_i} \; \cL_\rho (\bz, \{\bz^\prime_i\}, \bz_C, \blambda), \; \forall i,\label{zi_update_1}\\
    &\bz_C\leftarrow\argmin_{\bz_C} \; \cL_\rho (\bz, \{\bz^\prime_i\}, \bz_C, \blambda),\label{zc_update_1}\\
    &\bz \leftarrow\argmin_{\bz} \; \cL_\rho (\bz, \{\bz^\prime_i\}, \bz_C, \blambda),\label{z_update_1}
  \end{align}
\end{subequations}
where, to avoid clutter, we don't distinguish between the target vector and the other vectors on the RHS.

After the vectors $\{\bz^\prime_i\}$, $\bz_C$, and $\bz$ are revised, the ADMM procedure updates the Lagrange multipliers as follows    
\begin{align}
  \label{admm_update2}
  \begin{aligned}    
    &\blambda^{\bu} \leftarrow \blambda^{\bu} + \bu^{\prime} - \bu,\\
    &\blambda^{\bs} \leftarrow \blambda^{\bs} + \bs^{\prime} - \bs,\\
    &\blambda_C^{\bs} \leftarrow \blambda_C^{\bs} + \bs_C - \bs,\\
    &\blambda_C^{\bv} \leftarrow \blambda_C^{\bv} + \bv_C - \bv,\\
    &\blambda_i^{\bv} \leftarrow \blambda_i^{\bv} + \bv^\prime_i - \bv, \; \forall i.
  \end{aligned}
\end{align}
Intuitively, from the way vector $\blambda$ is being updated, the vector can be interpreted as the accumulated shift of the auxiliary variables from the original variables~\cite{boyd2011distributed}.

In the following, we take a deeper look into the subproblems in~\eqref{admm_update}.

\subsubsection{Updating $\bz^\prime_i$}

Due to the decomposable nature of the augmented Lagrangian~\eqref{lagrangian}, the problem in \eqref{zi_update_1} can be reduced to
\begin{subequations}
  \label{zi_update}
  \begin{align}
    &\argmin_{\bz^\prime_i}&& u^\prime_i+ \rho(u^\prime_i - u_i + \lambda^\bu_i)^2\notag\\
    & &&+\rho(s^\prime_i - s_i + \lambda^\bs_i)^2+\rho\|\bv_i^\prime - \bv + \blambda^\bv_i \|^2_2\label{ziupdate_objective}\\
    &\text{s.t.}  && u^\prime_i(s^\prime_i - \bc_i^T\bv^\prime_i + b_i) = 0, \label{bilinear_constraint_2} \\    
    & && s'_i(1 - u^\prime_i) = 0, \label{bilinear_constraint_1} \\
    & && u^\prime_i \in \{0,1\}. \label{u_constraint}
  \end{align}
\end{subequations}
where terms not affected by $\bz_i^\prime$ have also been ignored.     
Due to the complementarity constraints \eqref{bilinear_constraint_2} and \eqref{bilinear_constraint_1}, and the binary restriction \eqref{u_constraint} on $u^\prime_i$ , \eqref{zi_update} can be solved by simply enumerating $u^\prime_i$:
\begin{itemize}
\item $\uiprime = 0$: Then $\siprime$ must also be $0$ to satisfy all the constraints, and $\bv_i^\prime$ must be assigned the value of $\bv - \blambda_i^{\bv}$ to minimize \eqref{ziupdate_objective}.
\item $\uiprime = 1$: To satisfy \eqref{bilinear_constraint_2}, $\siprime$ must be equal to $\bc_i^T\bv_i^\prime - b_i$ Then problem \eqref{zi_update} becomes the unconstrained convex quadratic program (QP)
\begin{equation}
	\label{viprime_update}
	\begin{aligned}
	\min_{\bv^\prime_i} \; (\bc_i^T\bv^\prime_i - b_i - s_i + \lambda^\bs_i)^2	+\|\bv_i^\prime - \bv + \blambda^{\bv}_i\|^2_2.
	\end{aligned}
\end{equation}
When $\bv_i^\prime$ is obtained, $s^\prime_i$ can be computed accordingly.
\end{itemize}
The revised $\bz_i^\prime$ is simply chosen as the combination of the variables that results in the smaller objective value in~\eqref{zi_update}. Note that the value of $\rho$ would affect the chosen $\bz_i^\prime$.

\subsubsection{Updating $\bz_C$}

Ignoring terms unrelated to $\bz_C$, the problem in \eqref{zc_update_1} can be re-expressed as a convex QP
\begin{equation}
  \label{zc_update}
  \begin{aligned}
    & \min_{\bz_C} && \|\bs_C - \bs + \blambda_C^{\bs}\|^2_2 + \|\bv_C - \bv + \blambda_C^{\bv}\|^2_2,\\  
    & \text{s.t.}  && \bs_C - \bC\bv_C + \bb \ge \bZero,\\
    & && \bs_C, \bv_C \ge \bZero,
  \end{aligned}
\end{equation}
which can be solved efficiently up to global optimality.

\subsubsection{Updating $\bz$}

Again ignoring terms unrelated to the variables of interest, the problem in \eqref{z_update_1} reduces to
\begin{equation}
  \label{z_update}
  \begin{aligned}
    &\argmin_{\bz} && \rho(\|\bu^\prime - \bu + \blambda^{\bu}\|^2_2 + \|\bs^\prime - \bs + \blambda^{\bs}\|^2_2 \\
    & && + \|\bs_C - \bs + \blambda_C^{\bs}\|^2_2  + \|\bv_C - \bv + \blambda_C^{\bv}\|^2_2 \\
    & && + \sum_i\|\bv_i^\prime - \bv + \blambda_i^{\bv}\|^2_2)+ \|\bu\|^2.
  \end{aligned}
\end{equation}
The three components $\bu$, $\bs$ and $\bv$ of $\bz$ decouple, and in fact can be solved for easily as the ``mean vectors"
\begin{equation*}
  \begin{aligned}
  \bu  &= \frac{\rho}{\rho+1}(\bu^\prime + \blambda^{\bu}),\\  
  \bs  &= \frac{1}{2}\left(\bs^\prime + \blambda^{\bs} + \bs_C + \blambda_C^{\bs}\right),\\
  \bv  &= \frac{1}{M+1}\left[\sum_{i=1}^M(\bv^\prime_i + \blambda_i^\bv) + \bv_C + \blambda^{\bv}_C \right].
  \end{aligned}  
\end{equation*}

Finally, we emphasize that all the update steps above can be solved efficiently, requiring no more than a convex QP.

\subsection{Main algorithm}\label{sec:admm_main}

Similar to the non-smooth penalty algorithm discussed in Sec.~\ref{sec:main}, directly setting $\rho$ to a very large value will likely lead to a bad suboptimal result. Therefore, also applied here is a heuristic strategy that initializes $\rho$ to a small value then gradually increases $\rho$ after each ADMM update cycle. The algorithm is terminated when the variable $\bz$ converges. Algorithm~\ref{alg:admm} summarizes the overall procedure.

\begin{algorithm}[ht]\centering
	\caption{ADMM-based method for solving~\eqref{minout3}.}
	\label{alg:admm}                         
	\begin{algorithmic}[1]                   
          \REQUIRE Data $\{\bc_i, b_i\}^{M}_{i=1}$, initial model parameter $\btheta$, initial penalty value $\rho$, increment rate $\sigma$, threshold $\delta$.
                \STATE $t \leftarrow 0$
		\STATE $\itt{\bv} \leftarrow \left[ (\btheta+|\min_j(\theta_j)|\bone)^T~~|\min_j(\theta_j)| \right]^T$.
		\STATE $\itt{\bu} \leftarrow \mathbb{I}(\bC \bv - \bb > 0)$.
		\STATE $\itt{\bs} \leftarrow \bu \odot (\bC \bv - \bb)$.
	    \STATE $\itt{\bz_i} = \itt{\bz}\;; \itt{\bz_C} = [\itt{\bs}; \itt{\bv}]\;; \itt{\blambda} = \bZero $
		 \WHILE{true}
		\STATE $t \leftarrow t+1$
		\STATE Update $\itt{\bz_i}$ by solving~\eqref{zi_update} $\forall\; i=1..N$
		\STATE Update $\itt{\bz_C}$ by solving~\eqref{zc_update}
		\STATE Update $\itt{\bz}$ by solving~\eqref{z_update}                
		\IF {$\|\itt{\bz} - \bz^{(t-1)} \|\le \delta$}
		\STATE Break.
		\ENDIF
		\STATE $\itt{\rho} \leftarrow \sigma \cdot \rho^{(t-1)}$.
		\ENDWHILE
		\RETURN $\bu, \bs, \bv$.
	\end{algorithmic}
\end{algorithm}

\subsubsection{Convergence}

\begin{theorem}\label{theorem:admm_converge}
  For a sufficiently large $\rho$, the ADMM update iterations in \eqref{admm_update} converge to  a stationary point of the augmented Lagrangian ~\eqref{lagrangian}, which is also a feasible solution of~\eqref{minout3}, after a finite number of steps.
\end{theorem}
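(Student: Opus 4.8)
The plan is to obtain convergence to a stationary point by appealing to the general theory of nonconvex nonsmooth ADMM established in \cite{wang2015global, hong2016convergence}, and then to argue feasibility and finite termination separately from the problem's specific structure. The quadratic regularizer $\|\bu\|^2$ appended in~\eqref{consensus_admm1} is exactly what renders the final block update (the $\bz$-update in~\eqref{z_update}) strongly convex, a condition that these convergence results typically require of the last block; this is precisely the reason it was introduced.

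First I would verify that the hypotheses of the chosen convergence theorem are met. The objective separates into the $M+1$ pieces $u^\prime_i + \mathbb{I}_B(\bz^\prime_i)$ and $\mathbb{I}_C(\bz_C)$, each proper and lower semicontinuous since the indicator functions are closed (their feasible sets are closed). The coupling constraints~\eqref{consensus_admm1__reg_constraint1}--\eqref{consensus_admm1_reg_constraint3} are linear, so I would check the required image/rank condition on their constraint matrix that governs the dual updates. The delicate analytical point is coercivity: I would show $\cL_\rho$ is bounded below, using $u^\prime_i \ge 0$, the regularizer $\|\bu\|^2$, and the indicator functions that confine the iterates to the feasible sets. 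With these verified, the theorem delivers a monotone decrease of $\cL_\rho$ (for $\rho$ large enough), vanishing successive differences and primal residuals, and convergence of the sequence to a stationary point of $\cL_\rho$.

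For feasibility, I would note that at any stationary point the primal residuals are zero, so the coupling constraints hold exactly: $\bu=\bu^\prime$, $\bs=\bs^\prime=\bs_C$, and $\bv=\bv^\prime_i=\bv_C$. Because $\mathbb{I}_B(\bz^\prime_i)$ and $\mathbb{I}_C(\bz_C)$ are finite at the limit, each $\bz^\prime_i$ satisfies the complementarity constraints~\eqref{bilinear_constraint_2}--\eqref{u_constraint} and $\bz_C$ satisfies the convex constraints in~\eqref{indicator_ic}. Since the auxiliary copies agree with $\bz=[\bu^T\;\bs^T\;\bv^T]^T$, the original variables simultaneously satisfy the complementarity and the convex constraints, which is exactly feasibility for~\eqref{minout3}.

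Finite termination would exploit the binary structure. In the $\bz^\prime_i$ update, $u^\prime_i$ is enumerated over $\{0,1\}$, so $\bu^\prime$ ranges over a finite set of configurations. Mirroring the argument of Theorem~\ref{thm:penalty_convergence}, once $\rho$ is sufficiently large the selection of $u^\prime_i$ obeys a deterministic thresholding rule --- $u^\prime_i=1$ for constraints violated by the current $\bv$ and $u^\prime_i=0$ otherwise. Combined with the monotone decrease of $\cL_\rho$, the binary pattern cannot cycle and must stabilize after finitely many iterations; once it is fixed, the remaining coupling residuals are driven to zero and feasibility is attained. I expect the main obstacle to be the coercivity and rank verification that licenses the general ADMM theorem, since the extended-valued indicators make the standard boundedness arguments delicate, and reconciling the usual asymptotic behavior of ADMM with the finite-step claim likewise depends on carefully using the discreteness of $\bu^\prime$ rather than smooth descent alone.
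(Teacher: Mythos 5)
Your proposal takes a genuinely different route from the paper --- and the route has a real gap. The paper never invokes the general nonconvex-ADMM theorems of \cite{hong2016convergence,wang2015global}; it gives a direct, self-contained descent argument: since the $\bz_i'$-subproblems \eqref{zi_update} are solved to \emph{global} optimality (enumeration of $u_i'\in\{0,1\}$ plus a convex QP) and the $\bz_C$-subproblem \eqref{zc_update} is a convex QP, those updates cannot increase $\cL_\rho$; then, for $\rho$ sufficiently large, the combined $\bz$- and $\blambda$-updates are shown (supplementary material, Section 1) not to increase $\cL_\rho$ either; finally $\cL_\rho$ is bounded below for large $\rho$, so the iterates converge, and at the limit the coupling constraints hold, giving feasibility for \eqref{minout3}. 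Your feasibility step agrees with the paper's; the problem is the black-box appeal that is supposed to deliver descent and stationarity. The hypotheses of the theorems you cite are not satisfied here, or at least verifying them \emph{is} the entire difficulty, and your checklist (properness, lower semicontinuity, closed sets, linear coupling) is far from sufficient. The framework of \cite{hong2016convergence} requires the nonconvex parts of the objective to be smooth with Lipschitz gradient, whereas $\mathbb{I}_B$ in \eqref{indicator_ib} is the extended-valued indicator of a disconnected nonconvex set (the union of the two affine pieces $\{u_i'=0,\ s_i'=0\}$ and $\{u_i'=1,\ s_i'=\bc_i^T\bv_i'-b_i\}$), so that result does not apply. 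The analysis of \cite{wang2015global} tolerates some nonsmooth nonconvex pieces, but under restricted prox-regularity (or piecewise-linear) assumptions together with exact-subminimization and image conditions; whether $\mathbb{I}_B$ qualifies is exactly what would have to be proved, and you flag coercivity, not this, as the delicate point. Ironically, the structural fact that would let you check exact subminimization --- finite enumeration of $u_i'$ plus convex QPs --- is what the paper uses to prove the descent inequality directly, with much less machinery.

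Two further points. First, your stated rationale for the $\|\bu\|^2$ regularizer is off: the $\bz$-update \eqref{z_update} is already strongly convex in $(\bu,\bs,\bv)$ from the quadratic penalty terms alone, so strong convexity of the last block cannot be ``precisely the reason'' it was introduced; in the paper's analysis its role is in establishing the descent inequality for the combined $\bz$/$\blambda$ step and the lower bound on $\cL_\rho$ for large $\rho$ (supplementary material, Section 1). Second, your finite-termination argument does not close: even after the binary pattern of $\bu'$ stabilizes, the continuous variables and the coupling residuals converge only asymptotically under the quadratic updates, so ``driven to zero'' does not happen in finitely many steps by your reasoning; in both your argument and the paper's, finiteness is really inherited from the stopping threshold $\delta$ in Algorithm~\ref{alg:admm} applied to a convergent (monotone, bounded-below) sequence, and it would be cleaner to say so explicitly than to attribute it to the discreteness of $\bu'$.
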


\begin{proof}
  The detailed proof for this theorem can be found in the supplementary material (Section 1). For completeness, an outline of the proof is provided in this section. 
  
  Consider the $(t+1)$-th update cycle of Algorithm.~\ref{alg:admm}. To prevent clutter, let $\{\bz_i\}^+, \bz_C^+, \bz^+$ and $\blambda^+$ denote the updated value of the variables while $\{\bz_i\}, \bz_C, \bz$ and $\blambda$ represent the variables carried from the $(t)$-th iteration.
  
  During the update steps of $\{\bz_i\}$ and $\bz_C$, since ~\eqref{zi_update} and~\eqref{zc_update} can be solved optimally, it follows that:
  \begin{equation} \label{eq:lagrange_dec_1}
	 \cL_\rho (\bz, \{\bz^\prime_i\}, \bz_C, \blambda) \ge \cL_\rho (\bz, \{\bz^\prime_i\}^+, \bz_C^+, \blambda) 
  \end{equation}

  Then, after $\bz$ and $\blambda$ are updated, with a sufficiently large $\rho$, it can be proven that: 
  \begin{equation} \label{eq:larange_dec_2}
  \cL_\rho (\bz, \{\bz^\prime_i\}^+, \bz_C^+, \blambda) \ge \cL_\rho (\bz^+, \{\bz^\prime_i\}^+, \bz_C^+, \blambda^+) 
  \end{equation}	
  (detailed proof is provided in the supplementary material -- Section 1).
  From~\eqref{eq:lagrange_dec_1} and~\eqref{eq:larange_dec_2}, the following inequality holds:
  \begin{equation} \label{eq:larange_dec}
  \cL_\rho (\bz, \{\bz^\prime_i\}, \bz_C, \blambda) \ge \cL_\rho (\bz^+, \{\bz^\prime_i\}^+, \bz_C^+, \blambda^+) 
  \end{equation}
  given that $\rho$ is large enough.
     
  The inequality \eqref{eq:larange_dec} states that, with a sufficiently large $\rho$, the augmented Lagrangian~\eqref{lagrangian} is monotonically non-increasing after every ADMM update cycle. As this function is bounded below with a sufficiently large $\rho$ (detailed proof is given in the supplementary material--Section 1), its convergence to a point $\bz^*$ is guaranteed. At convergence, all the constraints~\eqref{consensus_admm_constraint1}, ~\eqref{consensus_admm_constraint2} and ~\eqref{consensus_admm_constraint3} are satisfied and $\bz^*$ is also a feasible solution of~\eqref{minout3}.
  \qed 
  
\end{proof}

\subsubsection{Initialization}
Similar to Alg.~\ref{alg:Heuristics}, $\bu,\bs,\bv$ can be initialized from a suboptimal solution such as RANSAC or least squares fit. To avoid bad local minmima, the starting values of $\rho$ are chosen to be relatively small ($0\le \rho\le 10$) with a conservative increase rate $\sigma$ ($1.01\le \sigma\le 5$). It will be demonstrated in Section~\ref{sec:results} that with this choice of the parameters, the algorithm was able to significantly improve the solution from an initial starting point.

\section{Handling geometric distances}\label{sec:quasiconvex}

For most applications in computer vision, the residual function used for geometric model fitting is non-linear. It has been shown~\cite{kahl05,olsson07,agarwal08}, however, that many geometric residuals have the following \emph{generalized fractional} form
\begin{equation}\label{quasi-convex}
\frac{\|\bG\btheta + \bh\|_p}{\br^T\btheta + q} \;\; \text{with} \;\; \br^T\btheta + q > 0,
\end{equation}
where $\| \cdot \|_p$ is the $p$-norm, and $\bG \in \mathbb{R}^{2 \times d}$, $\bh \in \mathbb{R}^{2}$, $\br \in \mathbb{R}^{d}$, $q \in \mathbb{R}^1$ are constants derived from the input data. For example, the reprojection error in triangulation and transfer error in homography fitting can be coded in the form~\eqref{quasi-convex}. The associated maximum consensus problem is
\begin{align}\label{maxcon-geo}
\begin{aligned}
&\max_{\btheta \in \mathbb{R}^d}  && \Psi(\btheta),
\end{aligned}
\end{align}
where
\begin{align}\label{geo-consensus}
\Psi(\btheta) = \sum_{j=1}^N \mathbb{I}\left(\|\bG_j\btheta + h_j\|_p \le \epsilon (\br_j^T\btheta + q_j) \right).
\end{align}
In~\eqref{geo-consensus}, we have moved the denominator of~\eqref{quasi-convex} to the RHS since $\epsilon$ is non-negative (see~\cite{kahl05} for details). We show that for $p = 1$, our method can be easily adapted to solve maximum consensus for geometric residuals~\eqref{maxcon-geo}\footnote{Note that, in the presence of outliers, the residuals are no longer i.i.d. Normal. Thus, the $1$-norm is arguably as valid as the $2$-norm for maximum consensus robust fitting.}. Define
\begin{equation}
\bG_j = \stackedVector{\bg_{j,1}^T}{\bg_{j,2}^T} \;\; \bh_j = \stackedVector{h_{j,1}}{h_{j,2}}.
\end{equation}
Now, for  $p=1$, the constraint in~\eqref{maxcon-geo} becomes
\begin{align}
\left|\bg^T_{j,1}\btheta + h_{j,1} \right| + \left|\bg^T_{j,2}\btheta + h_{j,2} \right| \le \epsilon (\br_j^T\btheta + q_j),
\end{align}
which in turn can be equivalently implemented using four linear constraints (see~\cite{agarwal08} for details). We can then manipulate~\eqref{geo-consensus} into the form~\eqref{consensus_maxfs}, and the rest of our theory and algorithms will be immediately applicable.

\section{Results}\label{sec:results}

\begin{figure*}[ht]\centering
	\subfigure{\includegraphics[width=1\columnwidth]{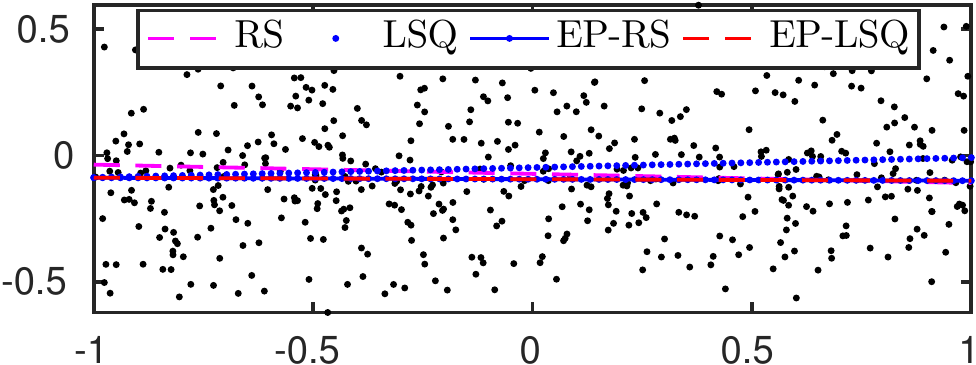}\label{subfig:balanced2d}}	
	\subfigure{\includegraphics[width=1\columnwidth]{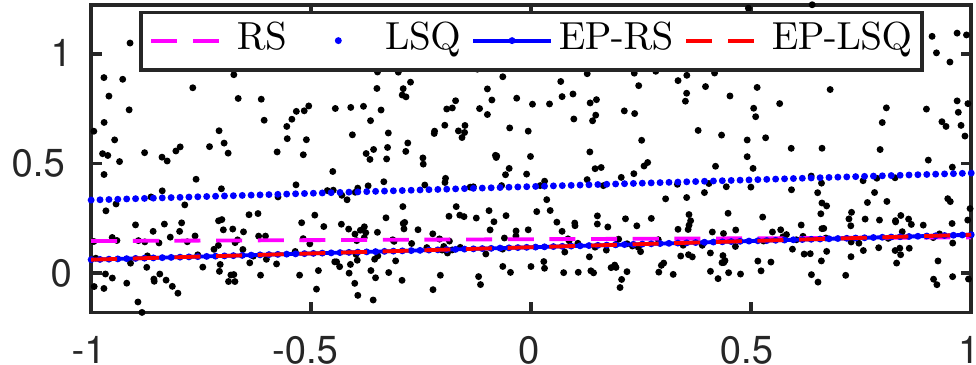}\label{subfig:unbalanced2d}}					
	\caption{Two-dimensional analogy of balanced (left) and unbalanced (right) data generated in our experiments. The results of RANSAC, least squares, and our method initialized with the former two methods are shown. Observe that least squares is heavily biased under unbalanced data, but EP is able to recover from the bad initialization. (For clarity, the results of AM variants are not plotted as they are very close to EP-RS and EP-LSQ) }
	\label{fig:data2d}
\end{figure*}
\begin{figure*}[ht]\centering
	\subfigure[Consensus size at termination (balanced data).]{\includegraphics[width=0.99\columnwidth]{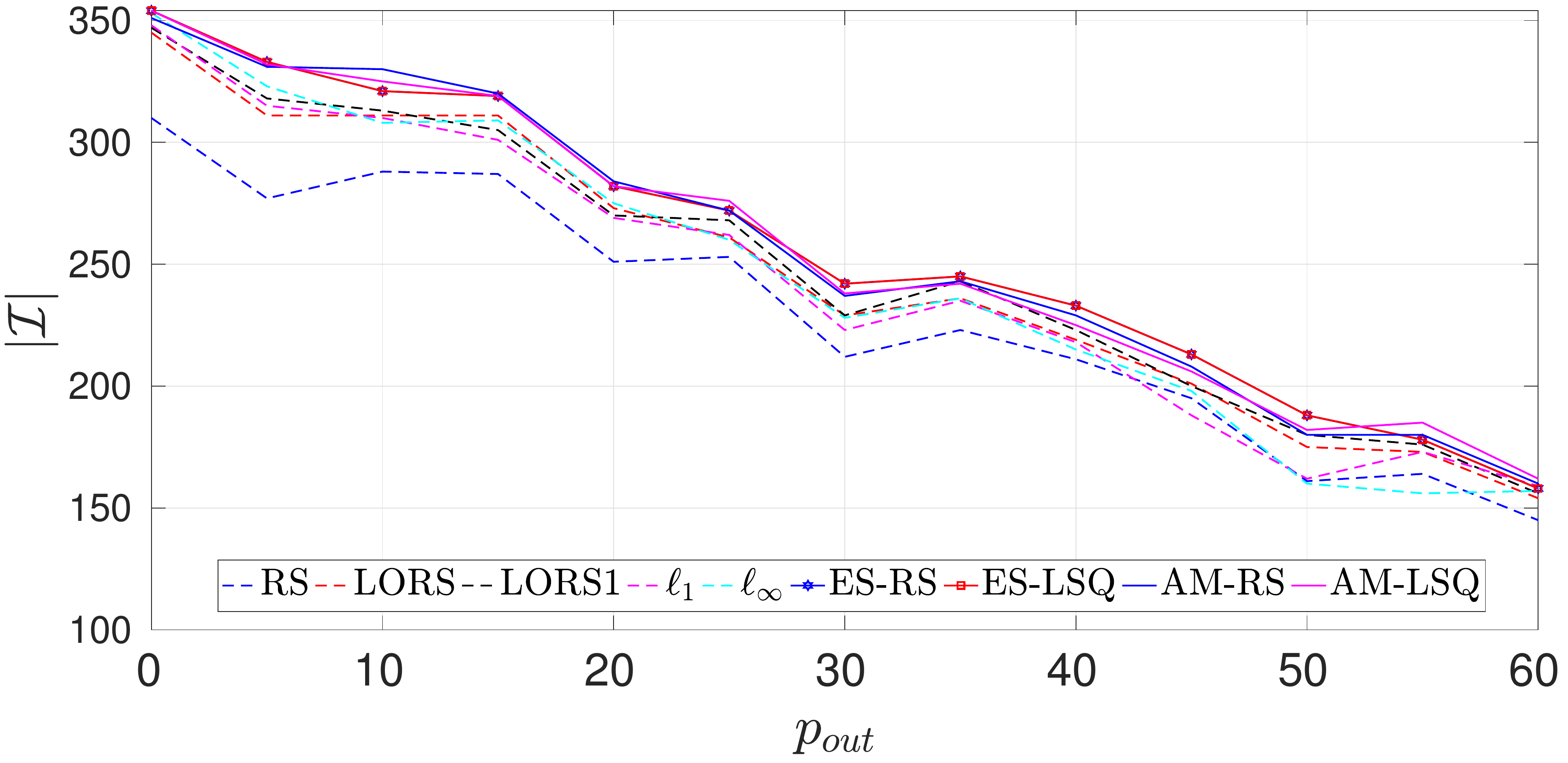}\label{subfig:linear_balanced_inls}}
	\hspace{1em}
	\subfigure[Runtime in seconds (log scale, balanced data).]{\includegraphics[width=0.99\columnwidth]{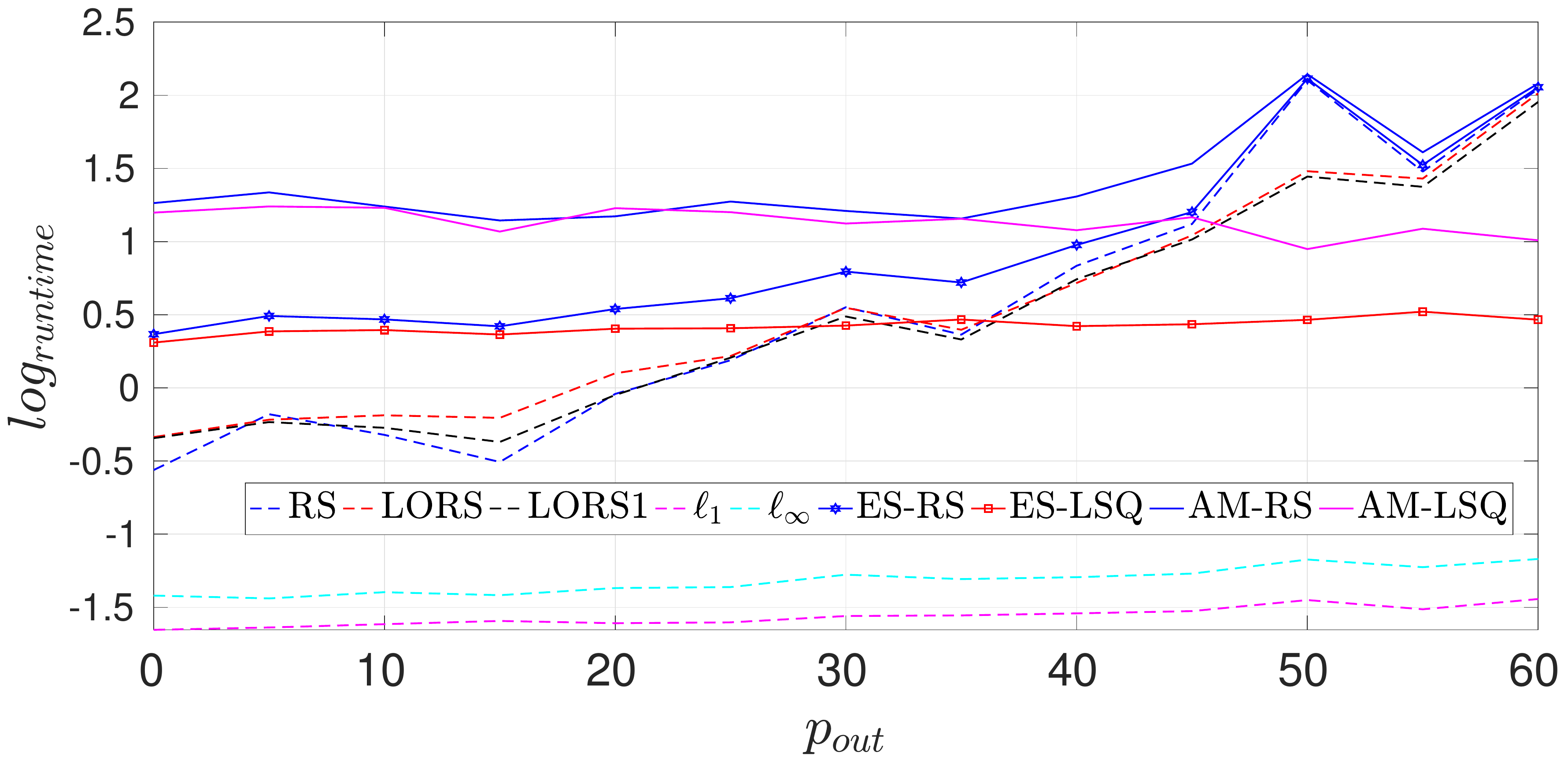}\label{subfig:linear_balanced_time}}\\
	\subfigure[Consensus size at termination (unbalanced data).]{\includegraphics[width=0.99\columnwidth]{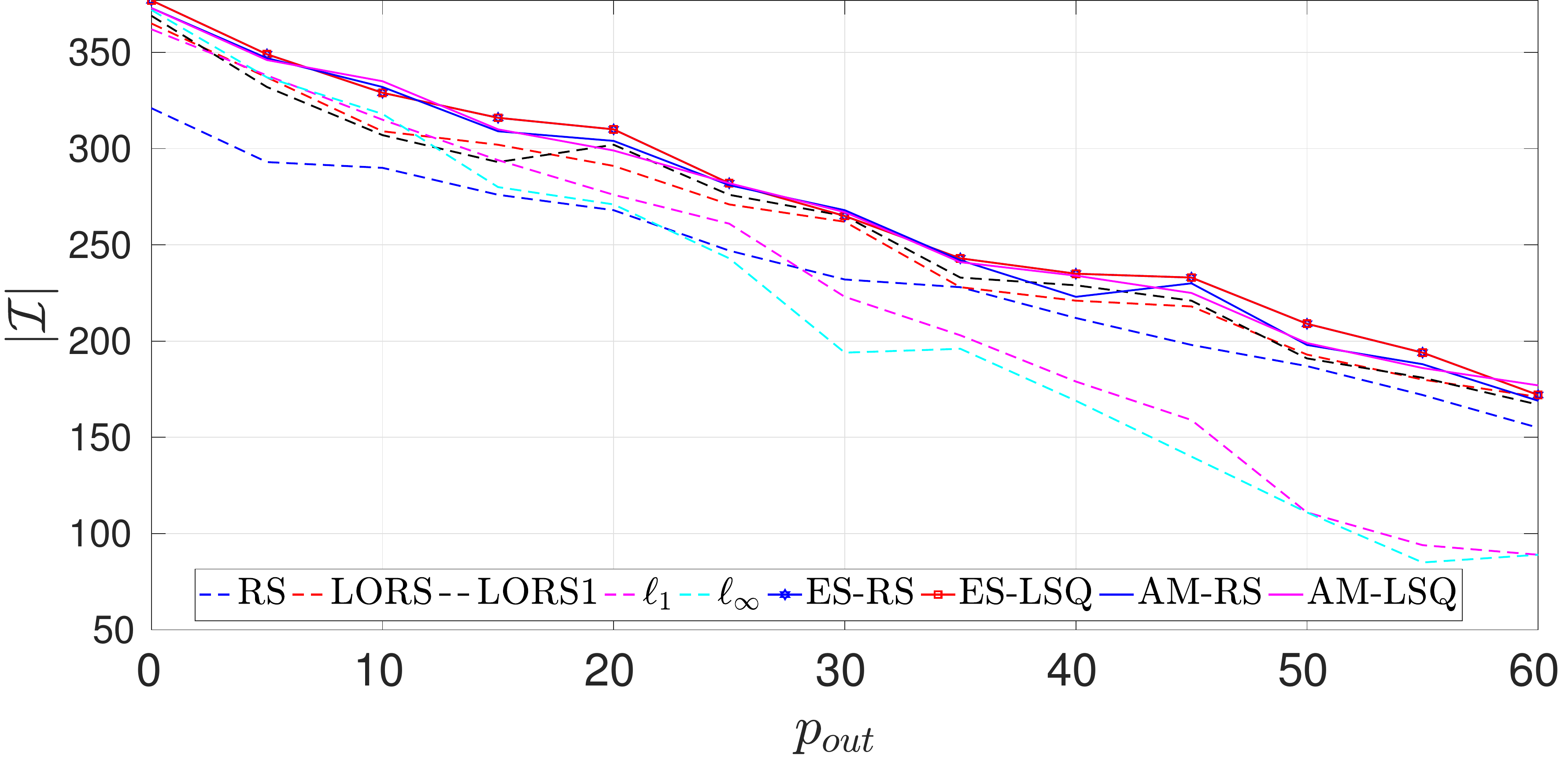}\label{subfig:linear_unbalanced_inls}}
	\hspace{1em}
	\subfigure[Runtime in seconds (log scale, unbalanced data).]{\includegraphics[width=0.99\columnwidth]{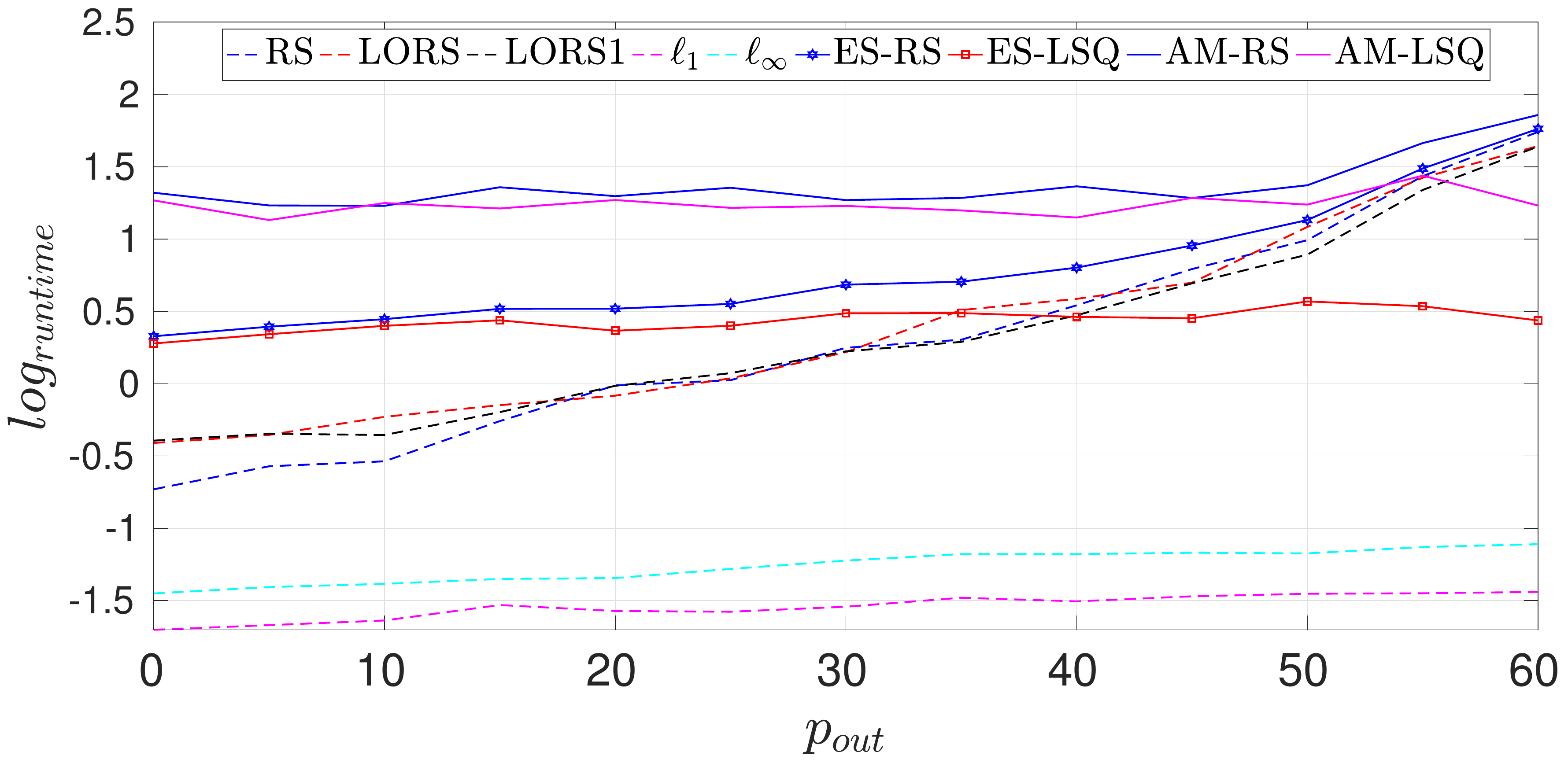}\label{subfig:linear_unbalanced_time}}
	\caption{Results for linear regression ($d = 8$ dimensions). (a)(b) Balanced data; (c)(d) Unbalanced data.}
	\label{fig:synthetic_results}
\end{figure*}
We tested our method (Algorithm~\ref{alg:Heuristics} and Algorithm~\ref{alg:admm}, henceforth abbreviated as \ep~and \admm, respectively) on common parameter estimation problems. We compared \ep~and \admm~against the following well-known methods:
\begin{itemize}[leftmargin=0.6cm,itemsep=2pt,parsep=0pt]
	\item RANSAC (RS) \cite{fischler1981random}: We used confidence $\rho = 0.99$ for the stopping criterion in all the experiments. On each data instance, RANSAC was executed 10 times and the average consensus size and runtime were reported. 
	\item LO-RANSAC (LORS) \cite{chum2003locally}: The maximum number of iterations for the inner sampling over the best consensus set was set to 100. The size of the inner sampled subsets was set to be twice the size of the minimal subset. 	
	\item Improved LO-RANSAC (LORS1) \cite{lebeda2012fixing}: Following~\cite{lebeda2012fixing}, the inner RANSAC routine will only be run if the new consensus size is higher than a pre-defined threshold (set to $10\%$ of the data size in our experiments).
	\item $\ell_1$ approximation ($\ell_1$) \cite{olsson2010outlier}: This method is equivalent to introducing slack variables to problem~\eqref{consensus} and minimizing the $\ell_1$-norm of the slack variables to yield an approximate solution to maximum consensus.
	\item $\ell_\infty$ outlier removal ($l_\infty$) \cite{sim2006removing}: Again, in the context of~\eqref{consensus}, slack variables are introduced and the maximum slack value is minimized. Data with the largest slack value are removed, and the process of repeated until the largest slack value is not greater than zero.
	\item For fundamental matrix estimation and linearized homography, we also compare our methods with Cov-RANSAC (CRS) \cite{raguram2009exploiting}, in which the uncertainties of the measurements and the homography matrix are incorporated to improve RANSAC.
	\item For the experiments with image data where key-point matching scores are available as inlier priors, we executed two state-of-the-art RANSAC variants: PROSAC (PS) ~\cite{chum2005matching} and Guided MLESAC (GMLE)~\cite{tordoff2005guided}.
\end{itemize}
All the methods and experiments were implemented in MATLAB and run on a standard desktop machine with 3.5~GHz processor and 8~GB of RAM. For \ep, \admm, $\ell_1$ and $\ell_\infty$, Gurobi was employed as the LP and QP solver.

\subsection{Linear models}

\subsubsection{Linear regression with synthetic data}

We generated $N = 500$ points $\{\bx_j, y_j\}^{N}_{j=1}$ in $\mathbb{R}^9$ following a linear trend $y = \bx^T \btheta$, where $\btheta \in \mathbb{R}^8$ and $\bx_j \in [-1, 1]^8$ were randomly sampled. Each $y_j$ was perturbed by Gaussian noise with standard deviation of $\sigma_{in} = 0.1$. To simulate outliers, $p_{out}\%$ of $y_j$'s were randomly selected and corrupted. To test the ability of our methods to deal with bad initializations, two different outlier settings were considered:
\begin{itemize}[leftmargin=0.6cm,itemsep=2pt,parsep=0pt]
	\item Balanced data: the $y_j$ of outliers were added with Gaussian noise of $\sigma_{out}=1$. This evenly distributed the outliers on both sides of the hyperplane. 
	\item Unbalanced data: as above, but the sign of the additive noise was forced to be positive. Thus, outliers were distributed only on one side of the hyperplane. On such data, the least squares solution is heavily biased.
\end{itemize}
See Fig.~\ref{fig:data2d} for a \emph{2D analogy} of these outlier settings. We tested with $p_{out}=\{0, 5, 10 \dots, 60\}$.  The inlier threshold for maximum consensus was set to $\epsilon = 0.1$.

Our algorithms \ep~and \admm~were initialized respectively with RANSAC (variants \ep-RS and \admm-RS) and least squares (variants \ep-LSQ and \admm-LSQ). For \ep~variants, the initial $\alpha$ was set to $0.5$ and $\kappa = 5$, while initial $\rho$ of \admm~variants was set to $0.1$ and $\sigma=2.5$ for all the runs.

\begin{table*}[ht]
  \centering
  \caption{Fundamental matrix estimation results (with algebraic error)}  
  \resizebox{\linewidth}{!}{
	  \begin{tabular}{|c|c|c|c|c|c|c|c|c|c|c|c|c|c|c|}
	    \hline
	    \multicolumn{2}{|c|}{Methods} & RS   & PS   & GMLE & LORS & CRS & $\ell_1$   & $\ell_{\infty}$ & EP-RS        & EP-LSQ       & EP-$\ell_{\infty}$ & AM-RS & AM-LSQ & AM-$\ell_{\infty}$ \\ \hline
	    \multirow{2}{*}{\begin{tabular}[c]{@{}c@{}}House\\ N = 556\end{tabular}}    & $|\cI|$       & 240 & 245 & 252 & 265 & 267 & 115 & 175 & \textbf{275} & \textbf{275} & \textbf{275} & \textbf{275} & 267 & \textbf{275}\\ 
	                                  & time (s)  & 1.33 & 1.07 & 1.01 & 0.99 & 0.75 & 0.2 & 0.1 & 2.05 & 1.75 & 2.32 & 6.35 & 7.13 & 6.15  \\ \hline
	    \multirow{2}{*}{\begin{tabular}[c]{@{}c@{}}Aerial\\ N = 483\end{tabular}}   & $|\cI|$        & 264 & 265 & 260 & 280 & 287 & 213 & 221 & 290 & 290 & 290 & 295 & 295 & \textbf{300} \\ 
	                                  & time (s) & 0.53 & 0.46 & 0.55 & 0.35 & 0.37 & 0.13 & 0.15 & 1.15 & 0.95 & 1.13 & 4.75 & 6.25 & 7.12   \\ \hline
	    \multirow{2}{*}{\begin{tabular}[c]{@{}c@{}}Merton \\ N = 590\end{tabular}}  & $|\cI|$        & 295 & 295 & 301 & 306 & 306 & 147 & 227 & \textbf{321} & \textbf{321} & \textbf{321} & 307 & 305 & 302 \\ 
	                                  & time (s) & 0.65 & 0.25 & 0.30 & 0.25 & 0.30 & 0.25 & 0.13 & 1.18 & 0.95 & 1.05 & 5.15 & 5.73 & 5.78        \\ \hline
	    \multirow{2}{*}{\begin{tabular}[c]{@{}c@{}}Wadham\\ N = 618\end{tabular}}   & $|\cI|$        & 305 & 307 & 315 & 320 & 325 & 271 & 290 & \textbf{330} & \textbf{330} & \textbf{330} & 310 & \textbf{330} & 315 \\ 
	                                  & time (s) & 1.52 & 1.35 & 1.15 & 1.05 & 1.08 & 0.15 & 0.27 & 2.25 & 1.41 & 1.42 & 8.88 & 7.51 & 6.52     \\ \hline
	    \multirow{2}{*}{\begin{tabular}[c]{@{}c@{}}Corridor\\ N = 684\end{tabular}} & $|\cI|$       & 310 & 310 & 315 & 327 & 330 & 251 & 300 & 375 & \textbf{390} & \textbf{390} & 388 & 375 & \textbf{390} \\ 
	                                  & time (s)  & 0.95 & 1.12 & 0.97 & 0.65 & 0.75 & 0.15 & 0.27 & 2.35 & 1.17 & 1.26 & 6.52 & 5.56 & 7.0      \\ \hline
	    \multirow{2}{*}{\begin{tabular}[c]{@{}c@{}}Building 81\\ N = 525\end{tabular}} & $|\cI|$         & 262 & 267 & 251 & 270 & 277 & 115 & 212 & \textbf{315} & \textbf{315} & \textbf{315} & \textbf{315} & 300 & 300\\ 
	                                  & time (s) & 1.15 & 1.07 & 1.12 & 0.95 & 0.89 & 0.11 & 0.17 & 1.95 & 0.99 & 1.17 & 5.25 & 6.69 & 2.45      \\ \hline
	    \multirow{2}{*}{\begin{tabular}[c]{@{}c@{}}Building 04\\ N = 394\end{tabular}} & $|\cI|$         & 181 & 180 & 175 & 190 & 192 & 97 & 171 & 197 & 197 & 197 & \textbf{200} & 122 & 184 \\ 
	                                  & time (s)   & 1.21 & 1.25 & 1.19 & 1.05 & 2.17 & 0.17 & 0.15 & 2.47 & 1.13 & 1.06 & 10.67 & 7.89 & 9.2      \\ \hline
	    \multirow{2}{*}{\begin{tabular}[c]{@{}c@{}}Building 23\\ N = 699\end{tabular}} & $|\cI|$         & 315 & 308 & 305 & 328 & 327 & 250 & 259 & \textbf{330} & \textbf{330} & \textbf{330} & 323 & 123 & 316 \\ 
	                                  & time (s) & 1.45 & 1.44 & 1.96 & 1.24 & 1.15 & 0.15 & 0.11 & 3.17 & 2.06 & 2.89 & 7.97 & 5.85 & 5.02      \\ \hline
	    \multirow{2}{*}{\begin{tabular}[c]{@{}c@{}}Building 36\\ N = 651\end{tabular}} & $|\cI|$         & 275 & 275 & 280 & 290 & 295 & 159 & 220 & \textbf{320} & \textbf{320} & \textbf{320} & 315 & \textbf{320} & 315 \\ 
	                                  & time (s)  & 1.62 & 1.59 & 1.71 & 1.05 & 1.12 & 0.15 & 0.11 & 2.61 & 1.42 & 1.36 & 5.39 & 7.46 & 8.71      \\ \hline
	  \end{tabular}
  }
   \label{table:fundamental_result}
\end{table*}

\begin{table*}[ht]
  \centering
  \caption{Homography estimation results (with algebraic error)}    
  \resizebox{\linewidth}{!}{
	  \begin{tabular}{|c|c|c|c|c|c|c|c|c|c|c|c|c|c|c|}
	    \hline
	    \multicolumn{2}{|c|}{Methods} & RS   & PS   & GMLE & LORS & CRS & $\ell_1$   & $\ell_{\infty}$ & EP-RS        & EP-LSQ       & EP-$\ell_{\infty}$ & AM-RS & AM-LSQ & AM-$\ell_{\infty}$ \\ \hline
	    \multirow{2}{*}{\begin{tabular}[c]{@{}c@{}}University Library\\ N = 439\end{tabular}}    & $|\cI|$       & 220 & 221 & 215 & 230 & 229 & 157 & 191 & \textbf{295} & \textbf{295} & \textbf{295} & 280 & 290 & \textbf{295} \\ 
	                                  & time (s) & 1.15 & 1.27 & 1.05 & 1.02 & 0.97 & 0.15 & 0.25 & 2.79 & 1.09 & 0.97 & 9.19 & 14.25 & 7.81         \\ \hline
	    \multirow{2}{*}{\begin{tabular}[c]{@{}c@{}}Christ Church\\ N = 524\end{tabular}}   & $|\cI|$       & 259 & 262 & 265 & 273 & 277 & 267 & 251 & 315 & 315 & 315 & \textbf{317} & 311 & \textbf{315} \\ 
	                                  & time (s)  & 1.15 & 1.12 & 1.01 & 1.19 & 1.05 & 0.09 & 0.15 & 2.99 & 1.78 & 1.91 & 9.79 & 8.46 & 15.21
	                                     \\ \hline
	    \multirow{2}{*}{\begin{tabular}[c]{@{}c@{}}Kapel\\ N = 449 \end{tabular}}  & $|\cI|$        & 156 & 155 & 162 & 165 & 160 & 95 & 115 & \textbf{210} & \textbf{210} & \textbf{210} & 200 & 201 & 205 \\ 
	                                  & time (s) & 1.18 & 1.12 & 1.18 & 1.44 & 1.65 & 0.11 & 0.07 & 2.22 & 1.32 & 1.29 & 10.41 & 9.74 & 11.01        \\ \hline
	                                  
		 \multirow{2}{*}{\begin{tabular}[c]{@{}c@{}}Invalides\\ N = 558\end{tabular}} & $|\cI|$         & 178 & 170 & 169 & 180 & 185 & 117 & 107 & 230 & 230 & 230 & \textbf{231} & 229 & 229 \\ 
		& time (s)  & 2.01 & 2.76 & 1.79 & 1.85 & 1.55 & 0.09 & 0.07 & 3.35 & 3.01 & 4.15 & 10.2 & 9.81 & 10.47     \\ \hline                                  
		
	    \multirow{2}{*}{\begin{tabular}[c]{@{}c@{}}Union House\\N = 520\end{tabular}}   & $|\cI|$       & 221 & 225 & 227 & 220 & 230 & 185 & 210 & \textbf{290} & \textbf{290} & \textbf{290} & \textbf{290} & \textbf{290} & 287\\ 
	                                  & time (s) & 1.16 & 1.16 & 1.05 & 1.09 & 1.08 & 0.07 & 0.05 & 2.4 & 1.43 & 1.23 & 7.41 & 8.23 & 8.85      \\ \hline
	    \multirow{2}{*}{\begin{tabular}[c]{@{}c@{}}Old Classic Wing\\ N = 561\end{tabular}} & $|\cI|$       & 206 & 206 & 211 & 215 & 214 & 181 & 187 & \textbf{250} & \textbf{250} & \textbf{250} & 229 & \textbf{250} & \textbf{250} \\ 
	                                  & time (s)  & 1.95 & 1.86 & 1.88 & 1.15 & 1.10 & 0.07 & 0.07 & 2.19 & 1.14 & 1.27 & 6.36 & 3.35 & 5.51     \\ \hline
	    \multirow{2}{*}{\begin{tabular}[c]{@{}c@{}}Ball Hall\\ N = 538\end{tabular}} & $|\cI|$        & 170 & 177 & 175 & 188 & 182 & 110 & 187 & \textbf{215} & \textbf{215} & \textbf{215} & 209 & 202 & 200\\ 
	                                  & time (s) & 1.85 & 1.77 & 1.16 & 1.53 & 1.43 & 0.04 & 0.06 & 3.39 & 2.27 & 2.78 & 9.64 & 7.47 & 10.74     \\ \hline
	    \multirow{2}{*}{\begin{tabular}[c]{@{}c@{}}Building 64\\ N = 529\end{tabular}} & $|\cI|$         & 185 & 187 & 184 & 190 & 197 & 100 & 112 & \textbf{233} & \textbf{233} & \textbf{233} & 216 & 211 & 215 \\ 
	                                  & time (s)    & 1.75 & 1.56 & 1.22 & 1.56 & 0.99 & 0.09 & 0.05 & 2.86 & 1.49 & 2.01 & 6.44 & 8.61 & 6.87      \\ \hline
	    \multirow{2}{*}{\begin{tabular}[c]{@{}c@{}}Building 10\\ N = 546\end{tabular}} & $|\cI|$       & 210 & 215 & 217 & 222 & 227 & 191 & 178 & 250 & 250 & 250 & \textbf{251} & 250 & 250 \\ 
	                                  & time (s) & 0.09 & 0.12 & 0.1 & 0.31 & 0.43 & 0.06 & 0.05 & 4.14 & 4.08 & 4.15 & 8.56 & 8.1 & 8.92     \\ \hline
	  \end{tabular}
  }
   \label{table:linear_homography_result}
\end{table*}

\begin{table*}
	\centering
	\caption{Homography estimation results (with geometric transfer error)}
	\resizebox{\linewidth}{!}{
		\begin{tabular}{|c|c|c|c|c|c|c|c|c|c|c|c|c|}
			\hline
			\multicolumn{2}{|c|}{Methods} & RS & PS & GMLE & LORS & LORS1 & $\ell_1$ & $\ell_\infty$ & EP-RS & EP-$\ell_\infty$ & AM-RS & AM-$\ell_\infty$ \\ \hline		
			\multirow{2}{*}{\begin{tabular}[c]{@{}c@{}}University Library\\ N = 439\end{tabular}} & $|\cI|$& 136 & 150 & 149 & 155 & 157 & 97 & 86 & \textbf{210} & \textbf{210} & 195 & 205 	\\	 
			& time (s) & 2.53 & 2.451 & 2.41 & 2.52 & 2.41 & 1.06 & 1.65 & 7.53 & 5.32 & 10.95 & 9.85 \\ \hline   
			
			\multirow{2}{*}{\begin{tabular}[c]{@{}c@{}}Christ Church\\ N = 539\end{tabular}} & $|\cI|$ & 125 & 127 & 130 & 125 & 129 &  101 & 120 & \textbf{186} & \textbf{186} & 175 & \textbf{186} 	\\	 
			& time (s)  & 2.79 & 2.52 & 2.5 & 2.44 & 2.53 & 1.35 & 2.09 & 8.95 & 6.93 & 16.82 & 18.16 \\ \hline   
			
			\multirow{2}{*}{\begin{tabular}[c]{@{}c@{}}Kapel\\ N = 543 \end{tabular}} & $|\cI|$ & 160 & 167 & 160 & 160 & 157 & 110 & 104 & \textbf{175} & \textbf{175} & 169 & 168 \\	 
			& time (s)  & 2.84 & 2.11 & 3.87 & 2.31 & 2.68 & 2.7 & 2.07 & 7.44 & 9.32 & 13.17 & 11.61\\ \hline 		
		
			\multirow{2}{*}{\begin{tabular}[c]{@{}c@{}}Invalides\\ N = 558 \end{tabular}} & $|\cI|$ & 161 & 161 & 148 & 174 & 174 & 13 & 126 & \textbf{187} & \textbf{187} & 177 & 176 \\	 
			& time (s)  & 4.29 & 3.92 & 5.93 & 4.31 & 8.01 & 2.9 & 1.42 & 7.92 & 5.51 & 12.33 & 11.44\\ \hline 		
			
			\multirow{2}{*}{\begin{tabular}[c]{@{}c@{}}Union House\\ N = 520 \end{tabular}} & $|\cI|$ & 213 & 213 & 199 & 224 & 230 & 14 & 65 & 231 & 231 & \textbf{232} & 208 \\	 
			& time (s)  & 1.56 & 1.64 & 2.5 & 3.27 & 3.51 & 3.72 & 1.78 & 2.84 & 3.59 & 7.73 & 7.35\\ \hline 		
	
			\multirow{2}{*}{\begin{tabular}[c]{@{}c@{}}Old Classic Wing\\ N = 557 \end{tabular}} & $|\cI|$ & 198 & 208 & 126 & 209 & 210 & 52 & 147 & \textbf{216} & 206 & 210 & 197 \\	 
			& time (s)  & 1.85 & 1.47 & 2.57 & 3.32 & 3.96 & 2.77 & 1.47 & 5.29 & 7.57 & 9.06 & 10.23\\ \hline
		
		\multirow{2}{*}{\begin{tabular}[c]{@{}c@{}}Ball Hall\\ N = 534 \end{tabular}} & $|\cI|$ & 225 & 227 & 221 & 227 & 230 & 195 & 186 & \textbf{250} & \textbf{250} & 247 & 247 \\	 
		& time (s)   & 1.35 & 1.37 & 1.29 & 1.33 & 1.34 & 0.57 & 1.05 & 3.47 & 2.95 & 6.45 & 7.35\\ \hline 		
		
		\multirow{2}{*}{\begin{tabular}[c]{@{}c@{}}Building 64\\ N = 427 \end{tabular}} & $|\cI|$ & 123 & 128 & 100 & 135 & 133 & 73 & 82 & \textbf{142} & \textbf{142} & \textbf{142} & \textbf{142} \\	 
		& time (s)   & 3.27 & 2.56 & 10.11 & 3.63 & 5.93 & 1.17 & 0.99 & 6.95 & 7.54 & 10.07 & 9.05\\ \hline 		
		
		\multirow{2}{*}{\begin{tabular}[c]{@{}c@{}}Building 10\\ N = 525 \end{tabular}} & $|\cI|$ & 201 & 225 & 210 & 215 & 226 & 176 & 165 & \textbf{229} & \textbf{229} & 226 & 210 \\	 
		& time (s)  & 1.48 & 1.48 & 0.95 & 1.46 & 1.38 & 1.14 & 1.71 & 6.66 & 7.59 & 12.56 & 9.48 \\ \hline
		
		\multirow{2}{*}{\begin{tabular}[c]{@{}c@{}}Building 15\\ N = 596 \end{tabular}} & $|\cI|$ & 215 & 217 & 221 & 225 & 232 & 240 & 245 & \textbf{260} & \textbf{260} & \textbf{260} & \textbf{260} \\	 
		& time (s) & 1.94 & 1.82 & 1.78 & 1.65 & 1.67 & 1.62 & 1.17 & 5.39 & 4.56 & 9.31 & 11.08 \\ \hline
			
		\end{tabular}
	}
	\label{table:homography_results}
\end{table*}

\begin{table*}
	\centering
	\caption{Affinity estimation results (with geometric transfer error)}
	\resizebox{\linewidth}{!}{
		\begin{tabular}{|c|c|c|c|c|c|c|c|c|c|c|c|c|}
			\hline
			\multicolumn{2}{|c|}{Methods} & RS & PS & GMLE & LORS & LORS1 & $\ell_1$ & $\ell_\infty$ & EP-RS & EP-$\ell_\infty$ & AM-RS & AM-$\ell_\infty$ \\ \hline		
			\multirow{2}{*}{\begin{tabular}[c]{@{}c@{}}Bikes \\ N = 518\end{tabular}} & $|\cI|$  & 410 & 410 & 410 & 411 & 410 & 412 & 415 & \textbf{421} & \textbf{421} & 417 & 417 	\\	 
			& time (s) & 5.94 & 5.86 & 5.6 & 8.23 & 13.42 & 4.52 & 0.97 & 15.21 & 7.76 & 10.42 & 5.65 \\ \hline   
			
			\multirow{2}{*}{\begin{tabular}[c]{@{}c@{}} Tree \\ N = 465\end{tabular}} & $|\cI|$ & 286 & 288 & 289 & 287 & 286 & 301 & 278 & \textbf{311} & \textbf{311} & 305 & 307 	\\	 
			& time (s)& 5.94 & 5.86 & 5.6 & 8.23 & 13.42 & 4.52 & 0.97 & 15.21 & 7.76 & 10.42 & 5.65 \\ \hline   
			
			\multirow{2}{*}{\begin{tabular}[c]{@{}c@{}}Boat\\ N = 402 \end{tabular}} & $|\cI|$ & 308 & 311 & 304 & 310 & 308 & 330 & 330 & \textbf{340} & \textbf{340} & 325 & 330 \\	 
			& time (s)  & 5.61 & 5.63 & 5.31 & 6.62 & 10.91 & 2.46 & 0.88 & 10.34 & 5.59 & 10.12 & 5.05\\ \hline 		
		
			\multirow{2}{*}{\begin{tabular}[c]{@{}c@{}}Graff\\ N = 331 \end{tabular}} & $|\cI|$ & 140 & 141 & 142 & 141 & 140 & 304 & 308 & \textbf{313} & \textbf{313} & 308 & 308 \\	 
			& time (s)   & 4.95 & 4.7 & 4.32 & 5.91 & 9.34 & 1.39 & 0.39 & 10.82 & 6.26 & 17.18 & 11.7\\ \hline 		
			
			\multirow{2}{*}{\begin{tabular}[c]{@{}c@{}} Bark\\ N = 219 \end{tabular}} & $|\cI|$  & 194 & 195 & 195 & 194 & 194 & 200 & \textbf{203} & \textbf{203} & \textbf{203} & 202 & \textbf{203} \\	 
			& time (s)   & 3.01 & 3.06 & 3.41 & 3.42 & 5.61 & 0.32 & 0.32 & 3.86 & 1.17 & 14.21 & 14.49\\ \hline
			
			\multirow{2}{*}{\begin{tabular}[c]{@{}c@{}} Building 143\\ N = 537 \end{tabular}} & $|\cI|$  & 94 & 93 & 91 & 99 & 94 & 338 & 331 & 342 & 342 & \textbf{349} & 347 \\	 
			& time (s)  & 7.97 & 8.19 & 8.02 & 9.52 & 15.41 & 5.62 & 2.55 & 16.6 & 10.28 & 34.77 & 33.12\\ \hline
			
			\multirow{2}{*}{\begin{tabular}[c]{@{}c@{}} Building 152\\ N = 469 \end{tabular}} & $|\cI|$  & 198 & 192 & 173 & 211 & 198 & 221 & 228 & \textbf{281} & \textbf{281} & 277 & 277 \\	 
			& time (s) & 6 & 6 & 5.71 & 7.71 & 11.67 & 3.16 & 1.71 & 12.41 & 7.75 & 28.2 & 24.03\\ \hline  		  		
			
			\multirow{2}{*}{\begin{tabular}[c]{@{}c@{}} Building 163\\ N = 617 \end{tabular}} & $|\cI|$  & 306 & 308 & 303 & 307 & 306 & 402 & 399 & \textbf{437} & \textbf{437} & 431 & 430 \\	 
			& time (s)  & 7.85 & 7.82 & 7.58 & 8.93 & 15.3 & 8.06 & 3.37 & 16.93 & 11.64 & 21.93 & 17.04\\ \hline  		  		
			
			\multirow{2}{*}{\begin{tabular}[c]{@{}c@{}} Building 170\\ N = 707 \end{tabular}} & $|\cI|$  & 315 & 311 & 311 & 318 & 315 & 455 & 412 & \textbf{538} & \textbf{538} & 524 & 525 \\	 
			& time (s) & 9.48 & 9.46 & 9.25 & 11.65 & 18.72 & 11.24 & 2.18 & 31.66 & 23.73 & 61.65 & 57.71\\ \hline  		  		
			
			\multirow{2}{*}{\begin{tabular}[c]{@{}c@{}} Building 174\\ N = 580 \end{tabular}} & $|\cI|$ & 339 & 338 & 339 & 341 & 339 & 334 & 312 & 369 & 369 & \textbf{375} & 374 \\	 
			& time (s) & 7.8 & 7.73 & 7.4 & 9.78 & 15.13 & 5.94 & 1.89 & 17.92 & 11.77 & 50.48 & 38.63\\ \hline  		  		
		\end{tabular}
	}
	\label{table:affine}
\end{table*}

\begin{table*}[ht]
  \centering
  \caption{Triangulation results (with geometric transfer error)}  
  \begin{tabular}{|c|c|c|c|c|c|c|c|c|c|c|}
    \hline
	\multicolumn{2}{|c|}{Methods} & RS & LORS & LORS1 & $\ell_1$   & $\ell_{\infty}$ & EP-RS  & EP-$\ell_{\infty}$ & AM-RS &  AM-$\ell_{\infty}$ \\ \hline
	
	\multirow{2}{*}{\begin{tabular}[c]{@{}c@{}}Point 1\\ N = 167\end{tabular}}    & $|\cI|$        & 95 & 96 & 95 & 96 & 81 & \textbf{97} & \textbf{97} & \textbf{97} & 96    \\ 
	& time (s)  & 0.22 & 0.47 & 0.38 & 0.8 & 2.04 & 3.69 & 4.34 & 1.81 & 3.63     \\ \hline
	
	\multirow{2}{*}{\begin{tabular}[c]{@{}c@{}}Point 3\\ N = 145\end{tabular}}    & $|\cI|$       & 82 & 84 & 82 & 79 & 53 & 85 & 84 & \textbf{86} & 77    \\ 
	& time (s)  & 0.15 & 0.32 & 0.29 & 0.16 & 1.16 & 1.92 & 2.97 & 2.04 & 2.5     \\ \hline
	
	\multirow{2}{*}{\begin{tabular}[c]{@{}c@{}}Point 9\\ N = 135\end{tabular}}    & $|\cI|$       & 49 & 51 & 49 & 30 & 38 & \textbf{52} & 49 & \textbf{52} & 47    \\ 
	& time (s) & 0.16 & 0.39 & 0.28 & 0.14 & 0.84 & 2.37 & 3.08 & 1.42 & 4.37     \\ \hline
	
	\multirow{2}{*}{\begin{tabular}[c]{@{}c@{}}Point 15\\ N = 140\end{tabular}}    & $|\cI|$       & 50 & 53 & 50 & 43 & 38 & 53 & 46 & \textbf{55} & 41    \\ 
	& time (s) & 0.15 & 0.36 & 0.27 & 0.24 & 1.14 & 2.63 & 3.52 & 1.4 & 4.16    \\ \hline
	
	\multirow{2}{*}{\begin{tabular}[c]{@{}c@{}}Point 24\\ N = 155\end{tabular}}    & $|\cI|$       & 110 & 113 & 110 & 113 & 111 & 113 & 113 & \textbf{114} & \textbf{114}    \\ 
	& time (s)  & 0.17 & 0.34 & 0.31 & 0.13 & 0.44 & 2.24 & 2.59 & 1.67 & 1.93    \\ \hline
	
	\multirow{2}{*}{\begin{tabular}[c]{@{}c@{}}Point 72\\ N = 104\end{tabular}}    & $|\cI|$       & 38 & 39 & 38 & 37 & 35 & \textbf{41} & \textbf{41} & \textbf{41} & 39    \\ 
	& time (s)  & 0.12 & 0.29 & 0.21 & 0.08 & 0.54 & 1.15 & 1.53 & 1.12 & 1.57   \\ \hline
	
	\multirow{2}{*}{\begin{tabular}[c]{@{}c@{}}Point 82\\ N = 118\end{tabular}}    & $|\cI|$      & 56 & 58 & 56 & 55 & 48 & 59 & 59 & \textbf{60} & 55   \\ 
	& time (s)  & 0.13 & 0.33 & 0.23 & 0.09 & 0.4 & 1.43 & 1.82 & 1.22 & 1.48   \\ \hline
	
	\multirow{2}{*}{\begin{tabular}[c]{@{}c@{}}Point 192\\ N = 123\end{tabular}}    & $|\cI|$      & 89 & 90 & 89 & 92 & 87 & 91 & 91 & \textbf{93} & 92   \\ 
	& time (s)  & 0.14 & 0.27 & 0.26 & 0.09 & 0.39 & 1.15 & 1.41 & 1.27 & 1.51   \\ \hline       
	
	\multirow{2}{*}{\begin{tabular}[c]{@{}c@{}}Point 193\\ N = 132\end{tabular}}    & $|\cI|$     & 113 & 114 & 113 & 111 & 113 & \textbf{117} & \textbf{117} & 116 & \textbf{117}   \\ 
	& time (s)  & 0.14 & 0.28 & 0.26 & 0.09 & 0.45 & 0.99 & 1.28 & 1.29 & 1.67   \\ \hline      
	
	\multirow{2}{*}{\begin{tabular}[c]{@{}c@{}}Point 249\\ N = 124\end{tabular}}    & $|\cI|$     & 93 & \textbf{94} & 93 & 93 & 90 & \textbf{94} & 92 & \textbf{94} & 92   \\ 
	& time (s) & 0.13 & 0.27 & 0.24 & 0.1 & 0.36 & 1.59 & 1.84 & 1.31 & 1.61   \\ \hline      
  \end{tabular}
   \label{table:triangulation_results}
\end{table*}

Fig.~\ref{fig:synthetic_results} shows the average consensus size at termination and runtime (in log scale) of the methods. Note that runtime of RS and LSQ were included in the runtime of \ep-RS, \admm-RS, \ep-LSQ and \admm-LSQ, respectively. It is clear that, in terms of solution quality, the variants of \ep~and \admm~consistently outperformed the other methods. The fact that \ep-LSQ could match the quality of \ep-RS on unbalanced data attest to the ability of \ep~to recover from bad initializations. In terms of runtime, while both \ep~variants were slightly more expensive than the RANSAC variants, as $p_{out}$ increased over 35\%, \ep-LSQ began to outperform the RANSAC variants (since \ep-RS was initialized using RANSAC, its runtime also increased with $p_{out}$). \admm~variants were also able to obtain roughly the same quality as~\ep-based methods, albeit with longer runtime. This is explainable as~\admm~requires solving quaratic subproblems while only LPs are required for~\ep~variants. 

\subsubsection{Fundamental matrix estimation (with algebraic error)}

Following~\cite[Chapter 11]{hartley2003multiple}, the epipolar constraint is linearized to enable the fundamental matrix to be estimated linearly (note that the usual geometric distances for fundamental matrix estimation do not have the generalized fractional form~\eqref{quasi-convex}, thus linearization is essential to enable our method. Sec.~\ref{sec:exp-geo} will describe results for model estimation with geometric distances).

Five image pairs from the VGG dataset\footnote{http://www.robots.ox.ac.uk/~vgg/data/} (Corridor, House, Merton II, Wadham and Aerial View I) and four image pairs from the Zurich Building data set\footnote{\url{http://www.vision.ee.ethz.ch/showroom/zubud/}} (Building 04, Building 23, Building 36, Building 50 and Building 81) were used. The images were first resized before SIFT (as implemented on VLFeat~\cite{vedaldi2010vlfeat}) was used to extract around 500 correspondences per pair. To increase the outlier ratio, $20-30\%$ of the correspondences are randomly corrupted. An inlier threshold of $\epsilon = 1$ was used for all image pairs. For \ep~and \admm, apart from initialization with RANSAC and least squares, we also initialised it with $\ell_{\infty}$ outlier removal (variants \ep-$\ell_{\infty}$ and \admm-$\ell_{\infty}$). For all \ep~variants, the initial $\alpha$ was set to $0.5$ and $\kappa = 5$, while initial $\rho$ for all \admm~variants was set to $0.1$ and $\sigma=2.5$ for all the runs.

Table~\ref{table:fundamental_result} summarizes the quantitative results for all methods. Regardless of the initialization method, \ep~was able to find the largest consensus set. \admm~ variants converge to approximately the same solution quality as \ep~while taking slightly longer runtime. Fig.~\ref{fig:image_result} displays sample qualitative results for \ep. 

\subsubsection{Homography estimation (with algebraic error)}
\begin{figure*}[ht]\centering	
	\subfigure[Corridor.]{\includegraphics[width=0.3\textwidth]{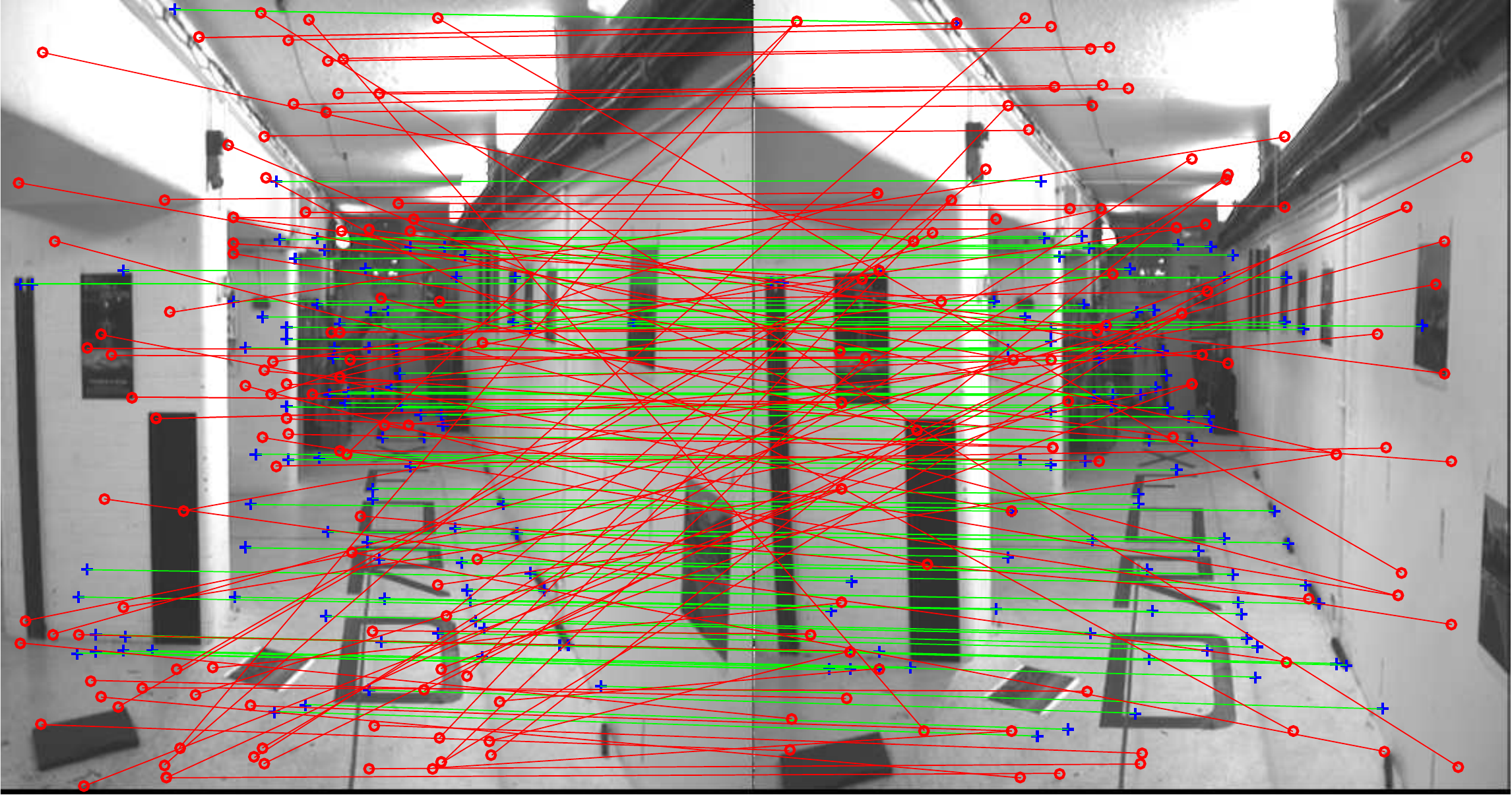}\label{subfig:fund_corridor}}
	\subfigure[House.]{\includegraphics[width=0.3\textwidth]{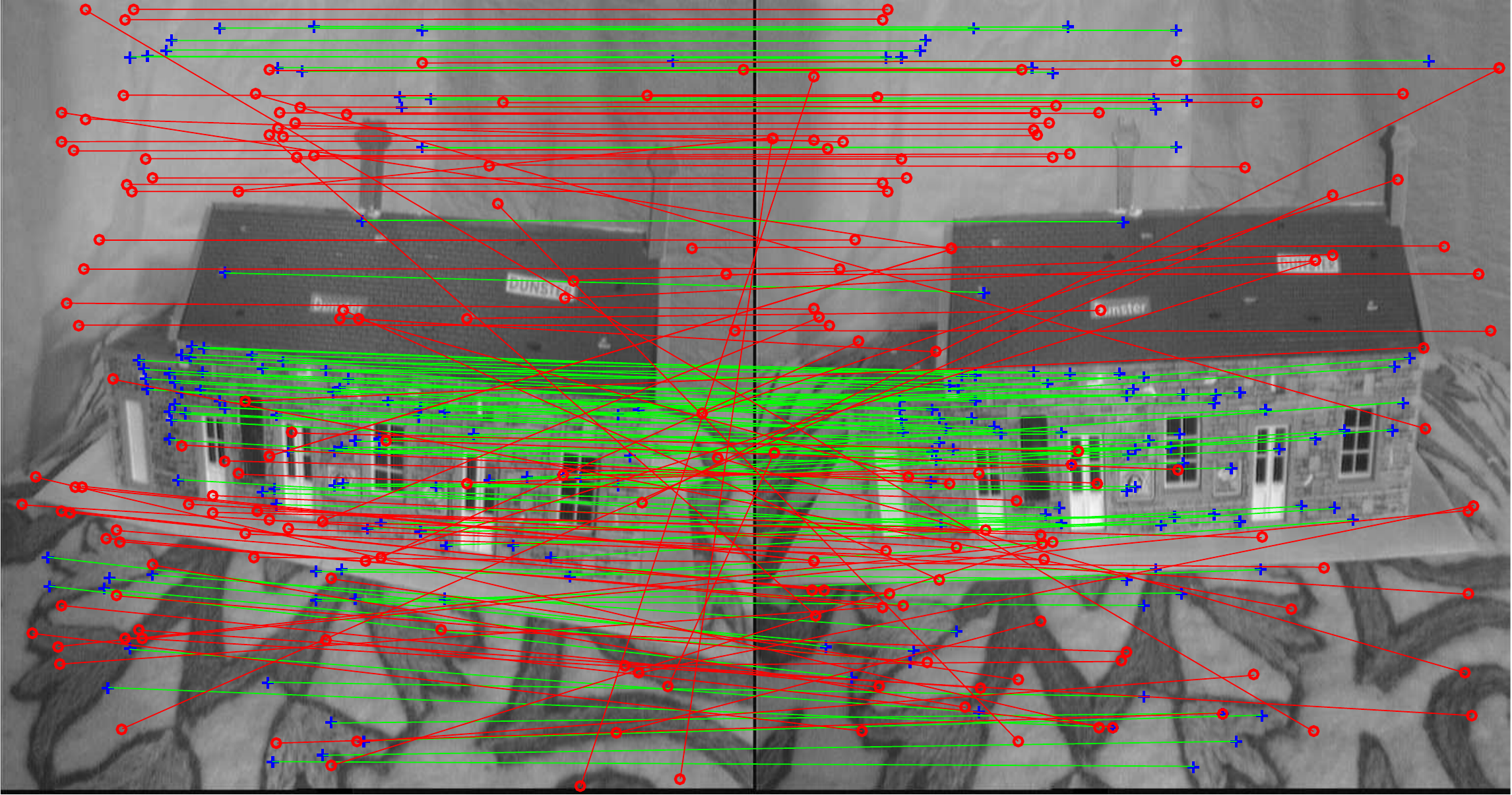}\label{subfig:fund_house}}
	\subfigure[Merton.]{\includegraphics[width=0.3\textwidth]{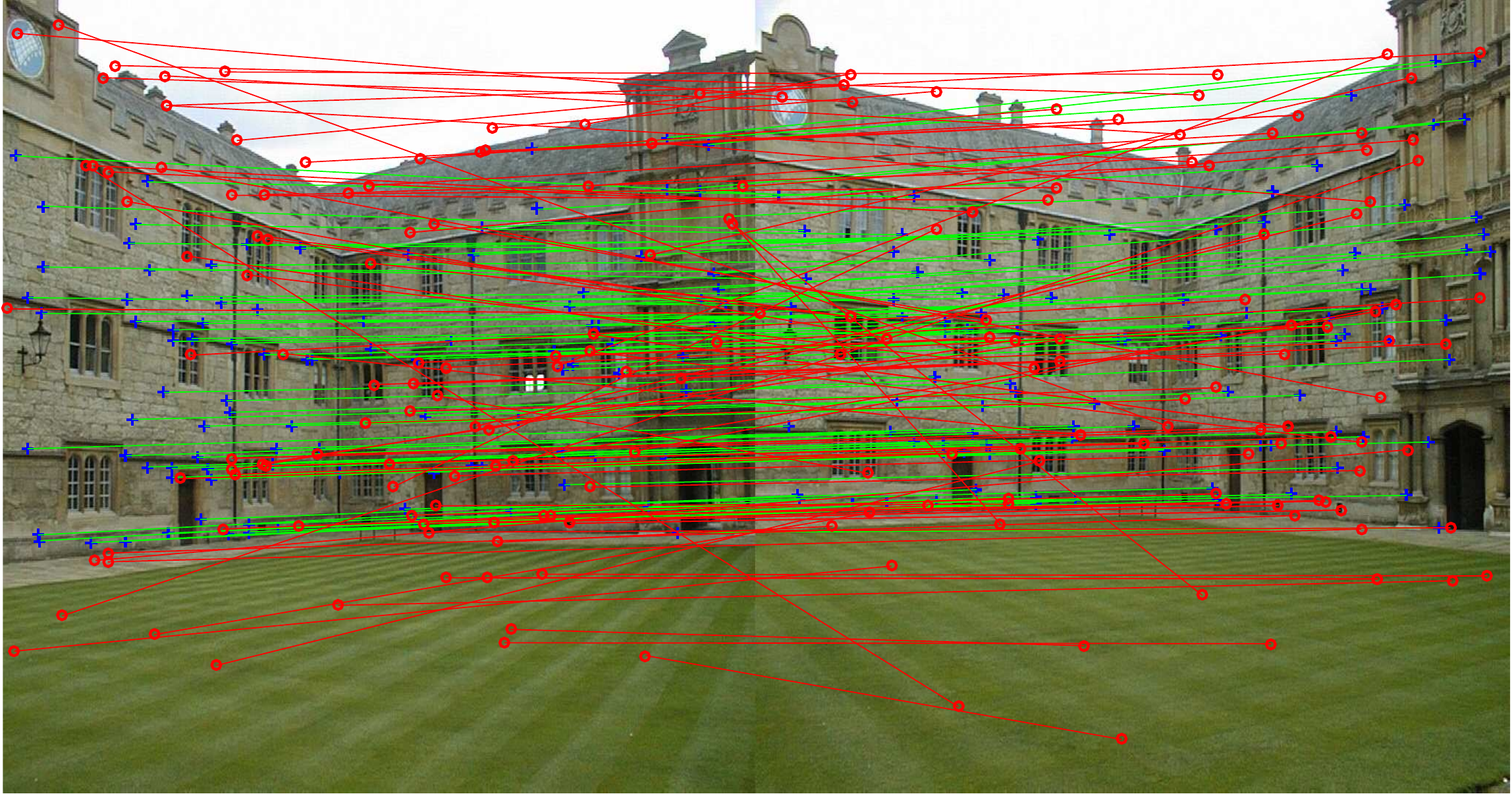}\label{subfig:fund_merton}}
	
	\subfigure[Union House. .]{\includegraphics[width=0.3\textwidth]{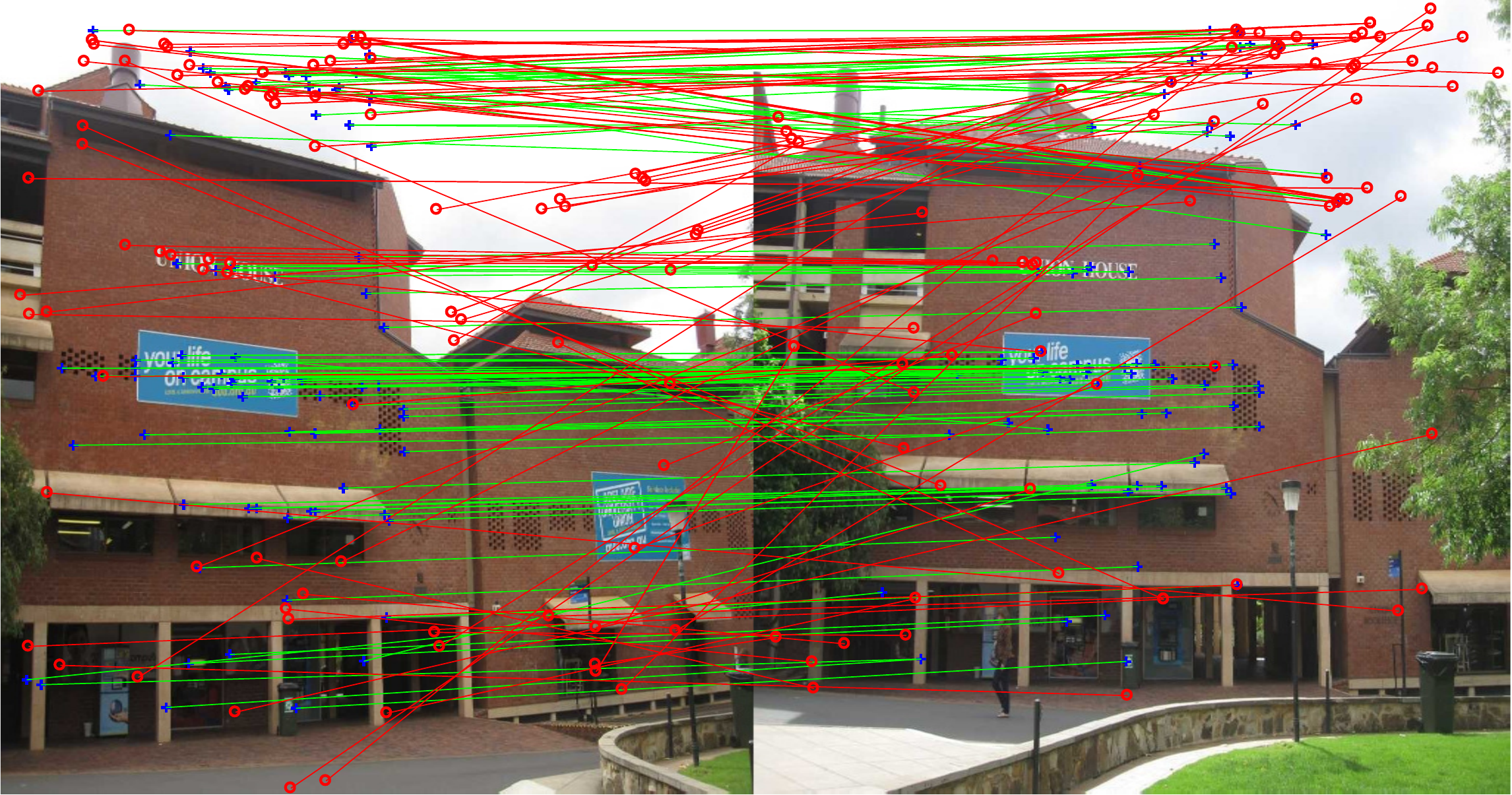}\label{subfig:linearhomo_unionhouse}}
	\subfigure[Building 64.]{\includegraphics[width=0.3\textwidth]{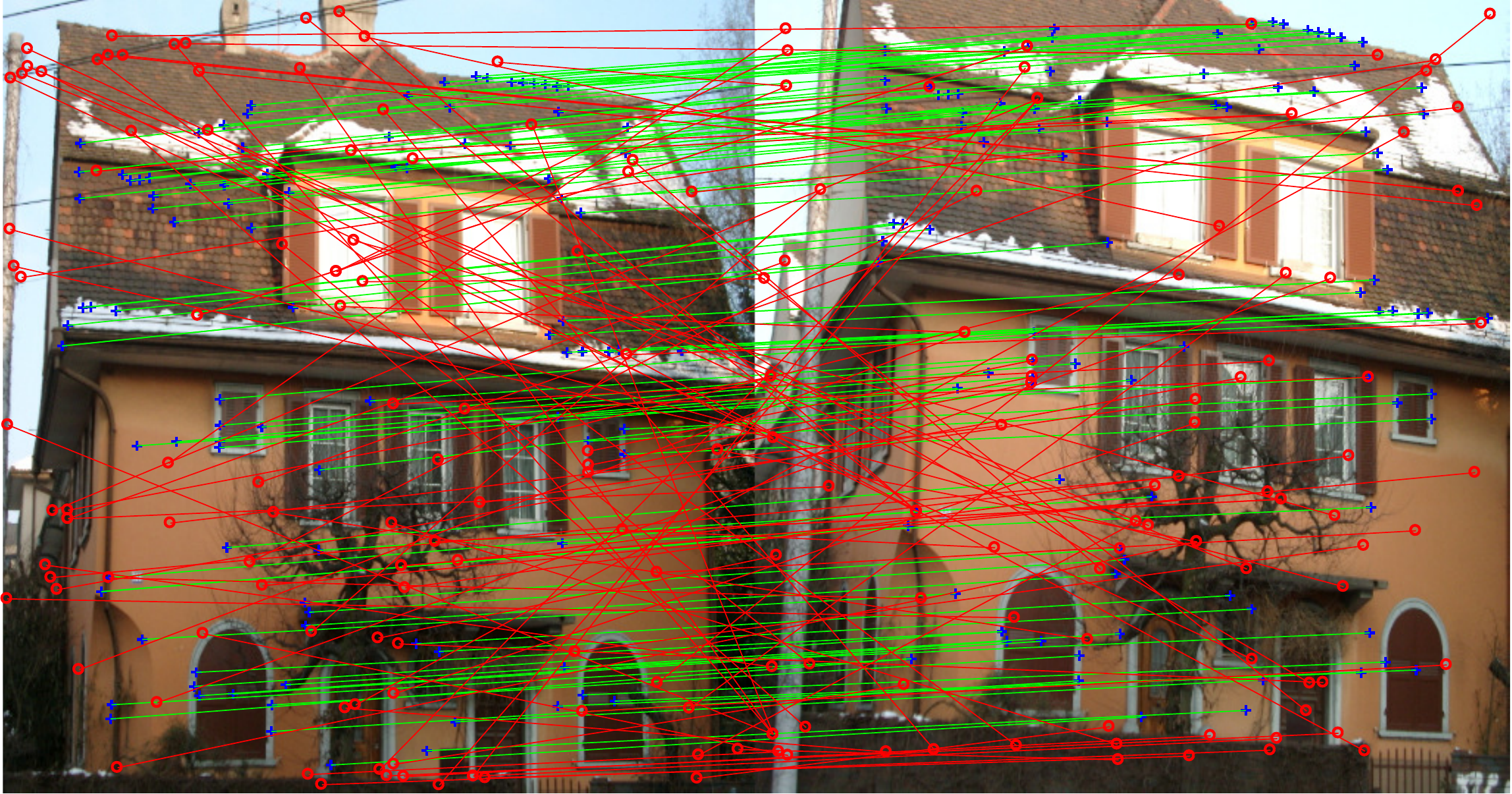}\label{subfig:linearhomo_building64}}
	\subfigure[Building 10.]{\includegraphics[width=0.3\textwidth]{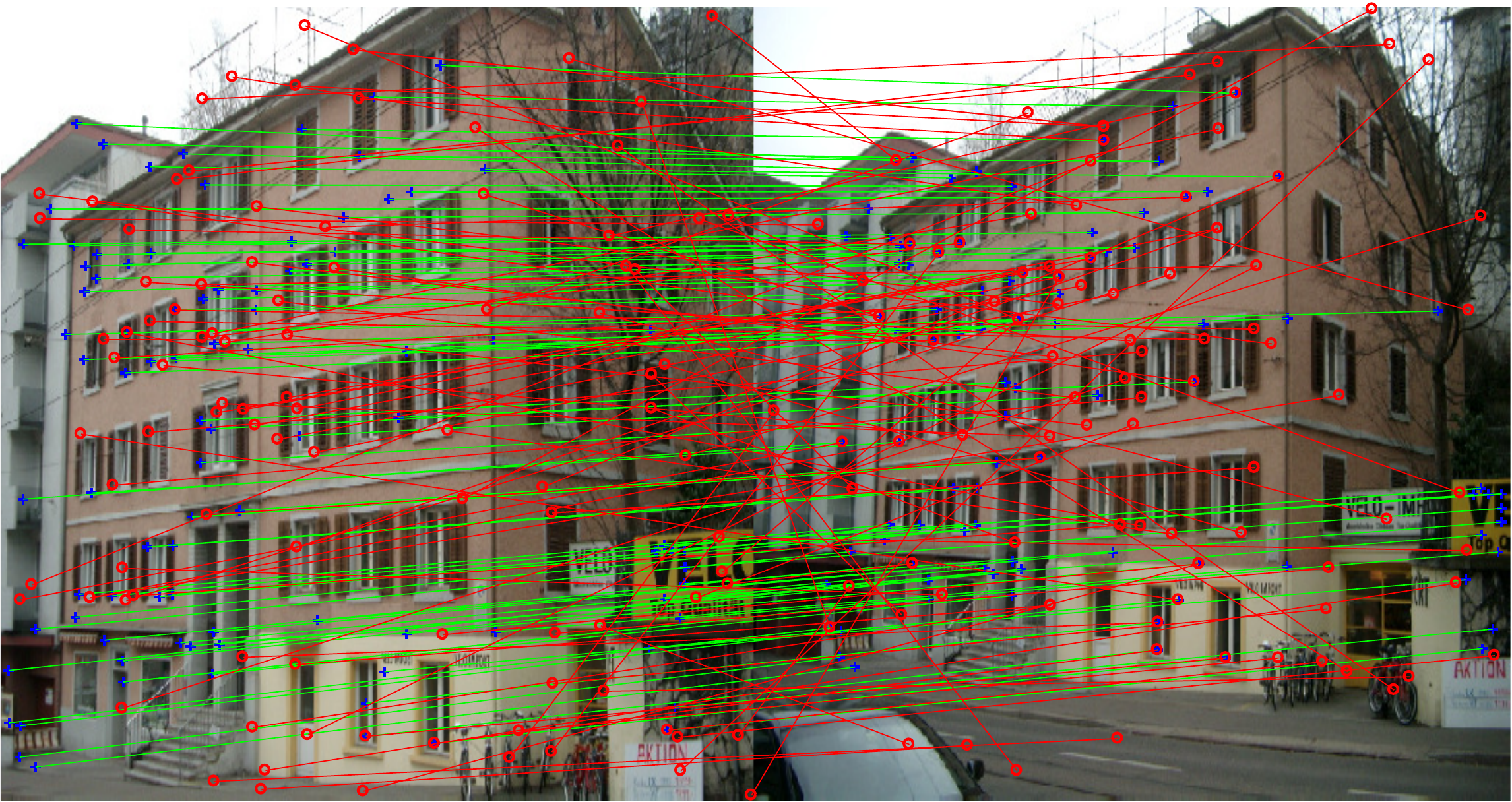}\label{subfig:linearhomo_building10}}
	
	\subfigure[Christ College Oxford.]{\includegraphics[width=0.3\textwidth]{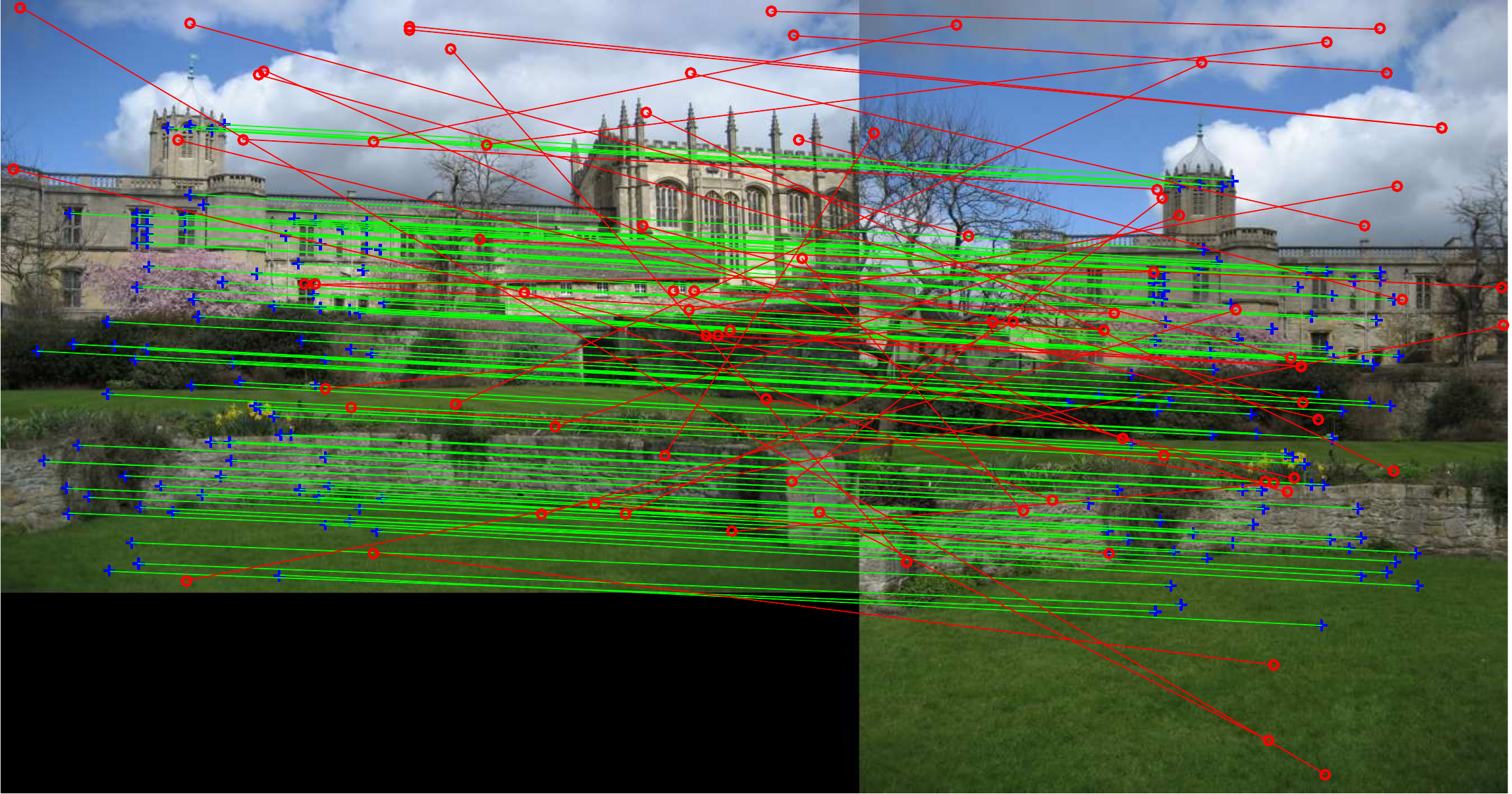}\label{subfig:homo_church}}
	\subfigure[Paris Invalides.]{\includegraphics[width=0.3\textwidth]{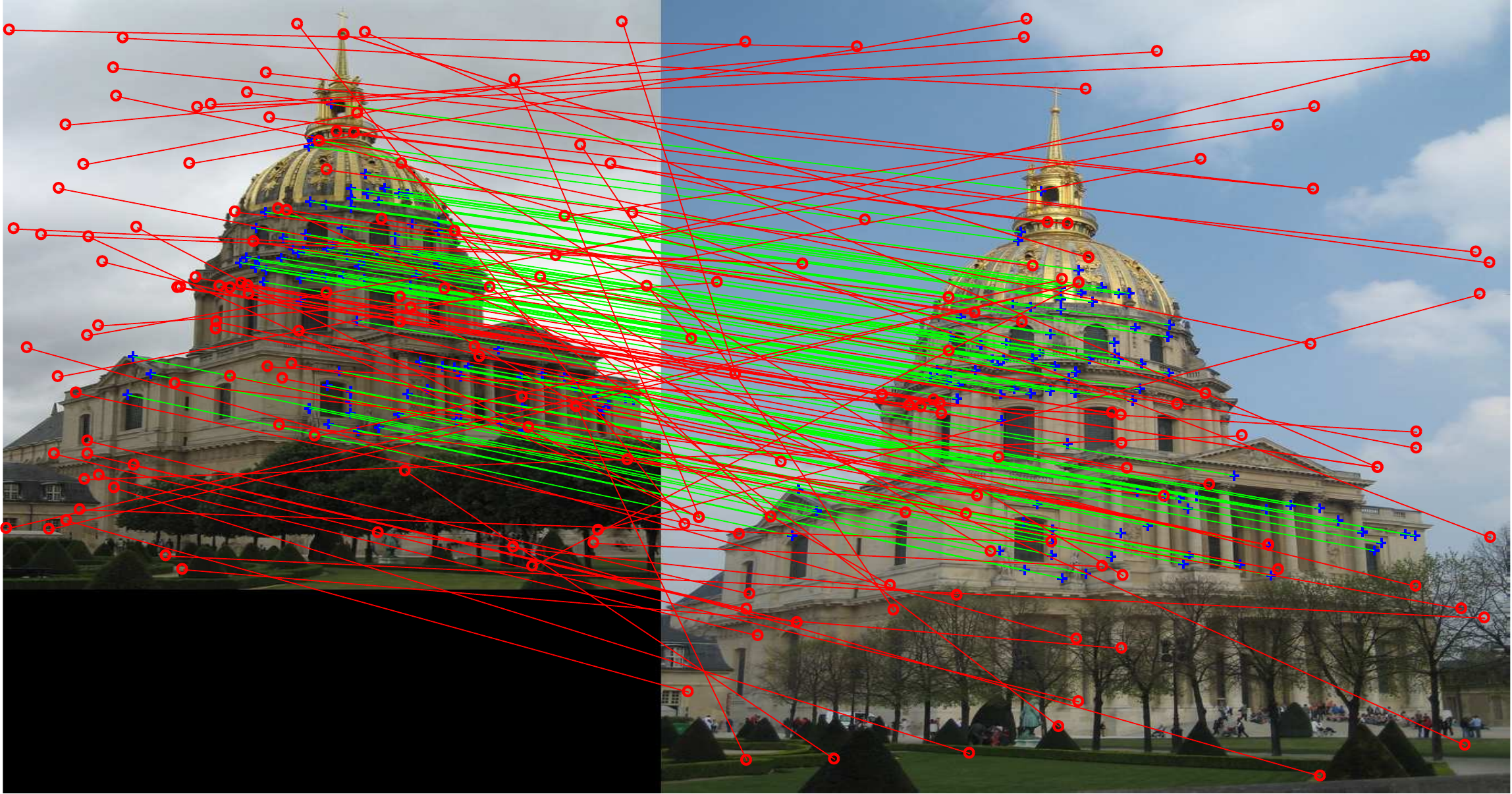}\label{subfig:homo_invalides}}
	\subfigure[University Library.]{\includegraphics[width=0.3\textwidth]{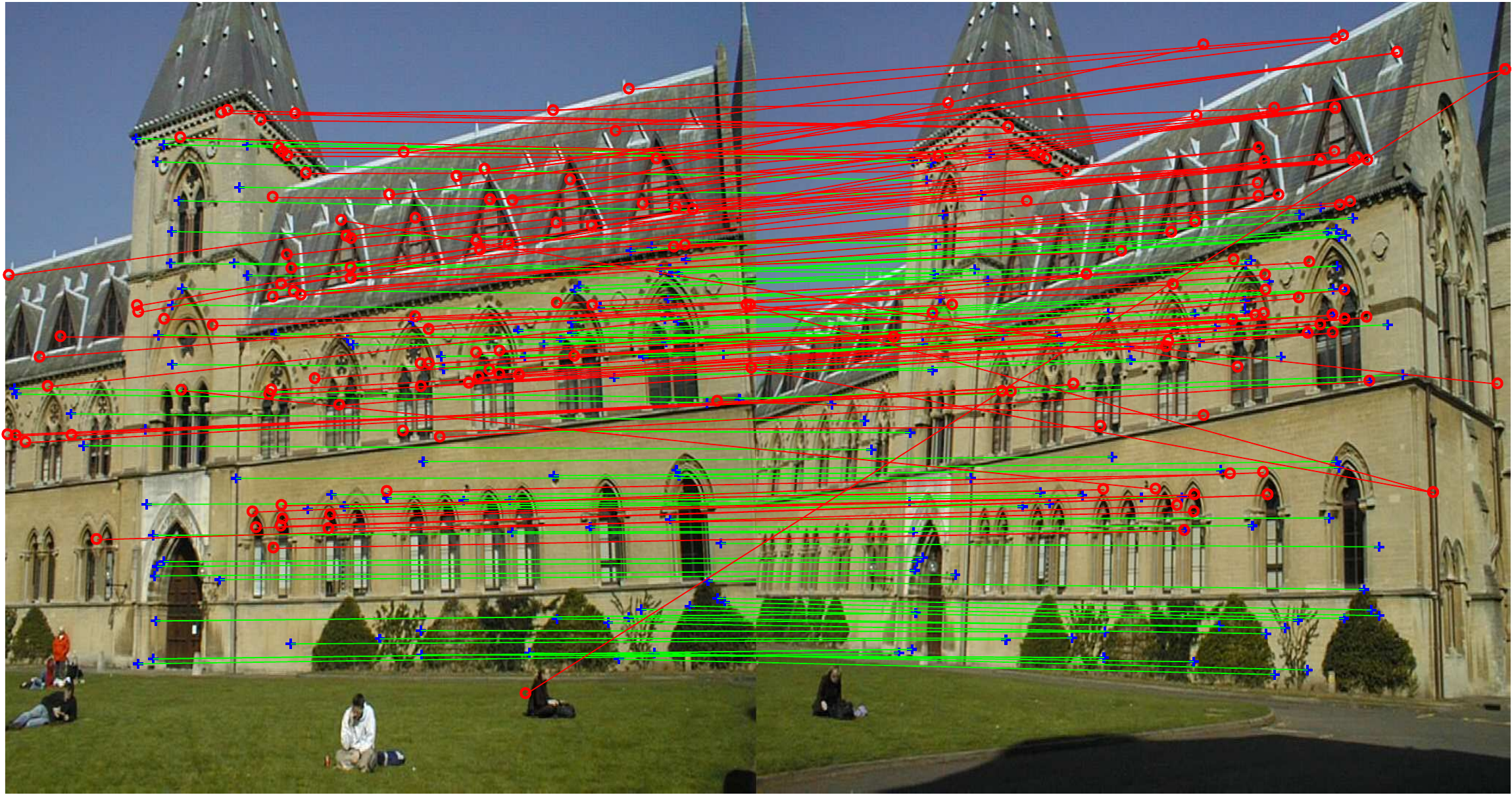}\label{subfig:homo_unilib}}
	\subfigure[Trees.]{\includegraphics[width=0.3\textwidth]{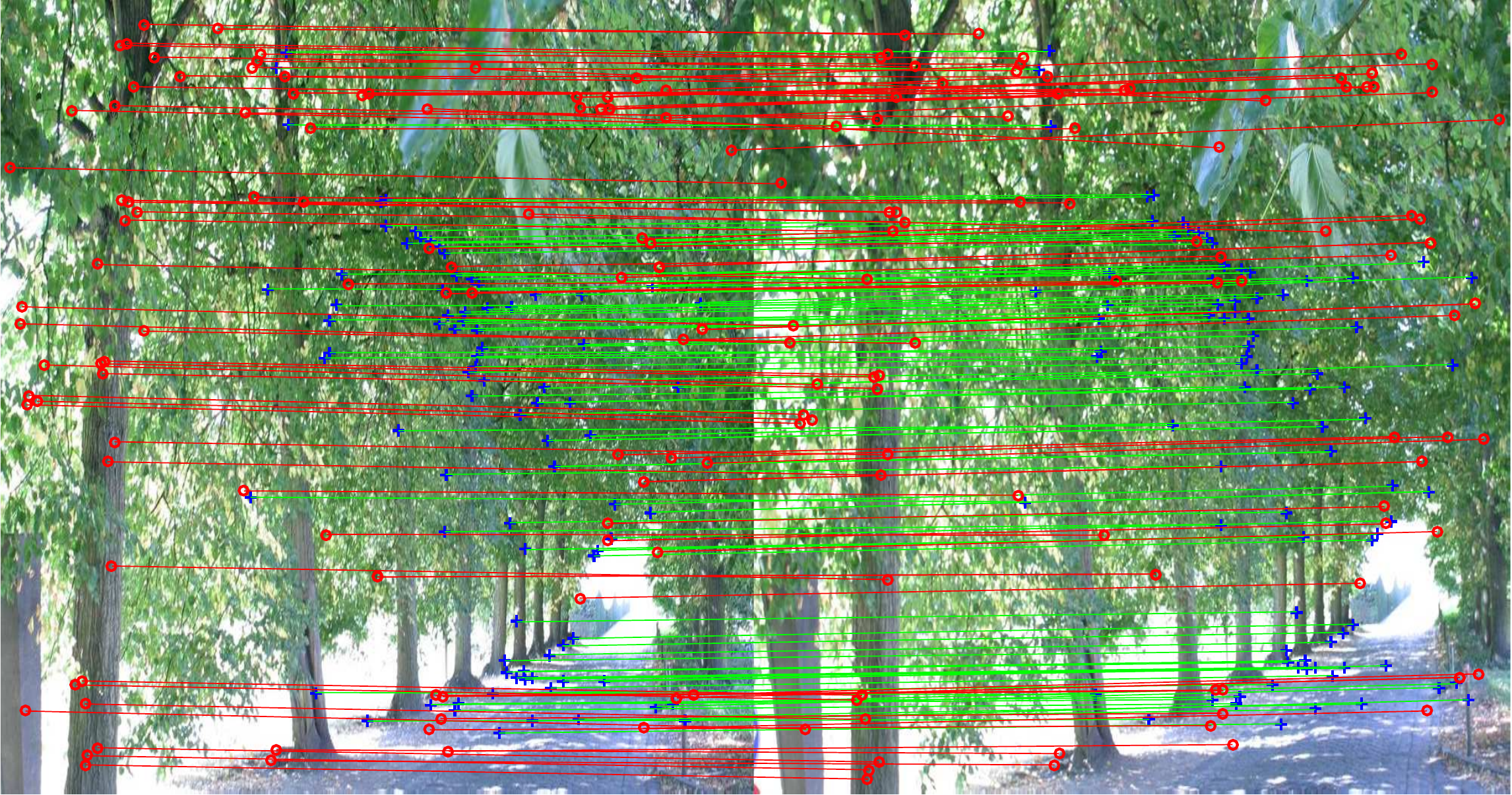}\label{subfig:affine_tree}}
	\subfigure[Boat.]{\includegraphics[width=0.3\textwidth]{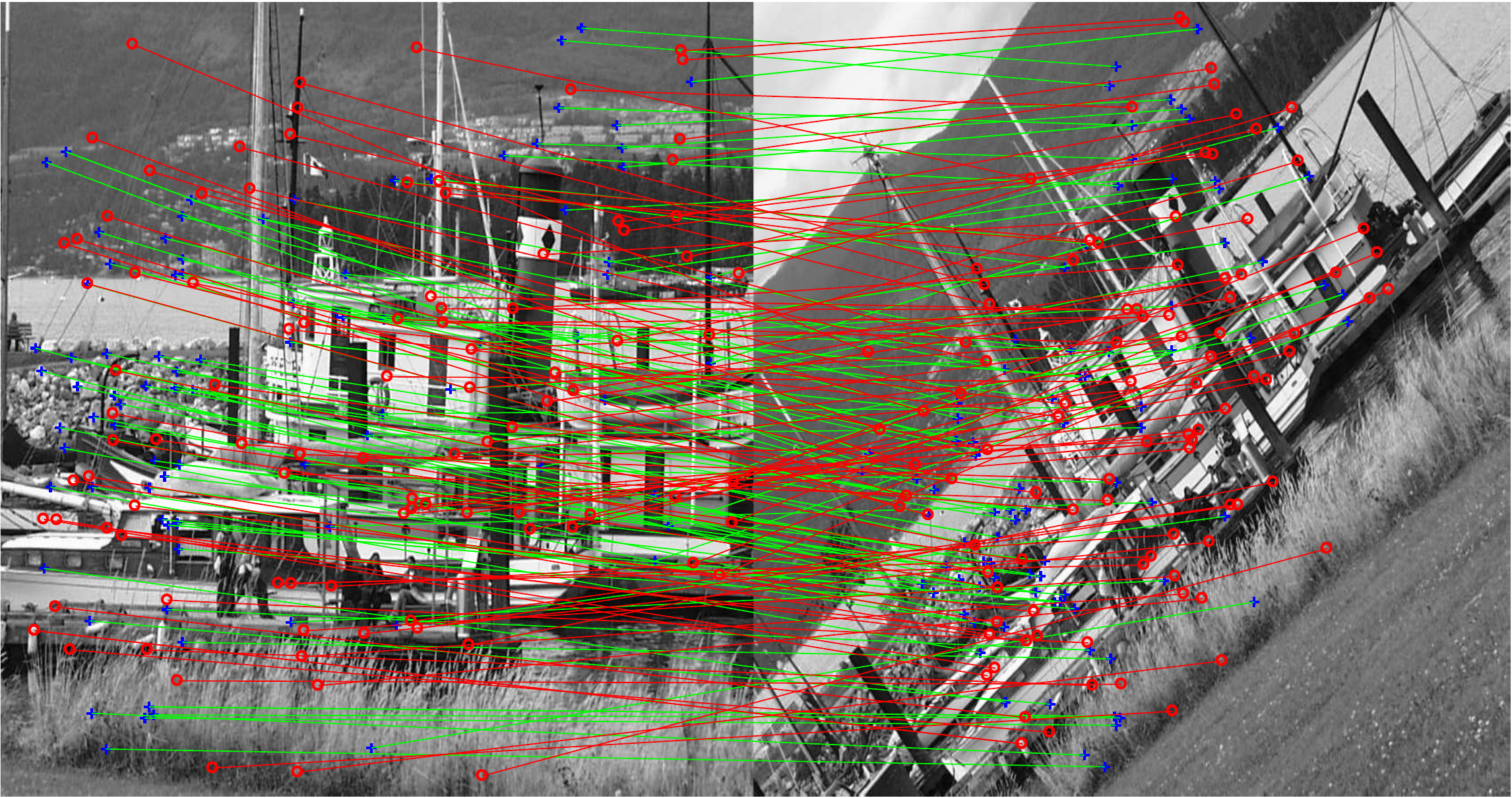}\label{subfig:affine_boat}}	
	\subfigure[Bark.]{\includegraphics[width=0.3\textwidth]{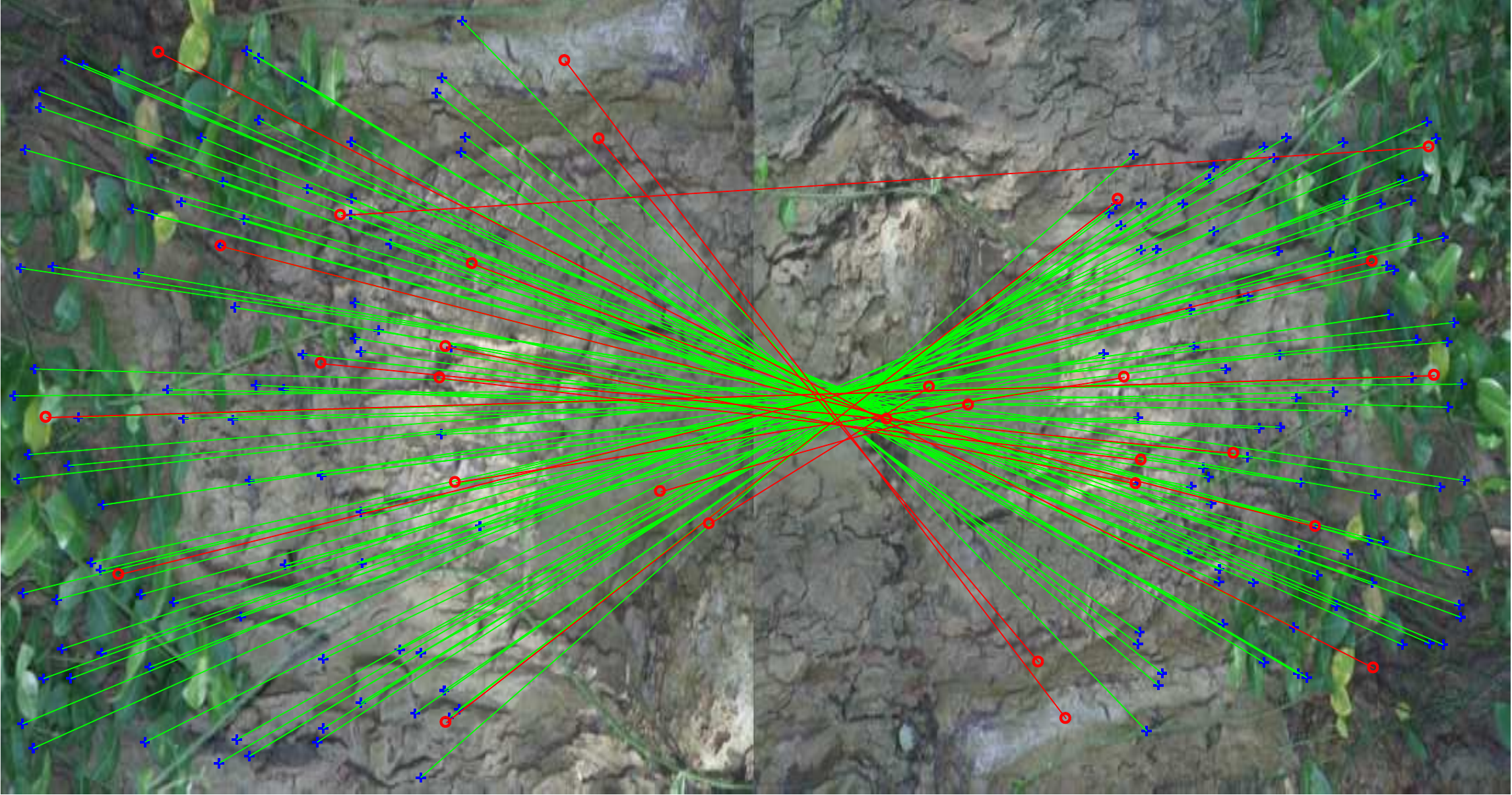}\label{subfig:affine_bark}}
	\caption{Qualitative results of local refinement methods on (a,b,c)  fundamental matrix estimation, (d,e,f) linearized homography estimation (g,h,i) homography estimation with geometric distance, and (j,k,l) affinity estimation. Green and red lines represent detected inliers and outliers. For clarity, only 100 inliers/outliers are plotted.}
	\label{fig:image_result}
\end{figure*}
Following~\cite[Chapter 4]{hartley2003multiple}, the homography constraints were linearized to investigate the performance of our algorithms. Five image pairs form the VGG dataset: University Library, Christ Church, Valbonne, Kapel and Paris's Invalides; three image pairs from the AdelaideRMF dataset~\cite{wong2011dynamic}: Union House, Old Classic Wing, Ball Hall and three pairs from the Zurich Building dataset: Building 64, Building 10 and Building 15 were used for this experiment. Parameters for the~\ep~and ~\admm~ variants were reused from the fundamental matrix experiment. Quantitative results displayed in Table~\ref{table:linear_homography_result} show that all the ~\ep~and ~\admm~ variants were able to achieve the highest consensus size.

\subsection{Models with geometric distances}\label{sec:exp-geo}

\subsubsection{Homography estimation}
We estimated 2D homographies based on the transfer error using all the methods. In the context of~\eqref{quasi-convex}, the geometric residual for homography (with $p=1$) is
\begin{equation}
\label{eq_homography}
\frac{\|(\btheta_{1:2} - \bv_i\btheta_3)\tilde{\bu_i}\|_1}{\btheta_3\tilde{\bu_i}},
\end{equation}
where $\btheta_{1:2}$ and $\btheta_3$ denote the first-two rows and the last row of the homography matrix, respectively. Each pair $(\bu_i,\bv_i)$ represents a point match across two views, and $\tilde{\bu_i} = [\bu^T 1]^T$. The data used in the linearized homography experiment was reused. The inlier threshold of $\epsilon = 4$ pixels was used for all input data. Initial $\alpha$  was set to $10$ and $\kappa=1.5$ for all \ep~variants. For \admm~variants, initial $\rho$ was set to $0.1$ and the increment rate $\sigma$ was set to $1.5$ for all the runs.

Quantitative results are shown in Table~\ref{table:homography_results}, and a sample qualitative result for \ep~is shown in Fig.~\ref{fig:image_result}. Similar to the fundamental matrix case, the \ep~variants outperformed the other methods in terms of solution quality, but were slower though its runtime was still within the same order of magnitude. \admm~variants also attain approximately the same solution as ~\ep~with slightly longer runtimes. Note that \ep-LSQ and \admm-LSQ were not invoked here, since finding least squares estimates based on geometric distances is intractable in general~\cite{hartley-accv07}.

\subsubsection{Affinity estimation}
The previous experiment was repeated for affinity (6 DoF affine transformation) estimation, where the geometric matching error for the $i$-th correspondence can be written as:
\begin{equation}
\label{eq_affine}
\|\bu_i - \btheta\tilde{\bv}_i\|_1,
\end{equation} 
where each pair $(\bu_i,\bv_i)$ is a correspondence across two views, $\btheta \in \bbR^{2\times 3}$ represents the affine transformation, and $\tilde{\bv}_i = [\bv^T \;1]^T$. Initial $\alpha$ was set to $0.5$, $\kappa=5$ for~\ep~ variants and initial $\rho=0.5$ and $\sigma = 2.5$ for~\admm~variants. The inlier threshold was set to $\epsilon=2$ pixels. Five image pairs from VGG's affine image dataset: Bikes, Graff, Bark, Tree, Boat and five pairs of building from the Zurich Building Dataset: Building 143, Building 152, Building 163, Building 170 and Building 174 were selected for the experiment. Quantitative results are given in Table~\ref{table:affine}, and sample qualitative result is shown in Fig.~\ref{fig:image_result}. Similar conclusions can be drawn.

\subsubsection{Triangulation}
\begin{table}[ht]
  \centering
  \caption{Total inliers and runtime of triangulation for 11595 selected points with more than 10 views}  
  \begin{tabular}{|c|c|c|}
  	\hline
  	Methods & Total inliers & Time (minutes) \\ \hline
  	RS      &   91888      &    12.10  \\ \hline
  	LORS    &    94387     &   23.09   \\ \hline
  	LORS1   &    91555      & 20.84     \\ \hline
  	$\ell_1$ &   40669       & 11.16     \\ \hline
  	$\ell_{\infty}$  &   43869       &   45.18   \\ \hline
  	EP-RS &   \textbf{99232}      & 49.52     \\ \hline
  	EP-$\ell_{\infty}$ &   59996       & 71.86\\ \hline
  	AM-RS &   97453       &  86.14    \\ \hline
  	AM-$\ell_\infty$ &      49760    &   125.74   \\ \hline
  \end{tabular}
   \label{table:triangulation_results_all}
\end{table}

\begin{figure}[ht]\centering
	\includegraphics[width=0.45\columnwidth]{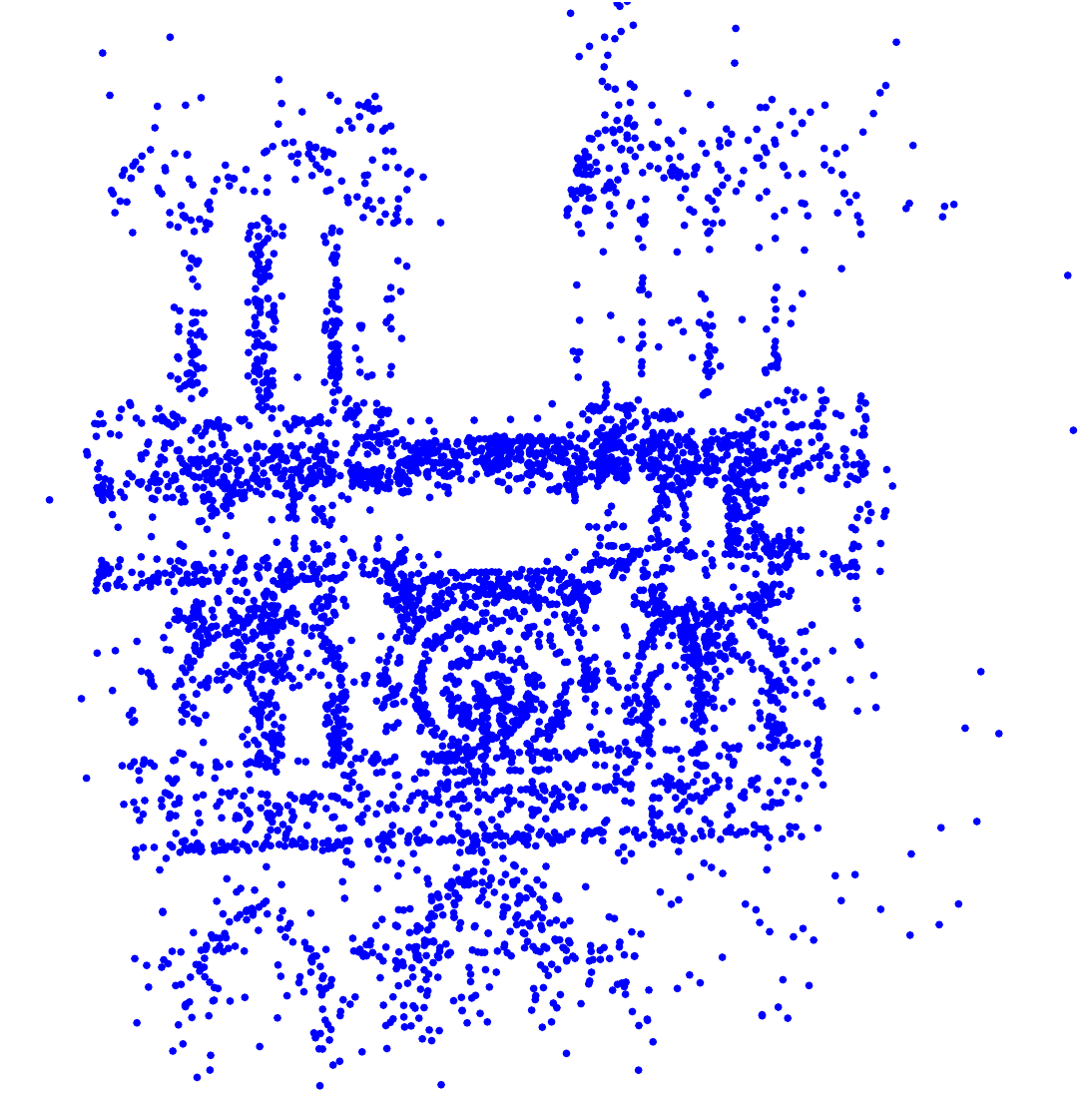}
	\caption{Qualitative results of EP-RS on triangulation.}
	\label{fig:notredame_triangulated}
\end{figure}
We conducted triangulation from outlier-contaminated multiple-view observations of 3D points. For each image point $\bx_i$ and the camera matrix $\bP_i \in \bbR^{3\times 4}$, the following re-projection error with respect to the point estimation $\btheta$ was used in our experiments:
\begin{equation}
\label{eq_triangulation}
\frac{\|(\bP^i_{1:2} - \bx_i\bP^i_3)\btheta^\prime\|_1}{\bP^i_3\tilde{\btheta}},
\end{equation}
where $\tilde{\btheta} = [\btheta^T\; 1]^T$, $\bP^i_{1:2}$ denotes the first two rows of the camera matrix and $\bP^i_3$ represents its third row. We selected five feature tracks from the NotreDame dataset~\cite{snavely2006photo} with more than $N = 150$ views each to test our algorithm. The inlier threshold for maximum consensus was set to $\epsilon = 1$ pixel.  $\alpha$ was initially set to 0.5 and $\kappa = 1.5$ for all variants of \ep. For the~\admm~variants, initial $\rho$ was set to $0.1$ and $\sigma = 2.5$. Table~\ref{table:triangulation_results} shows the quantitative results. Again, the variants of local refinement algorithms are better than the other methods in terms of solution quality. The runtime gap was not as significant here due to the low-dimensionality of the model ($d = 3$).
We repeated the experiments for all 11595 feature tracks in the dataset with more than 10 views. All the methods were executed with $\epsilon=1$ pixel and the same set of parameters. Table~\ref{table:triangulation_results_all} lists the total number of inliers and runtime for all the methods over all tested points. With RANSAC initialization, ~\ep-RS~ was able to achieve the highest total number of inliers followed by~\admm-RS. The triangulated result is shown in Figure~\ref{fig:notredame_triangulated}.
\section{Conclusions}
We introduced two novel deterministic approximate algorithms for maximum consensus, based on non-smooth penalized method and ADMM. In terms of solution quality, our algorithms outperform other heuristic and approximate methods---this was demonstrated particularly by our methods being able to improve upon the solution of RANSAC. Even when presented with bad initializations (i.e., when using least squares to initialize on unbalanced data), our methods was able to recover and attain good solutions. Though our methods can be slower, their runtimes are still well within practical range (seconds to tens of seconds). In fact, at high outlier rates, our methods is actually faster than the RANSAC variants, while yielding higher-quality results. 
Overall, the experiments illustrate that the proposed method can serve well in settings where slight additional runtime is a worthwhile expense for guaranteed convergence to an improved maximum consensus solution. 
\ifCLASSOPTIONcompsoc
  \section*{Acknowledgments}
\else
  \section*{Acknowledgment}
\fi
Chin and Suter were supported by DP160103490. Eriksson was supported by FT170100072. The authors are also grateful to CE140100016 which funded this collaborative work.
\bibliographystyle{IEEEtran}
\bibliography{localmaxcon}

\vfill

\end{document}